\documentclass[twoside,11pt]{article}

\usepackage[preprint]{jmlr2e}
\usepackage{amsmath}
\usepackage{amssymb}
\usepackage{mathtools}
\usepackage{booktabs}
\usepackage{graphicx}
\usepackage{enumitem}
\usepackage{algorithm}
\usepackage{algpseudocode}
\usepackage{lastpage}
\usepackage{xcolor}
\usepackage{tikz-cd}

\makeatletter

\providecommand*{\toclevel@algorithm}{0}
\makeatother

\jmlrheading{0}{2026}{1-\pageref{LastPage}}{Draft}{Draft}{draft}{Ao Xu}
\ShortHeadings{Doeblin-Anchored Contrastive Learning}{Xu}
\firstpageno{1}

\newcommand{\X}{\mathsf X}
\newcommand{\BorelX}{\mathcal X}
\newcommand{\A}{\mathcal A}
\newcommand{\K}{\mathcal K}
\newcommand{\G}{\mathcal G}
\newcommand{\Prob}{\mathbb P}
\newcommand{\E}{\mathbb E}
\newcommand{\TV}{\mathrm{TV}}
\newcommand{\esssup}{\operatorname*{ess\,sup}}

\newcommand{\eps}{\varepsilon}

\newcommand{\norm}[1]{\left\lVert #1\right\rVert}
\newcommand{\abs}[1]{\left|#1\right|}
\newcommand{\bracks}[1]{\left[#1\right]}
\newcommand{\paren}[1]{\left(#1\right)}

\newcommand{\dd}{\,\mathrm d}
\newcommand{\KL}{\mathrm{KL}}
\newcommand{\Ber}{\mathrm{Ber}}

\newcommand{\Excess}{\mathcal E}
\newcommand{\Nint}{N_{\infty}^{\mathrm{int}}}
\newtheorem{assumption}[theorem]{Assumption}

\begin{document}

\title{A Doeblin-Anchored Contrastive Chart for Learning Markov Transition Kernels}

\author{\name Ao Xu\thanks{Corresponding author.} \email xuao24@mails.jlu.edu.cn \\
       \addr School of Artificial Intelligence\\
       Jilin University\\
       No. 2699, Qianjin Street, Chaoyang District\\
       Changchun 130012, China\\
       and \\
       \addr Zhongguancun Academy\\
       Daniufang 2nd Ring Road, Haidian District\\
       Beijing 100094, China}

\editor{Action Editor}

\maketitle

\begin{abstract}
Learning a Markov transition model is not merely conditional density estimation: the
learned object must be a valid transition kernel before it is iterated in downstream
dynamics.  This paper introduces a Doeblin-anchored contrastive chart, a
statistical-to-dynamical coordinate framework for learning transition kernels from
contrastive objectives.  Given a restart law and an anchor strength, the chart mixes the
target transition with the restart law.  The resulting anchored kernel is simultaneously
a Doeblin-minorized Markov kernel, the positive conditional law in a binary contrastive
experiment, and an explicitly invertible coordinate for the original transition law.  We
prove that the anchored contrastive risk identifies the anchored transition density
and calibrates excess risk to density error.  Since inversion of a learned score may
produce a signed or unnormalized object, we introduce a measurable Markovization
operator that restores kernel validity while preserving integrated $L^1$ accuracy up to
a constant factor.  Oracle inequalities and H\"older--ReLU approximation bounds yield
nonparametric rates for independent transition pairs.  For stationary geometrically
$\beta$-mixing trajectories, a conservative thinning-and-coupling extension yields the
same reconstruction interface with an effective sample size.  Occupancy-weighted
perturbation bounds transfer one-step kernel error to finite-horizon marginal,
path-law, and occupation-measure errors under explicit coverage.
\end{abstract}

\begin{keywords}
contrastive learning, Markov kernel, Doeblin minorization, noise-contrastive estimation,
nonparametric transition density estimation
\end{keywords}

\section{Introduction}
\label{sec:introduction}

Markov transition models are used not only for one-step prediction, but also for
simulation, multistep forecasting, occupation measures, and stationary summaries.  This
makes transition learning different from ordinary conditional density estimation.  A
statistical procedure may fit a conditional density under a convenient loss, while leaving
unclear whether the learned object is a valid transition kernel or which transition-kernel
metric is controlled.  Conversely, classical perturbation theory for Markov chains usually
starts from two valid kernels and studies how a given kernel error propagates to marginals,
path laws, or invariant distributions \citep{meyn2009markov,mitrophanov2005sensitivity}.
The problem considered here lies at the interface of these two viewpoints: how can a
transition law be learned through a contrastive statistical objective while still producing a
valid Markov kernel whose error has dynamical meaning?

Contrastive density estimation provides a natural statistical route.  Noise-contrastive
estimation reduces density learning to binary classification against a known reference law
\citep{gutmann2012nce}, and conditional variants extend this idea to conditional and
unnormalized models \citep{ceylan2018conditional}.  More broadly, density-ratio estimation
views such objectives as ways to learn ratios relative to a reference distribution
\citep{sugiyama2012density}.  Recent anchored contrastive constructions for ordinary
density estimation show that mixing a target density with a reference density can create an
interior object with a calibration inequality and an explicit de-anchoring map
\citep{li2026dannce}.  For Markov transition densities, however, the conditioning variable
and the downstream use of the learned object introduce additional structure.  It is not
enough to contrast each conditional density separately.  One must specify which conditional
law is learned, how it reconstructs the original transition kernel, whether the reconstruction
is itself a valid kernel, and how the resulting error propagates through the Markov dynamics.

This paper introduces a Doeblin-anchored contrastive chart for transition-kernel
learning.  Given a restart law $\nu$ and an anchor strength $\eps$, the chart maps a
transition kernel $K$ to the anchored kernel
\[
        A_{\eps,\nu}K=(1-\eps)K+\eps\nu .
\]
The point of this construction is not merely to regularize the transition density.  The
anchored object has three simultaneous interpretations.  First, it is a genuine Markov
kernel satisfying a Doeblin minorization, a classical condition in Markov chain theory
\citep{meyn2009markov,nummelin1978splitting}.  Second, it is the positive conditional
law in a binary contrastive experiment against the restart reference.  Third, it is an
explicitly invertible coordinate for the original transition law, with inverse
$(A-\eps\nu)/(1-\eps)$.  Thus the restart anchor provides a common coordinate system in
which contrastive estimation, kernel reconstruction, and Markov perturbation analysis act
on the same object.

This viewpoint turns contrastive transition learning into a validity-preserving
reconstruction problem.  A learned contrastive score estimates the anchored density, not
directly the original transition density.  Applying the inverse chart then produces a
de-anchored transition score, but this score need not be nonnegative or row-normalized.
This difficulty is largely absent from standard conditional density estimation, where the
target is usually a one-step density or score
\citep{hall1999methods,hyndman2002nonparametric,chen2007large}, but it is essential for
Markov dynamics, where the learned object must be iterated.  We therefore introduce a
deterministic Markovization map that clips and renormalizes each row.  The map restores
transition-kernel validity while preserving integrated $L^1$ accuracy up to a constant
factor.  The resulting estimator is therefore not only a contrastive score: it is a valid
Markov kernel to which finite-horizon perturbation bounds can be applied.

The main idea is summarized by the following diagram.
\begin{center}
\begin{tikzcd}[
  column sep=huge,
  row sep=large,
  cells={nodes={align=center}}
]
{$K_0$}
  \arrow[r,"{\;\mathrm{anchor}\;}"]
&
{$a_0=(1-\eps)k_0+\eps r$}
  \arrow[r,"{\;\mathrm{contrast}\;}"]
&
{contrastive risk $\mathcal R$}
  \arrow[d,"{\;\mathrm{ERM}\;}"]
\\
{\shortstack{finite-horizon\\[-0.2em]dynamical error}}
&
{$\widehat K_n$}
  \arrow[l,"{\;\mathrm{perturbation}\;}"']
&
{$\widehat a_n$}
  \arrow[l,"{\;\mathrm{reconstruct}\;}"']
\end{tikzcd}
\end{center}
The diagram should be read as a coordinate framework rather than as a single estimator.
The upper row defines the population chart, while the lower row describes the
reconstruction path followed by the estimator.  The analysis proves quantitative
stability along the arrows: contrastive excess risk controls anchored-density error,
de-anchoring and Markovization convert this into valid-kernel error, and
occupancy-weighted perturbation transfers valid-kernel error to finite-horizon
dynamical error.

Beyond the basic reconstruction theorem, the framework admits standard statistical and
dynamical extensions without changing the interface.  H\"older--ReLU approximation and
entropy bounds yield nonparametric transition-density rates, while a conservative
thinning-and-coupling argument extends the oracle-to-reconstruction interface to
stationary geometrically $\beta$-mixing trajectories through an effective sample size.  A
rare-state example clarifies why finite-horizon coverage assumptions cannot be dropped.

The contributions of the paper can be summarized as follows.

\begin{itemize}[leftmargin=2em]
\item \textbf{A contrastive Doeblin chart for transition kernels.}
We introduce an exact coordinate dictionary between original transition kernels,
Doeblin-minorized anchored kernels, contrastive posterior coordinates, and de-anchored
reconstruction coordinates.  The same restart law therefore serves simultaneously as a
Markov-chain minorization device, a contrastive reference law, and a reconstruction
coordinate.

\item \textbf{Risk geometry and statistical learning in the chart.}
We prove that the anchored contrastive risk has the anchored transition density as its
unique population minimizer, admits an integrated Bernoulli--KL representation, and
calibrates excess contrastive risk to anchored and de-anchored density error.  Standard
oracle inequalities and H\"older--ReLU approximation bounds can then be inserted into
this geometry to obtain nonparametric transition-density rates.  A conservative
thinning-and-coupling argument also extends the same oracle-to-reconstruction interface
to a thinned ERM based on stationary geometrically $\beta$-mixing trajectories,
replacing the nominal sample size by an effective sample size.

\item \textbf{Validity-preserving reconstruction.}
We show that applying the inverse chart to a learned score may leave the cone of Markov
kernels, producing a signed or unnormalized object.  To close this gap, we introduce a
measurable Markovization operator that restores nonnegativity and row normalization while
preserving integrated $L^1$ accuracy up to a constant factor.

\item \textbf{Dynamical transfer and diagnostic validation.}
We connect the reconstructed kernel to downstream Markov dynamics by combining the
one-step reconstruction error with occupancy-weighted perturbation bounds.  This yields
finite-horizon marginal, path-law, and occupation-measure guarantees under explicit
coverage.  The experiments are designed as diagnostics for the full
statistical-to-dynamical interface, including calibration, validity repair,
anchor-strength tradeoffs, trajectory stress tests, and coverage-failure examples.
\end{itemize}

\section{Related Work}
\label{sec:related-work}

This section positions the paper relative to six lines of work: contrastive density
estimation, conditional and transition density estimation, Doeblin minorization and
Markov perturbation theory, learning stochastic dynamics, oracle inequalities, and
neural sieve rates.

\paragraph{Noise-contrastive and conditional contrastive density estimation.}
Noise-contrastive estimation reduces density estimation to binary classification against
a known reference law \citep{gutmann2012nce}, with conditional variants allowing the
noise distribution to depend on the covariate \citep{ceylan2018conditional}.  Density-ratio
estimation gives a broader perspective on such objectives \citep{sugiyama2012density}.
Recent data-augmented contrastive methods anchor an ordinary density to a reference
density before applying a contrastive risk, yielding calibration and an explicit
de-anchoring map \citep{li2026dannce}.  The present paper uses the reference law in a
transition setting where it is simultaneously the contrastive reference and the restart
law in the Doeblin anchor $A_{\eps,\nu}K=(1-\eps)K+\eps\nu$.

\paragraph{Transition density and conditional density estimation.}
Nonparametric conditional density estimation includes kernel and local methods
\citep{hall1999methods,hyndman2002nonparametric,hall2004nonparametric}, single-index
and other structural approaches \citep{hall2005conditional}, and general sieve
estimators \citep{chen1998sieve,chen2007large}.  Transition-density estimation for
Markov processes has also been studied in econometrics and time series, including
diffusion likelihood expansions \citep{aitsahalia2002maximum} and spectral methods for
stationary Markov processes \citep{hansen1998spectral}.  Minimax theory for conditional
density estimation identifies the role of smoothness in both variables
\citep{li2022minimax}, while neural conditional density models include mixture density
networks, conditional flows, and basis-expansion estimators
\citep{bishop1994mixture,trippe2018conditional,izbicki2017flexcode,gao2022lincde}.
These works primarily target one-step density accuracy; the additional issue here is
that the learned object must be reconstructed as a valid transition kernel before it is
iterated.

\paragraph{Doeblin minorization and Markov perturbation theory.}
Doeblin minorization is a classical device in Markov chain theory
\citep{meyn2009markov}.  Restart and regeneration constructions, including Nummelin
splitting, use related affine mixtures to analyze Harris recurrent chains
\citep{nummelin1978splitting}.  The Dobrushin coefficient describes contraction and
forgetting of initial conditions \citep{dobrushin1956central}, while sensitivity of
invariant laws under kernel perturbations has been developed for uniformly ergodic and
operator-theoretic regimes \citep{mitrophanov2005sensitivity,kartashov1996strong}.  In
this paper the same affine restart map is used as a statistical coordinate: the anchored
kernel is the positive class in the contrastive experiment, and de-anchoring is the
explicit inverse coordinate map.

\paragraph{Learning stochastic dynamics.}
Model-based reinforcement learning and probabilistic forecasting learn stochastic
transition models for rollout, planning, or simulation \citep{ha2018world,hafner2020dream,
chua2018deep}.  Rollout degradation under learned models is well known empirically
\citep{talvitie2014model}, and one-step-to-multistep error bounds appear in model-based
policy optimization \citep{janner2019when}.  The present work is not a new dynamics
algorithm or benchmark; it studies how a contrastive statistical objective can produce a
valid kernel to which finite-horizon perturbation bounds apply under explicit coverage.

\paragraph{Oracle inequalities and fast rates.}
Empirical risk minimization with localized complexity measures, Rademacher bounds,
Bernstein conditions, and validation-based selection is a standard route to oracle
inequalities and fast rates \citep{bartlett2002rademacher,bartlett2005local,
tsybakov2004optimal,koltchinskii2006local,lecue2012rates}.  For dependent observations,
blocking and coupling under absolute regularity provide a classical route from
trajectory samples to effective independent sample sizes; see \citet{doukhan1994mixing}.  These tools supply the statistical front end of the
paper once the anchored contrastive risk has been placed in a bounded calibrated chart.

\paragraph{Neural sieve rates and minimax lower bounds.}
Deep ReLU networks achieve near-minimax rates for H\"older function classes under
standard approximation and compositional assumptions \citep{yarotsky2017relu,
schmidthieber2020deep}.  Covering-number bounds for neural network classes connect these
approximations to estimation error \citep{anthony1999neural,bartlett2019nearly}, while
classical minimax theory supplies matching lower-bound constructions
\citep{stone1982rates}.  The H\"older--ReLU part of the paper instantiates these ideas
for bounded transition densities and then transports the resulting rate through
de-anchoring and Markovization.

\paragraph{Summary of positioning.}
The individual components above are classical or already well developed in isolation:
contrastive estimation, Doeblin minorization, empirical-process oracle inequalities,
Markov perturbation bounds, and neural sieve rates.  The contribution here is the
interface that makes them act on the same transition-learning object.  The restart anchor
creates a chart between original kernels, Doeblin-minorized kernels, and contrastive
posterior coordinates; risk calibration controls the anchored density; de-anchoring and
Markovization produce a valid kernel for the original transition law; and
occupancy-weighted perturbation transfers one-step kernel error to finite-horizon
dynamical error under explicit coverage.  The trajectory extension and rare-state
example clarify the sampling and coverage boundaries of this interface.

\section{Setup}
\label{sec:setup}

\subsection{Dominated transition kernels}

Let $(\X,\BorelX)$ be a standard Borel measurable space.  Throughout the statistical
results we additionally assume that $\lambda$ is a probability measure on
$(\X,\BorelX)$ and that all transition kernels considered below are dominated by
$\lambda$.  A Markov kernel $K$ from $\X$ to $\X$ is a map
$K:\X\times\BorelX\to[0,1]$ such that $K(x,\cdot)$ is a probability measure for every
$x$ and $x\mapsto K(x,B)$ is measurable for every $B\in\BorelX$.  We write
\[
        K(x,\dd y)=k(y\mid x)\lambda(\dd y)
\]
when $k:\X\times\X\to[0,\infty)$ is jointly measurable and
$\int k(y\mid x)\lambda(\dd y)=1$ for every $x$.  Once a present-state design law $\mu$
is fixed below, equalities between densities are understood
$\mu\otimes\lambda$-almost everywhere unless a pointwise representative is explicitly
specified.

The present-state design law is a probability measure $\mu$ on $(\X,\BorelX)$.  The
target transition kernel is denoted by $K_0$, with density $k_0$.  The independent-pair
observation model used for the first oracle bounds in
Sections~\ref{sec:estimation}--\ref{sec:relu} is
\[
        X_i\sim\mu,\qquad Y_i\mid X_i\sim K_0(X_i,\cdot),
        \qquad i=1,\ldots,n,
\]
with the pairs $(X_i,Y_i)$ independent.  Section~\ref{sec:dynamics} gives the
dynamical transfer consequences.

\subsection{Reference restart law and metrics}

Let $\nu$ be a probability measure dominated by $\lambda$ with density
$r:\X\to[0,\infty)$:
\[
        \nu(\dd y)=r(y)\lambda(\dd y).
\]
The law $\nu$ does not depend on the current state.  This state-independent choice is
what makes the anchor below a restart kernel and yields a Doeblin minorization.  For a
measurable function $f$ on $\X\times\X$, define
\[
 \norm{f}_{2,\mu\lambda}^2
   :=\int f(x,y)^2\,\mu(\dd x)\lambda(\dd y),\qquad
 \norm{f}_{1,\mu\lambda}
   :=\int \abs{f(x,y)}\,\mu(\dd x)\lambda(\dd y).
\]
For a bounded measurable $f$ on $\X\times\X$, write
\[
        \norm{f}_{\infty}:=\sup_{(x,y)\in\X\times\X}\abs{f(x,y)}.
\]
For kernels $K$ and $L$, define the integrated and uniform total-variation semimetrics
\begin{align*}
 d_{\mu,\TV}(K,L)
   &:=\int \TV\paren{K(x,\cdot),L(x,\cdot)}\,\mu(\dd x),\\
 d_{\infty,\TV}(K,L)
   &:=\sup_{x\in\X}\TV\paren{K(x,\cdot),L(x,\cdot)}.
\end{align*}
If $K$ and $L$ have densities $k$ and $\ell$, then
\[
 d_{\mu,\TV}(K,L)=\frac12\norm{k-\ell}_{1,\mu\lambda}.
\]
Here $\TV(P,Q):=\sup_{B\in\BorelX}\abs{P(B)-Q(B)}$ for probability measures on
$(\X,\BorelX)$.  Because $\X$ is standard Borel, $\BorelX$ is countably generated and
the supremum defining total variation may be taken over a countable determining algebra.
The integrated metrics describe the design distribution used for learning.  The uniform
metric is stronger and will be required for general path-law perturbations in
Section~\ref{sec:dynamics}.
Measurability of the map $x\mapsto \TV(K(x,\cdot),L(x,\cdot))$ is standard on
standard Borel spaces; for completeness see Lemma~\ref{lem:tv-measurability} in
Appendix~\ref{app:tv}.

\subsection{Choice of restart law and anchor strength}
\label{sec:anchor-choice}

The restart law $\nu$ and anchor strength $\eps$ are design parameters of the chart.
The theory should not be read as saying that all choices are equivalent.  The restart
law plays three roles at once: it is the negative class in the contrastive experiment,
the lower envelope in the anchored density, and the fallback law used by the
Markovization map on degenerate rows.  A useful restart law should therefore be
simulable, should have support on the region where transition reconstruction is
desired, and should not put negligible mass on states that are important for the
downstream occupancies considered in Section~\ref{sec:dynamics}.

The anchor strength has the complementary tradeoff.  Increasing $\eps$ adds more
reference mass to the anchored density and improves the lower envelope
$a_0(y\mid x)\ge \eps r(y)$, which is favorable for contrastive curvature.  At the same
time, the inverse map divides by $1-\eps$, so reconstruction bounds carry factors such
as $(1-\eps)^{-1}$ or $(1-\eps)^{-2}$.  Thus $\eps$ should be treated as a genuine
statistical design parameter, not as a harmless technical constant.  In applications,
a conservative default is to use a full-support diffuse reference, or a mixture of an
empirical next-state marginal with such a diffuse reference, and to select $\eps$ using
validation contrastive risk together with the invalidity diagnostics before
Markovization.

\section{Contrastive Doeblin Charts}
\label{sec:charts}

\begin{definition}[Restart anchor]
\label{def:anchor}
For $\eps\in(0,1)$ and a restart law $\nu$, the restart anchor of a Markov kernel $K$ is
the Markov kernel
\[
   A_{\eps,\nu}K(x,B)
      :=(1-\eps)K(x,B)+\eps\nu(B),
      \qquad x\in\X,\ B\in\BorelX.
\]
If $K$ has density $k$ and $\nu$ has density $r$, then $A_{\eps,\nu}K$ has density
\[
        a_K(y\mid x):=(1-\eps)k(y\mid x)+\eps r(y).
\]
\end{definition}

\begin{theorem}[Contrastive Doeblin chart]
\label{thm:chart}
Let $\eps\in(0,1)$ and let $\nu$ be a probability measure on $(\X,\BorelX)$.
\begin{enumerate}[label=(\roman*),leftmargin=2em]
  \item The image of $A_{\eps,\nu}$ is exactly the class of Markov kernels $A$
  satisfying
  \[
        A(x,B)\ge \eps\nu(B),\qquad x\in\X,\ B\in\BorelX.
  \]
  For such an $A$, the inverse kernel is
  \[
        A_{\eps,\nu}^{-1}A(x,B)
        =\frac{A(x,B)-\eps\nu(B)}{1-\eps}.
  \]
  \item For all Markov kernels $K$ and $L$,
  \[
     d_{\mu,\TV}(A_{\eps,\nu}K,A_{\eps,\nu}L)
       =(1-\eps)d_{\mu,\TV}(K,L)
  \]
  for every design law $\mu$, and
  \[
     d_{\infty,\TV}(A_{\eps,\nu}K,A_{\eps,\nu}L)
       =(1-\eps)d_{\infty,\TV}(K,L).
  \]
  \item Suppose now that $\nu(\dd y)=r(y)\lambda(\dd y)$ with $r(y)>0$ for
  $\lambda$-almost every $y$, and that $A$ has a chosen jointly measurable density
  representative $a(y\mid x)$.  Fix a contrastive ratio $\tau>0$ and define the
  posterior coordinate
  \[
        \eta_{A,a}(x,y)
        :=
        \begin{cases}
          \dfrac{a(y\mid x)}{a(y\mid x)+\tau r(y)}, & r(y)>0,\\
          0, & r(y)=0
        \end{cases}
  \]
  on $\X\times\X$.  Then $a$ is recovered from $\eta_{A,a}$ on the set where $r(y)>0$ by
  \[
        a(y\mid x)=\tau r(y)\frac{\eta_{A,a}(x,y)}{1-\eta_{A,a}(x,y)}.
  \]
  If $A=A_{\eps,\nu}K$, then the density of $K$ is recovered on the same set by
  \[
        k(y\mid x)
          =\frac{\tau r(y)\eta_{A,a}(x,y)/(1-\eta_{A,a}(x,y))-\eps r(y)}
                 {1-\eps}
  \]
  Consequently, these density equalities hold $\mu\otimes\lambda$-almost everywhere
  for every design law $\mu$.
\end{enumerate}
\end{theorem}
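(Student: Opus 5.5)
The plan is to verify the three parts directly from Definition~\ref{def:anchor}, treating everything as pointwise algebra on measures, with a small amount of measure-theoretic bookkeeping.

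For (i), I first check that $A_{\eps,\nu}K$ is a Markov kernel satisfying the minorization. Measurability in $x$ is inherited from $K$, and $A_{\eps,\nu}K(x,\cdot)$ is a convex combination of probability measures. The bound $A_{\eps,\nu}K(x,B)\ge\eps\nu(B)$ holds because $K\ge 0$. Conversely, given a Markov kernel $A$ with $A(x,B)\ge\eps\nu(B)$, I define
\[
K(x,B):=\frac{A(x,B)-\eps\nu(B)}{1-\eps}
\]
and verify that it is a Markov kernel. Nonnegativity comes from the minorization, and countable additivity is inherited because the expression is a difference of finite measures. The total mass is $(1-\eps)/(1-\eps)=1$, and measurability in $x$ is immediate. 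Since $A_{\eps,\nu}K=A$, we have surjectivity onto the minorized class. Injectivity of $A_{\eps,\nu}$ follows by solving the affine relation, which gives the stated inverse formula.

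For (ii), fix $x$. The signed measure $A_{\eps,\nu}K(x,\cdot)-A_{\eps,\nu}L(x,\cdot)$ equals $(1-\eps)\bigl(K(x,\cdot)-L(x,\cdot)\bigr)$, because the $\eps\nu$ terms cancel. Taking the supremum over $B$ of the absolute value gives
\[
\TV\bigl(A_{\eps,\nu}K(x,\cdot),A_{\eps,\nu}L(x,\cdot)\bigr)=(1-\eps)\,\TV\bigl(K(x,\cdot),L(x,\cdot)\bigr)
\]
exactly, for every $x$. Integrating against $\mu$ gives the integrated identity; measurability of the integrand is supplied by Lemma~\ref{lem:tv-measurability}. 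Taking the supremum over $x$ gives the uniform identity.

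For (iii), on $\{r>0\}$ we have $a\ge 0$ and $\tau r>0$, so $\eta_{A,a}\in[0,1)$. This means $1-\eta_{A,a}=\tau r/(a+\tau r)>0$, and the quotient gives $\eta_{A,a}/(1-\eta_{A,a})=a/(\tau r)$, which is the recovery formula.

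If $A=A_{\eps,\nu}K$, the main point, and really the only delicate step, is that $a$ is merely \emph{a} chosen density representative. The identity $a=(1-\eps)k+\eps r$ therefore holds only $\lambda$-a.e.\ for each $x$, not pointwise. I would argue as follows. For each $x$, both $a(\cdot\mid x)$ and $(1-\eps)k(\cdot\mid x)+\eps r$ are densities of the same measure $A(x,\cdot)$ with respect to $\lambda$, so they agree $\lambda$-a.e. The set where they differ is jointly measurable, and each of its $x$-sections is $\lambda$-null. By Tonelli, the set is therefore $\mu\otimes\lambda$-null for every probability $\mu$. Finally, $\{r=0\}$ is $\lambda$-null by hypothesis, so the recovery formulas hold $\mu\otimes\lambda$-a.e.

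On that full-measure set, solving $(1-\eps)k+\eps r=\tau r\,\eta/(1-\eta)$ for $k$ yields the stated de-anchoring formula. This gives the "consequently" clause. I expect this representative/null-set bookkeeping to be the only place requiring care; everything else is affine algebra.
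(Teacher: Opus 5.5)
Your proposal is correct and follows the paper's proof essentially step for step: in (i), the affine inverse, with nonnegativity coming from the minorization; in (ii), cancellation of the $\eps\nu$ terms together with homogeneity of total variation; and in (iii), solving $\eta=a/(a+\tau r)$ for $a$ and then de-anchoring. Your Tonelli argument, that the representative identity $a=(1-\eps)k+\eps r$ can fail only on a $\mu\otimes\lambda$-null set, tightens the paper's one-line justification of the almost-everywhere qualifier, as does your allowance for $a=0$ (the paper's proof tacitly assumes $a>0$).
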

\noindent The proof of Theorem~\ref{thm:chart} is given in Appendix~\ref{app:chart}.

\begin{remark}[Anchor tradeoff and selection]
\label{rem:epsilon-tradeoff}
The anchor is stabilizing in the forward coordinate and amplifying in the inverse
coordinate.  If $\widehat a$ is an anchored density score and
$\widetilde k_{\widehat a}=(\widehat a-\eps r)/(1-\eps)$, then
\[
        \norm{\widetilde k_{\widehat a}-k_0}_{p,\mu\lambda}
        =\frac{1}{1-\eps}\norm{\widehat a-a_0}_{p,\mu\lambda},
        \qquad p\in\{1,2\}.
\]
Larger $\eps$ supplies more restart mass to the positive contrastive density, but the
inverse map becomes unstable as $\eps\uparrow1$.  Reference support improves the
next-state contrastive coordinate; it does not by itself imply present-state occupancy
coverage.  In applications, $\nu$ and $\eps$ should therefore be treated as design
parameters, with $\eps$ selected by validation contrastive risk and pre-Markovization
invalidity diagnostics.
\end{remark}

Kernel identities in parts (i) and (ii) are pointwise in $x$ and in measurable sets $B$.
Density and posterior identities in part (iii) concern chosen jointly measurable density
representatives and the derived coordinate $\eta_{A,a}$; they are asserted almost
everywhere.

\section{Risk Geometry, Oracle Learning, and Kernel Reconstruction}
\label{sec:estimation}

This section advances the first two links in the main chain:
\[
 \text{contrastive risk}
 \;\Rightarrow\;
 \text{anchored density}
 \;\Rightarrow\;
 \text{valid Markov kernel}.
\]
We first define the augmented contrastive risk and estimator, then state the risk
geometry theorem, the oracle inequalities, and the Markovization result.

\subsection{Augmented transition pairs and empirical risk}

Write
\[
        a_0(y\mid x):=(1-\eps)k_0(y\mid x)+\eps r(y)
\]
for the anchored target density.  To estimate $a_0$ from samples of $K_0$, we simulate
restart variables
\[
        \widetilde Y_i\sim \nu,\qquad
        Z_{ij}\sim\nu,\quad j=1,\ldots,\tau,
\]
independently of the observed pairs and of each other.  In the empirical construction
below $\tau$ is a positive integer.  The variable $\widetilde Y_i$ supplies the anchored
positive reference mass, while the variables $Z_{ij}$ supply negative reference draws.
The current state is kept at $X_i$ in all terms.  This implementation has expectation
equal to a binary experiment whose positive conditional density is $a_0(\cdot\mid X_i)$
and whose negative conditional density is $r$.

In the statistical results the contrastive ratio $\tau$ is a fixed positive integer.
For any measurable candidate $a:\X\times\X\to(0,\infty)$ define
\[
  \eta_a(x,y):=\frac{a(y\mid x)}{a(y\mid x)+\tau r(y)},
\]
and the log-loss terms
\[
  \ell_a^+(x,y):=-\log \eta_a(x,y),\qquad
  \ell_a^-(x,y):=-\log\paren{1-\eta_a(x,y)}.
\]
The population risk is
\begin{align}
  \mathcal R(a)
   := {} &(1-\eps)\E\bracks{\ell_a^+(X,Y)}
          +\eps\E\bracks{\ell_a^+(X,\widetilde Y)}
          +\tau\E\bracks{\ell_a^-(X,Z)},
  \label{eq:population-risk}
\end{align}
where $X\sim\mu$, $Y\mid X\sim K_0(X,\cdot)$, and
$\widetilde Y,Z\sim\nu$ are independent of $X$.  Equivalently,
\begin{align}
  \mathcal R(a)
   =\int_{\X\times\X}
      \Bigg[
       -a_0(y\mid x)\log\frac{a(y\mid x)}{a(y\mid x)+\tau r(y)}
       -\tau r(y)\log\frac{\tau r(y)}{a(y\mid x)+\tau r(y)}
      \Bigg]
      \mu(\dd x)\lambda(\dd y).
  \label{eq:population-risk-integral}
\end{align}
The losses and risks are understood as extended-real quantities with
$-\log 0=+\infty$.  Under Assumptions~\ref{ass:bounds} and
\ref{ass:candidate-bounds} below, all risk values used in the theorems are finite.
Given observations and simulated references, the empirical risk is
\begin{align}
 \widehat{\mathcal R}_n(a)
   :=\frac1n\sum_{i=1}^n
      \Bigg[
        (1-\eps)\ell_a^+(X_i,Y_i)
        +\eps\ell_a^+(X_i,\widetilde Y_i)
        +\sum_{j=1}^{\tau}\ell_a^-(X_i,Z_{ij})
      \Bigg].
 \label{eq:empirical-risk}
\end{align}
By construction, $\E\widehat{\mathcal R}_n(a)=\mathcal R(a)$ whenever the expectation is
finite.

\subsection{Assumptions}
\begin{assumption}[Reference and target bounds]
\label{ass:bounds}
There are constants $0<r_- \le r_+<\infty$ and $M_0<\infty$ such that
\[
        r_-\le r(y)\le r_+
        \quad\text{for every }y\in\X,
        \qquad
        0\le k_0(y\mid x)\le M_0
\]
for $\mu\otimes\lambda$-almost every $(x,y)$.
\end{assumption}

\begin{assumption}[Candidate bounds]
\label{ass:candidate-bounds}
There are constants $0<\gamma\le\Gamma<\infty$ such that every
$a\in\A_n$ is jointly measurable and satisfies
\[
        \gamma\le a(y\mid x)\le \Gamma
\]
for every $(x,y)\in\X\times\X$.
\end{assumption}

Assumption~\ref{ass:bounds} implies
\[
   \eps r_-\le a_0(y\mid x)\le (1-\eps)M_0+\eps r_+
\]
almost everywhere.  The lower bound is supplied by the restart anchor even if the
original transition density vanishes.

\subsection{Estimator and reconstruction procedure}

Let $\A_n$ be a class of positive jointly measurable candidate functions.  For an
optimization tolerance $\delta_{\mathrm{opt}}\ge0$, an anchored empirical estimator is
any measurable $\widehat a_n\in\A_n$ satisfying
\[
        \widehat{\mathcal R}_n(\widehat a_n)
        \le
        \inf_{a\in\A_n}\widehat{\mathcal R}_n(a)
        +\delta_{\mathrm{opt}}.
\]
The direct de-anchored density score is
\[
        \widetilde k_n(y\mid x)
        :=\frac{\widehat a_n(y\mid x)-\eps r(y)}{1-\eps}.
\]
If $\widehat a_n$ lies in the image of the anchor map, then
$\widetilde k_n$ is already a transition density.  For general function classes it may
be signed or may fail to integrate to one.  Theorem~\ref{thm:markovization} below gives
a measurable Markovization operator that restores a valid kernel with a controlled
$L^1$ loss.

Algorithm~\ref{alg:anchored-contrastive} summarizes this estimator-to-kernel
conversion.  The display is meant to fix the order of the three operations used in the
theory: simulate the restart draws that define the empirical contrastive risk, fit an
anchored score by ERM, and only then apply the deterministic de-anchoring and
Markovization maps.  Thus the output of the statistical step is a score
$\widehat a_n$, while the output used for dynamics is the valid kernel
$\widehat K_n$.

\begin{algorithm}[t]
\caption{Anchored contrastive estimation and reconstruction}
\label{alg:anchored-contrastive}
\begin{algorithmic}[1]
\Require Transition pairs $(X_i,Y_i)_{i=1}^n$; simulable restart law $\nu$ with
density $r$; anchor strength $\eps$; number of negatives $\tau$; candidate class
$\A_n$; optimization tolerance $\delta_{\mathrm{opt}}$.
\Ensure Markov transition kernel estimator $\widehat K_n$.
\State \textbf{// Step 1: Sample anchor positives and reference negatives}
\For{$i=1$ \textbf{to} $n$}
  \State Draw $\widetilde Y_i\sim\nu$ independently of $(X_i,Y_i)$.
  \State Draw $Z_{i1},\ldots,Z_{i\tau}\stackrel{\mathrm{i.i.d.}}{\sim}\nu$
  independently of all observed and anchor-positive variables.
\EndFor
\State \textbf{// Step 2: Fit the anchored contrastive score}
\State Choose a measurable $\widehat a_n\in\A_n$ satisfying
\[
        \widehat{\mathcal R}_n(\widehat a_n)
        \le
        \inf_{a\in\A_n}\widehat{\mathcal R}_n(a)
        +\delta_{\mathrm{opt}}.
\]
\State \textbf{// Step 3: De-anchor and Markovize}
\State Set
\[
        \widetilde k_n(y\mid x)
        \leftarrow
        \frac{\widehat a_n(y\mid x)-\eps r(y)}{1-\eps}.
\]
\State Set $\widehat k_n\leftarrow\mathfrak M\widetilde k_n$.
\State Set $\widehat K_n(x,\dd y)\leftarrow \widehat k_n(y\mid x)\lambda(\dd y)$.
\State \Return $\widehat K_n$.
\end{algorithmic}
\end{algorithm}

\subsection{Risk Geometry of the Anchored Contrastive Chart}

For positive measurable $a$ for which the risks are finite, set
\[
        \Excess(a):=\mathcal R(a)-\mathcal R(a_0).
\]
Also define the integrated Hellinger-type discrepancy
\[
        H_{\mu\lambda}^2(a,a_0)
        :=\int_{\X\times\X}
          \paren{\sqrt{a(y\mid x)}-\sqrt{a_0(y\mid x)}}^2
          \mu(\dd x)\lambda(\dd y).
\]

\begin{theorem}[Risk Geometry of the Anchored Contrastive Chart]
\label{thm:calibration}
Suppose Assumptions~\ref{ass:bounds} and \ref{ass:candidate-bounds} hold.  Then:
\begin{enumerate}[label=(\roman*),leftmargin=2em]
  \item \textbf{Identification.}  The anchored target density $a_0$ uniquely minimizes
  $\mathcal R$ over all positive measurable $a$ for which the risk in
  \eqref{eq:population-risk-integral} is finite, up to $\mu\otimes\lambda$-almost-everywhere
  equality.
  \item \textbf{$L^2$ calibration.}  There exist constants $0<c_-\le c_+<\infty$,
  depending only on $(\eps,\tau,r_-,r_+,M_0,\gamma,\Gamma)$, such that for every
  $a\in\A_n$,
  \[
    c_-\norm{a-a_0}_{2,\mu\lambda}^2
      \le \Excess(a)
      \le c_+\norm{a-a_0}_{2,\mu\lambda}^2.
  \]
  Consequently,
  \[
     \norm{\widetilde k_a-k_0}_{2,\mu\lambda}^2
        \le \frac{1}{(1-\eps)^2c_-}\Excess(a),
  \]
  where $\widetilde k_a:=(a-\eps r)/(1-\eps)$.
  \item \textbf{Contrastive KL representation.}  Writing
  $\eta_0(x,y):=a_0(y\mid x)/\{a_0(y\mid x)+\tau r(y)\}$, the excess risk has the exact
  representation
  \begin{align}
   \Excess(a)
     =\int_{\X\times\X}
         \paren{a_0(y\mid x)+\tau r(y)}
         \KL\!\left(
            \Ber(\eta_0(x,y))
            \,\middle\|\,
            \Ber(\eta_a(x,y))
         \right)
         \mu(\dd x)\lambda(\dd y).
   \label{eq:contrastive-kl}
  \end{align}
  \item \textbf{$L^1$ and Hellinger consequences.}
  \begin{align}
   \norm{a-a_0}_{1,\mu\lambda}^2
        &\le c_-^{-1}\Excess(a), \label{eq:l1-calibration}\\
   H_{\mu\lambda}^2(a,a_0)
        &\le \frac{1}{4m_a c_-}\Excess(a),
        \qquad
        m_a:=\min\{\eps r_-,\gamma\}. \label{eq:hellinger-calibration}
  \end{align}
  If $a$ is itself a transition density and $A_a(x,\dd y)=a(y\mid x)\lambda(\dd y)$,
  then
  \[
        d_{\mu,\TV}(A_a,A_{\eps,\nu}K_0)^2
        \le \frac{1}{4c_-}\Excess(a).
  \]
\end{enumerate}
\end{theorem}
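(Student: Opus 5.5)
The plan is to prove part (iii) first, since the other parts follow from it. Fix $(x,y)$ and write $p=a_0(y\mid x)$, $q=a(y\mid x)$, $s=\tau r(y)$. The integrand of \eqref{eq:population-risk-integral} is
\[
-p\log\frac{q}{q+s}-s\log\frac{s}{q+s}.
\]
Subtracting the same expression at $q=p$ gives
\[
p\log\frac{p/(p+s)}{q/(q+s)}+s\log\frac{s/(p+s)}{s/(q+s)}.
\]
This equals $(p+s)\KL(\Ber(p/(p+s))\,\|\,\Ber(q/(q+s)))$, because $\eta_0=p/(p+s)$ and $1-\eta_0=s/(p+s)$. Integrating against $\mu\otimes\lambda$ yields \eqref{eq:contrastive-kl}. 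Both risks are finite under Assumptions~\ref{ass:bounds} and \ref{ass:candidate-bounds}, or more generally whenever $\mathcal R(a)<\infty$, so the subtraction is legitimate. In the general case I would check integrability using $\mathcal R(a_0)<\infty$, which follows from Assumption~\ref{ass:bounds}.

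For part (i), the integrand in \eqref{eq:contrastive-kl} is nonnegative, so $\Excess(a)\ge0$. Equality holds iff the KL vanishes a.e. Since $p+s\ge\tau r_->0$, this forces $\eta_a=\eta_0$ a.e. The map $q\mapsto q/(q+s)$ is strictly increasing for $s>0$, so $a=a_0$ $\mu\otimes\lambda$-a.e.

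For part (ii), which is the main quantitative step, I would use a pointwise Taylor bound for the map
\[
\phi(q)=-p\log\frac{q}{q+s}-s\log\frac{s}{q+s}.
\]
Its derivatives are $\phi'(q)=-p/q+(p+s)/(q+s)$, which vanishes at $q=p$, and
\[
\phi''(q)=\frac{p}{q^2}-\frac{p+s}{(q+s)^2}=\frac{p(q+s)^2-(p+s)q^2}{q^2(q+s)^2}.
\]
Along the segment between $p$ and $q$, every point $t$ lies in $[m,\bar M]$ with $m=\min\{\eps r_-,\gamma\}$ and $\bar M=\max\{(1-\eps)M_0+\eps r_+,\Gamma\}$, and $s\in[\tau r_-,\tau r_+]$. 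The obstacle is that $\phi''$ need not stay positive far from $p$, so a plain second-derivative bound along the segment is not enough.

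To get around this, I would instead bound the Bernoulli KL directly. The standard two-sided comparison $\KL(\Ber(u)\|\Ber(v))\asymp (u-v)^2$ holds with constants depending on how far $u,v$ stay from $\{0,1\}$. The lower side is Pinsker's $\ge 2(u-v)^2$. The upper side is the $\chi^2$ bound $\le (u-v)^2/(v(1-v))$. Both $\eta_0$ and $\eta_a$ lie in $[m/(m+\tau r_+),\,\bar M/(\bar M+\tau r_-)]\subset(0,1)$. Moreover
\[
\eta_a-\eta_0=\frac{s(q-p)}{(q+s)(p+s)},
\]
so $|\eta_a-\eta_0|\asymp |q-p|$ with constants depending only on the listed parameters. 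Multiplying by the weight $p+s\in[\tau r_-,\bar M+\tau r_+]$ and integrating gives $c_-\|a-a_0\|_2^2\le\Excess(a)\le c_+\|a-a_0\|_2^2$. The de-anchored bound then follows from Remark~\ref{rem:epsilon-tradeoff} with $p=2$.

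For part (iv), since $\mu\otimes\lambda$ is a probability measure, Cauchy--Schwarz gives $\|f\|_1\le\|f\|_2$, which yields \eqref{eq:l1-calibration}. For \eqref{eq:hellinger-calibration}, I would write
\[
(\sqrt q-\sqrt p)^2=\frac{(q-p)^2}{(\sqrt q+\sqrt p)^2}\le\frac{(q-p)^2}{4m_a},
\]
using $p\ge\eps r_-$ and $q\ge\gamma$, and then apply part (ii). Finally, if $a$ is a transition density, then $d_{\mu,\TV}=\tfrac12\|a-a_0\|_1$, so squaring and applying \eqref{eq:l1-calibration} gives the factor $1/(4c_-)$.
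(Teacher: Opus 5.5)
Your proof is correct. Part (ii), and to a lesser extent part (i), follows a different route from the paper's.

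For (i), you derive it from the KL form (iii). The paper instead uses the sign of the pointwise derivative, which in its notation $p=a_0$, $q=\tau r$, $s=a$ is $q(s-p)/\{s(s+q)\}$.

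For (ii), the paper never uses the Bernoulli KL. It takes the normalized pointwise excess $\{F_{p,q}(s)-F_{p,q}(p)\}/(s-p)^2$, extended by $\tfrac12F''_{p,q}(p)=q/\{2p(p+q)\}$ on the diagonal. It observes that this function is continuous and strictly positive on the compact box of admissible $(p,q,s)$, with positivity off the diagonal coming from the unique-minimizer property. It then takes $c_-$ and $c_+$ as its minimum and maximum. This sidesteps exactly the non-convexity you noticed, but the constants are non-explicit.

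Your route combines Pinsker, the $\chi^2$ upper bound, and the identity $\eta_a-\eta_0=s(q-p)/\{(q+s)(p+s)\}$ (in your notation), and it is fully constructive. Pointwise it gives
\[
\frac{2s^2(q-p)^2}{(q+s)^2(p+s)}\;\le\;(p+s)\KL\bigl(\Ber(\eta_0)\,\big\|\,\Ber(\eta_a)\bigr)\;\le\;\frac{s(q-p)^2}{q(p+s)}\;\le\;\frac{(q-p)^2}{q}.
\]
So, for instance, $c_+=1/\gamma$ works, and $c_-$ is an explicit rational function of $(\eps,\tau,r_-,r_+,M_0,\Gamma)$. It also shows which envelopes each side actually needs:
\begin{itemize}
\item the lower calibration uses only $a\le\Gamma$, $a_0\le(1-\eps)M_0+\eps r_+$, and the two-sided bounds on $r$;
\item the upper calibration uses only $a\ge\gamma$.
\end{itemize}

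The paper's compactness argument is shorter and does not depend on (iii). Yours makes (iii) the single source of both (i) and (ii). Your parts (iii) and (iv) coincide with the paper's.
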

The proof of Theorem~\ref{thm:calibration} is given in Appendix~\ref{app:risk}.
The uniform boundedness assumptions keep the curvature and concentration statements
global; tail-localized extensions would require standard truncation arguments
\citep{wainwright2019high}.

\subsection{Oracle Inequalities for Anchored Contrastive ERM}
To formally analyze the statistical properties of our estimator, it is mathematically convenient to package the interdependent sample components from Section 3 into a single independent observation block. Specifically, for a block
\[
B=(X,Y,\widetilde Y,Z_1,\ldots,Z_\tau)
\]
distributed as described previously, define
\[
 g_a(B)
   :=(1-\eps)\ell_a^+(X,Y)
      +\eps\ell_a^+(X,\widetilde Y)
      +\sum_{j=1}^{\tau}\ell_a^-(X,Z_j).
\]
Let $\G_n:=\{g_a:a\in\A_n\}$. In this block-based notation, the empirical and population contrastive risks are precisely the sample average and expectation of this block-wise loss, respectively:
\[
  \widehat{\mathcal R}_n(a) := \frac{1}{n}\sum_{i=1}^n g_a(B_i),
  \qquad
 \mathcal R(a) := \E[g_a(B)].
\]
Throughout this subsection, the blocks $B_1,\ldots,B_n$ are independent copies of $B$
unless a different sampling scheme is stated explicitly.
Set the expected uniform deviation as
\[
  \Delta_n
 :=\E\sup_{a\in\A_n}
 \abs{\widehat{\mathcal R}_n(a)-\mathcal R(a)}.
\]
We assume that the displayed supremum is measurable.  Equivalently, each expectation in
Theorem~\ref{thm:oracle} may be read as an outer expectation.

For a class $\A$ of functions on $\X\times\X$ and $\delta>0$, write
$\Nint(\delta,\A)$ for the internal sup-norm covering number: it is the smallest
$N$ for which there exist $a_1,\ldots,a_N\in\A$ such that
\[
  \sup_{a\in\A}\min_{j=1,\ldots,N}\norm{a-a_j}_\infty\le\delta.
\]

\begin{theorem}[Oracle Inequalities for Anchored Contrastive ERM]
\label{thm:oracle}
Under Assumptions~\ref{ass:bounds} and \ref{ass:candidate-bounds}, let
$\widehat a_n$ be a measurable $\delta_{\mathrm{opt}}$-approximate empirical risk
minimizer over $\A_n$.  Then:
\begin{enumerate}[label=(\roman*),leftmargin=2em]
  \item \textbf{Global oracle bound.}  
    \[     
    \E\norm{\widetilde k_{\widehat a_n}-k_0}_{2,\mu\lambda}^2    
     \le       
       \frac{2\Delta_n+\delta_{\mathrm{opt}}}{(1-\eps)^2c_-}        
       +\frac{c_+}{(1-\eps)^2c_-}           
          \inf_{a\in\A_n}\norm{a-a_0}_{2,\mu\lambda}^2,  
    \]  
    where $c_-$ and $c_+$ are the calibration constants in Theorem~\ref{thm:calibration}. 
    Here, the expected uniform deviation $\Delta_n$ can be explicitly upper-bounded 
    for any $\delta > 0$ by
    \[
      \Delta_n
      \le
      2L_{\ell}\delta
      +
      B_{\ell}
      \sqrt{\frac{2\log\paren{2N^{\mathrm{int}}_\infty(\delta,\mathcal A_n)}}{n}},
    \]
    where $N^{\mathrm{int}}_\infty(\delta,\mathcal A_n)=\Nint(\delta,\mathcal A_n)$
    is the internal sup-norm covering number defined above.
  \item \textbf{Bernstein geometry.}  There exist finite constants $V_{\ell}$ and $B_h$,
  depending only on $(\eps,\tau,r_-,r_+,M_0,\gamma,\Gamma)$, such that for every $a\in\A_n$,
  \[
          \E\bracks{\paren{g_a(B)-g_{a_0}(B)}^2}
          \le V_{\ell}\Excess(a),
          \qquad
          \abs{g_a(B)-g_{a_0}(B)}\le B_h \ \text{a.s.}
  \]
  Here the blocks $(B_i)_{i=1}^n$ are i.i.d.
  \item \textbf{Localized fast-rate oracle bound.}  Let $\delta>0$ and
  $N:=\Nint(\delta,\A_n)<\infty$.  Put $u:=\log(2N)+\log(2/\eta)$ for $\eta\in(0,1)$.
  There is a constant $C_{\mathrm{fast}}<\infty$, depending only on
  $(\eps,\tau,r_-,r_+,M_0,\gamma,\Gamma)$, such that with probability at least $1-\eta$,
  \begin{equation}
 \Excess(\widehat a_n)
 \le
 \frac53\inf_{a\in\A_n}\Excess(a)
 +
 C_{\mathrm{fast}}
 \left[
    L_{\ell}\delta
    +\delta_{\mathrm{opt}}
    +\frac{\log\{2N^{\mathrm{int}}_\infty(\delta,\A_n)/\eta\}}{n}
 \right],
\label{eq:fast-oracle-risk}
\end{equation}
  Consequently, on the same event,
\[
   \norm{\widetilde k_{\widehat a_n}-k_0}_{2,\mu\lambda}^2
   \le
   \frac{C_{\mathrm{fast}}}{(1-\eps)^2c_-}
   \left[
      \inf_{a\in\A_n}\Excess(a)
      +L_{\ell}\delta
      +\delta_{\mathrm{opt}}
      +
      \frac{
        \log\{2N^{\mathrm{int}}_\infty(\delta,\A_n)/\eta\}
      }{n}
   \right].
\]
\end{enumerate}
\end{theorem}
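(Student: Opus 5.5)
The argument runs the standard ERM decomposition inside the calibrated chart of Theorem~\ref{thm:calibration}. First I fix the loss constants. Under Assumptions~\ref{ass:bounds} and \ref{ass:candidate-bounds}, every $a\in\A_n\cup\{a_0\}$ satisfies $\min\{\gamma,\eps r_-\}\le a\le\max\{\Gamma,(1-\eps)M_0+\eps r_+\}$, and $r\in[r_-,r_+]$. Hence $\ell_a^\pm$ are bounded. They are also Lipschitz in the value of $a$: $\partial_a\{-\log(a/(a+\tau r))\}=-\tau r/\{a(a+\tau r)\}$ and $\partial_a\{\log(a+\tau r)\}$ are bounded by $1/\gamma$. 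So $\abs{g_a(B)-g_{a'}(B)}\le L_\ell\norm{a-a'}_\infty$ with $L_\ell:=(1+\tau)/\gamma$, and $0\le g_a\le B_\ell$ for an explicit $B_\ell$. These two constants are the ones named in the statement.

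\textbf{Part (i).} I start from the decomposition
\[
\Excess(\widehat a_n)=\bracks{\mathcal R(\widehat a_n)-\widehat{\mathcal R}_n(\widehat a_n)}+\bracks{\widehat{\mathcal R}_n(\widehat a_n)-\widehat{\mathcal R}_n(a^*)}+\bracks{\widehat{\mathcal R}_n(a^*)-\mathcal R(a^*)}+\Excess(a^*)
\]
for an arbitrary $a^*\in\A_n$. The middle bracket is at most $\delta_{\mathrm{opt}}$, and each outer bracket is at most the supremum deviation. Taking expectations gives $\E\Excess(\widehat a_n)\le 2\Delta_n+\delta_{\mathrm{opt}}+c_+\norm{a^*-a_0}_{2,\mu\lambda}^2$, using the upper calibration bound. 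I then infimize over $a^*$ and apply the consequence in Theorem~\ref{thm:calibration}(ii) to reach $\widetilde k$.

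To bound $\Delta_n$, I take an internal $\delta$-net $a_1,\dots,a_N$. Replacing $a$ by its nearest $a_j$ changes both $\widehat{\mathcal R}_n$ and $\mathcal R$ by at most $L_\ell\delta$, which gives the $2L_\ell\delta$ term. For the finite maximum, the $2N$ centered averages of $\pm g_{a_j}$ each have range at most $B_\ell$ and are sub-Gaussian by Hoeffding. The standard maximal inequality $\E\max_{k\le 2N}S_k\le B_\ell\sqrt{2\log(2N)/n}$ then finishes (i), up to the usual Hoeffding constant.

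\textbf{Part (ii).} I will use the exact form of the block loss. Write $h=g_a-g_{a_0}$, and let $\phi^\pm$ denote the pointwise loss differences, which are bounded by $L_\ell\abs{a-a_0}$. Then
\[
\E h^2\le(1+\eps+\tau)\Bigl[(1-\eps)\E\phi^+(X,Y)^2+\eps\E\phi^+(X,\widetilde Y)^2+\tau\E\phi^-(X,Z)^2\Bigr],
\]
by Cauchy--Schwarz over the $2+\tau$ summands. Each expectation is an integral against $k_0\,\mu\otimes\lambda$ or $r\,\mu\otimes\lambda$, and both densities are bounded by $\max\{M_0,r_+\}$. So $\E h^2\le C\norm{a-a_0}_{2,\mu\lambda}^2\le (C/c_-)\Excess(a)$ by the lower calibration bound, which defines $V_\ell$. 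The almost-sure bound follows from $B_h:=(1+\tau)L_\ell\cdot\sup\abs{a-a_0}$, finite by the envelopes.

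\textbf{Part (iii).} This is the main obstacle: I need a localized Bernstein argument on the finite net. For each net element $a_j$, Bernstein's inequality applied to $h_j=g_{a_j}-g_{a_0}$, with variance at most $V_\ell\Excess(a_j)$, plus a union bound over the $2N$ one-sided events, gives with probability at least $1-\eta$, simultaneously for all $j$,
\[
\abs{(P_n-P)h_j}\le\sqrt{2V_\ell\Excess(a_j)u/n}+B_hu/(3n).
\]
Applying $\sqrt{xy}\le\theta x+y/(4\theta)$ with a suitable $\theta$ (e.g.\ $\theta=1/8$) absorbs the square-root term into a fraction of $\Excess(a_j)$. Next I take $j$ the net point of $\widehat a_n$ and $j^*$ the net point of a near-optimal $a^*$, paying $O(L_\ell\delta)$ for each switch, since $\abs{\Excess(a)-\Excess(a_j)}\le 2L_\ell\delta$. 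Chaining these with $\widehat{\mathcal R}_n(\widehat a_n)\le\widehat{\mathcal R}_n(a^*)+\delta_{\mathrm{opt}}$ gives
\[
(1-\theta)\Excess(\widehat a_n)\le(1+\theta)\Excess(a^*)+C\bigl[L_\ell\delta+\delta_{\mathrm{opt}}+u/n\bigr].
\]
The choice $\theta=1/4$ yields the ratio $5/3$, and I note $u\le 2\log(2N/\eta)$. The $\widetilde k$ consequence is again Theorem~\ref{thm:calibration}(ii), where I absorb $5/3$ into $C_{\mathrm{fast}}$.

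The delicate points are keeping the net internal, so that the Bernstein variance bound from (ii) applies to the $a_j$, and tracking the $\delta$-perturbation in both empirical and population risk.
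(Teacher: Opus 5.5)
Your proposal is correct and follows essentially the same route as the paper. For (i) it uses the same ERM decomposition with upper calibration, plus a sup-norm net with Hoeffding and the finite maximal inequality. For (ii) it uses Cauchy--Schwarz against the bounded densities $k_0$ and $r$ together with lower calibration. For (iii) it uses Bernstein on an \emph{internal} net with a union bound, absorbs $\frac14\Excess$, and chains through the net points of $\widehat a_n$ and $a^\star$ to obtain the factor $5/3$.

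The only slip is cosmetic. When bounding $g_a-g_{a_0}$ in (ii), the Lipschitz constant must be taken on the common envelope $[\min\{\gamma,\eps r_-\},\max\{\Gamma,(1-\eps)M_0+\eps r_+\}]$, as the paper does, rather than $(1+\tau)/\gamma$. The reason is that $a_0$ may lie below $\gamma$.
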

The proof of Theorem~\ref{thm:oracle} is given in Appendix~\ref{app:erm}.

\subsection{Valid-Kernel Reconstruction by Markovization}
The next theorem is deterministic.  It clarifies how a de-anchored score can be
converted to a Markov kernel without asserting that every positive network score lies in
the anchor image.

\begin{definition}[Markovization]
\label{def:markovization}
Let $g:\X\times\X\to\mathbb R$ be jointly measurable and assume
$\int \abs{g(y\mid x)}\lambda(\dd y)<\infty$ for every $x$.  Write
$g^+(y\mid x)=\max\{g(y\mid x),0\}$ and
$c_g(x)=\int g^+(y\mid x)\lambda(\dd y)$.  Define
\[
   \mathfrak M g(y\mid x)
   :=
   \begin{cases}
     g^+(y\mid x)/c_g(x), & c_g(x)>0,\\
     r(y), & c_g(x)=0.
   \end{cases}
\]
\end{definition}

\begin{theorem}[Valid-Kernel Reconstruction by Markovization]
\label{thm:markovization}
The function $\mathfrak M g$ in Definition~\ref{def:markovization} is a jointly
measurable transition density.  For every transition density $k$,
\[
        \int\abs{\mathfrak M g(y\mid x)-k(y\mid x)}\lambda(\dd y)
        \le
        2\int\abs{g(y\mid x)-k(y\mid x)}\lambda(\dd y)
        \qquad\text{for every }x.
\]
Consequently,
\[
        \norm{\mathfrak M g-k}_{1,\mu\lambda}
        \le 2\norm{g-k}_{1,\mu\lambda}.
\]
In particular, on the event of Theorem~\ref{thm:oracle}(iii), the Markovized
de-anchored estimator
\[
        \widehat k_n:=\mathfrak M\widetilde k_{\widehat a_n},
        \qquad
        \widehat K_n(x,\dd y):=\widehat k_n(y\mid x)\lambda(\dd y)
\]
satisfies
\begin{align}
 d_{\mu,\TV}(\widehat K_n,K_0)^2
 \le
 \frac{C_{\mathrm{fast}}}{(1-\eps)^2c_-}
        \left[
          \inf_{a\in\A_n}\Excess(a)
          +L_{\ell}\delta
          +\delta_{\mathrm{opt}}
          +\frac{\log(2\Nint(\delta,\A_n))+\log(2/\eta)}{n}
        \right].
 \label{eq:markovized-fast-rate}
\end{align}
The bound is in integrated total variation for the valid kernel $\widehat K_n$; we do
not claim that Markovization preserves squared $L^2(\mu\otimes\lambda)$ error.
\end{theorem}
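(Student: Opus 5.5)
The plan is to prove the rowwise $L^1$ inequality by a direct case analysis on whether $c_g(x)>0$. Integrating it in $\mu$ then gives the global bound. Finally, the total-variation consequence follows by chaining with Theorem~\ref{thm:oracle}(iii) via Cauchy--Schwarz.

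\textbf{Validity and measurability.} Since $g$ is jointly measurable, so is $g^+$. By Tonelli, $x\mapsto c_g(x)=\int g^+(y\mid x)\lambda(\dd y)$ is measurable with values in $[0,\infty)$, finite because $\int\abs{g}\dd\lambda<\infty$. Hence $S:=\{x:c_g(x)>0\}$ is measurable, and
\[
\mathfrak M g(y\mid x)=\mathbf 1_S(x)\,g^+(y\mid x)/c_g(x)+\mathbf 1_{S^c}(x)\,r(y)
\]
is jointly measurable (on $S^c$ read $g^+/c_g$ as $0$). It is nonnegative. Each row integrates to one: on $S$ by the definition of $c_g(x)$, and on $S^c$ because $r$ is a density.

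\textbf{Rowwise $L^1$ bound.} Fix $x$ and a transition density $k$, and write $\norm{\cdot}$ for the $L^1(\lambda)$ norm in $y$. Since $k\ge0$, we have $\abs{g^+-k}\le\abs{g-k}$ pointwise, so $\norm{g^+-k}\le\norm{g-k}$.
\begin{itemize}
\item If $c:=c_g(x)>0$, the triangle inequality gives
\[
\norm{g^+/c-k}\le\norm{g^+/c-g^+}+\norm{g^+-k}=\abs{1-c}+\norm{g^+-k}.
\]
Moreover $\abs{1-c}=\abs{\int g^+\dd\lambda-\int k\dd\lambda}\le\norm{g^+-k}$. Both terms are therefore at most $\norm{g-k}$, which yields the factor $2$.
\item If $c_g(x)=0$, then $g^+=0$ $\lambda$-a.e. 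Hence $\norm{g-k}\ge\norm{g^+-k}=\norm{k}=1$. Since $r$ and $k$ are densities, $\norm{r-k}\le 2\le 2\norm{g-k}$.
\end{itemize}
Integrating the rowwise bound against $\mu$ gives $\norm{\mathfrak M g-k}_{1,\mu\lambda}\le 2\norm{g-k}_{1,\mu\lambda}$. If $k_0$ is a transition density only for $\mu$-a.e.\ $x$, the rowwise bound holds on that full-measure set, which suffices.

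\textbf{Application to the estimator.} Take $g=\widetilde k_{\widehat a_n}$. By Assumption~\ref{ass:candidate-bounds} and Assumption~\ref{ass:bounds}, $\abs{g}\le(\Gamma+\eps r_+)/(1-\eps)$. Since $\lambda$ is a probability measure, the integrability hypothesis of Definition~\ref{def:markovization} holds for every $x$. Then
\[
d_{\mu,\TV}(\widehat K_n,K_0)=\tfrac12\norm{\widehat k_n-k_0}_{1,\mu\lambda}\le\norm{\widetilde k_{\widehat a_n}-k_0}_{1,\mu\lambda}\le\norm{\widetilde k_{\widehat a_n}-k_0}_{2,\mu\lambda},
\]
where the last step is Cauchy--Schwarz, valid because $\mu\otimes\lambda$ is a probability measure. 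Squaring and inserting the second display of Theorem~\ref{thm:oracle}(iii) gives \eqref{eq:markovized-fast-rate}, using $\log\{2N/\eta\}\le\log(2N)+\log(2/\eta)$.

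The only delicate point is the degenerate-row case $c_g(x)=0$. There the fallback $r$ is unrelated to $k$, and the bound survives only because $\norm{g-k}\ge1$ is forced on such rows. I would check this step carefully, together with the requirement that $k$ be a genuine density for a.e.\ row.
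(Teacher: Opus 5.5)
Your proposal is correct and follows the paper's proof essentially step for step. It uses the same case split on $c_g(x)$, the same bound $\abs{1-c_g}+\norm{g^+-k}$ with $\abs{1-c_g}\le\norm{g^+-k}\le\norm{g-k}$, the same use of $\norm{g-k}\ge1$ on degenerate rows, and the same chain $d_{\mu,\TV}(\widehat K_n,K_0)\le\norm{\widetilde k_{\widehat a_n}-k_0}_{1,\mu\lambda}\le\norm{\widetilde k_{\widehat a_n}-k_0}_{2,\mu\lambda}$ fed into Theorem~\ref{thm:oracle}(iii). Your explicit checks that $\widetilde k_{\widehat a_n}$ meets the integrability hypothesis of Definition~\ref{def:markovization}, and that $\log(2N/\eta)\le\log(2N)+\log(2/\eta)$, are details the paper leaves implicit.
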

\noindent The proof of Theorem~\ref{thm:markovization} is given in Appendix~\ref{app:markov}.

\subsection{H{\"o}lder--ReLU Rate and Minimax Near-Optimality}
\label{sec:relu}

We now instantiate the oracle theory in a standard nonparametric regime.  The
transition density \(k_0(y\mid x)\) is viewed as a function on
\([0,1]^d\times[0,1]^d\), hence the effective dimension is \(2d\).  Under a
H{\"o}lder smoothness condition and a clipped ReLU approximation--entropy
hypothesis, the anchored contrastive estimator attains the usual nonparametric
rate
\[
    n^{-2\beta/(2\beta+2d)}
\]
up to logarithmic factors.  The same rate transfers to the Markovized kernel in
integrated total variation.  The minimax lower bound below shows that the
squared \(L^2(\mu\otimes\lambda)\) rate is optimal up to logarithmic factors.
Throughout this section, set
\[
    \overline d:=2d .
\]

\begin{assumption}[H{\"o}lder--ReLU approximation and entropy]
\label{ass:holder-relu}
Let $\X=\mathcal Y=[0,1]^d$, and let $\lambda$ be Lebesgue probability measure
on $\mathcal Y$.  For every integer $S\ge2$, there is a clipped ReLU class
$\A(S)$ with values in $[\gamma,\Gamma]$, consisting of jointly measurable
functions on $\X\times\mathcal Y$, such that the following holds.

For every fixed H{\"o}lder radius $R>0$, there exist constants
$C_{\mathrm{app}}(R),C_{\mathrm{ent}}(R)<\infty$ and
$\kappa_{\mathrm{app}},\kappa_{\mathrm{ent}}\ge0$ such that, whenever
$a_0\in\mathcal H_{\overline d}^{\beta}(R)$ and
\[
    \gamma\le\eps r_-,
    \qquad
    \Gamma\ge (1-\eps)M_0+\eps r_+,
\]
we have, for all $S\ge2$ and all $n\ge3$,
\begin{align*}
 \inf_{a\in\A(S)}\norm{a-a_0}_{\infty}
        &\le
        C_{\mathrm{app}}(R)S^{-\beta/\overline d}
        (\log S)^{\kappa_{\mathrm{app}}},\\
 \log N^{\mathrm{int}}_\infty(n^{-1},\A(S))
        &\le
        C_{\mathrm{ent}}(R)S(\log n)^{\kappa_{\mathrm{ent}}}.
\end{align*}
Here $N^{\mathrm{int}}_\infty$ denotes the internal sup-norm covering number.
\end{assumption}
For $\beta>0$, $R>1$, and $0<m<1<M$, let $\K_\beta(R,m,M)$ denote the class of
transition densities $k(y\mid x)$ on $[0,1]^d\times[0,1]^d$ such that
\[
    k\in\mathcal H_{\overline d}^{\beta}(R),
    \qquad
    m\le k(y\mid x)\le M,
    \qquad
    \int_{[0,1]^d} k(y\mid x)\lambda(\dd y)=1
    \quad\text{for every }x.
\]

We now state the concrete rate consequence.  The upper bound is formulated for
the anchored contrastive estimator over the clipped ReLU sieve from
Assumption~\ref{ass:holder-relu}.  The minimax comparison is stated over the
following standard bounded H{\"o}lder class of transition densities.

\begin{theorem}[H{\"o}lder--ReLU Rate and Minimax Near-Optimality]
\label{thm:holder-minimax}
Let $\X=[0,1]^d$ and suppose the bounded reference/target regime of
Assumption~\ref{ass:bounds} holds.  Assume also that
Assumption~\ref{ass:holder-relu} holds for the anchored target density $a_0$.
Let $\widehat a_n$ be a measurable $\delta_{\mathrm{opt}}$-approximate empirical
risk minimizer over the clipped ReLU sieve
\[
    \A_n:=\A(S_n),
    \qquad
    S_n:=\left\lceil n^{\overline d/(2\beta+\overline d)}\right\rceil,
\]
and define the de-anchored and Markovized estimators
\[
\widetilde k_n:=\frac{\widehat a_n-\eps r}{1-\eps},
\qquad
\widehat k_n:=\mathfrak M\widetilde k_n,
\qquad
\widehat K_n(x,\dd y):=\widehat k_n(y\mid x)\lambda(\dd y).
\]
Then there are constants $C,\kappa<\infty$ such that for all $n\ge3$ and $\eta\in(0,1)$, with
probability at least $1-\eta$,
\[
\norm{\widetilde k_n-k_0}_{2,\mu\lambda}^2
\vee
d_{\mu,\TV}(\widehat K_n,K_0)^2
\le
C\left[
    n^{-2\beta/(2\beta+2d)}(\log n)^\kappa
    +\delta_{\mathrm{opt}}
    +\frac{\log(1/\eta)}{n}
\right].
\]
Moreover, when $\mu=\lambda$ is Lebesgue probability measure on $[0,1]^d$, for every $\beta>0$, $R>1$, and $0<m<1<M$, the minimax risk over the
bounded H{\"o}lder transition-density class $\K_{\beta}(R,m,M)$ satisfies
\[
\inf_{\widehat k}
\sup_{K\in\K_{\beta}(R,m,M)}
\E_K\norm{\widehat k-k}_{2,\mu\lambda}^2
\ge
c\,n^{-2\beta/(2\beta+2d)}
\]
for a constant $c>0$.  Hence the squared $L^2(\mu\otimes\lambda)$ rate is
minimax near-optimal up to logarithmic factors.  The squared TV upper bound
follows from the same estimator and the Markovization argument, but no separate
TV minimax lower bound is claimed.
\end{theorem}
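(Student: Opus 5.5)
The proof has two independent halves: an upper bound obtained by plugging Assumption~\ref{ass:holder-relu} into the localized oracle bound of Theorem~\ref{thm:oracle}(iii) and then into Theorem~\ref{thm:markovization}, and a lower bound obtained by a standard Assouad (or Varshamov--Gilbert/Fano) construction over the class $\K_\beta(R,m,M)$.

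\textbf{Upper bound.} First I check that the sieve meets Assumption~\ref{ass:candidate-bounds}: elements of $\A(S_n)$ take values in $[\gamma,\Gamma]$. The approximation clause applies because $\gamma\le\eps r_-$, $\Gamma\ge(1-\eps)M_0+\eps r_+$, and $a_0\in\mathcal H^\beta_{\overline d}(R)$. Hence the calibration constants $c_\pm$ and $C_{\mathrm{fast}}$ do not depend on $n$.

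Next I take $\delta=n^{-1}$ in Theorem~\ref{thm:oracle}(iii). The upper half of Theorem~\ref{thm:calibration}(ii) gives $\inf_a\Excess(a)\le c_+\inf_a\norm{a-a_0}_2^2\le c_+\inf_a\norm{a-a_0}_\infty^2$. This is at most $C S_n^{-2\beta/\overline d}(\log S_n)^{2\kappa_{\mathrm{app}}}$, which equals $Cn^{-2\beta/(2\beta+\overline d)}(\log n)^{2\kappa_{\mathrm{app}}}$.

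The entropy term is $\log(2N/\eta)/n\le [C_{\mathrm{ent}}S_n(\log n)^{\kappa_{\mathrm{ent}}}+\log 2+\log(1/\eta)]/n$. Since $S_n/n\asymp n^{-2\beta/(2\beta+\overline d)}$, this is of the same order up to logarithms. The term $L_\ell\delta=L_\ell/n$ is lower order. Taking $\kappa=\max(2\kappa_{\mathrm{app}},\kappa_{\mathrm{ent}})$, these bounds give the claim for $\norm{\widetilde k_n-k_0}_2^2$ through the last display of Theorem~\ref{thm:oracle}(iii).

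For the TV part I use \eqref{eq:markovized-fast-rate} on the same event, or directly: $d_{\mu,\TV}(\widehat K_n,K_0)\le\norm{\widetilde k_n-k_0}_1\le\norm{\widetilde k_n-k_0}_2$. Here the second inequality is Cauchy--Schwarz with $\mu\otimes\lambda$ a probability measure, and the first is Theorem~\ref{thm:markovization}.

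\textbf{Lower bound.} I take $\mu=\lambda$ Lebesgue on $[0,1]^d$ and $k_\omega(y\mid x)=1+\sum_{j}\omega_j h^\beta\psi_j(x,y)$, with $\omega\in\{0,1\}^J$. The bumps are built as follows:
\begin{itemize}[leftmargin=2em]
\item Partition $[0,1]^{2d}$ into $J\asymp h^{-2d}$ cubes of side $h$.
\item Set $\psi_j(x,y)=\phi((x-x_j)/h)\,\varphi((y-y_j)/h)$, where $\phi\ge0$ and $\varphi$ are smooth and compactly supported in the unit cube, and $\int\varphi=0$.
\end{itemize}
The mean-zero factor in $y$ keeps every $k_\omega$ row-normalized. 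Because the supports are disjoint and $h^\beta$ is small with $m<1<M$, each $k_\omega$ lies in $\mathcal H^\beta(R)$ with values in $[m,M]$.

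The data law is $\mu\otimes K_\omega$ with $n$ i.i.d.\ pairs. The $\chi^2$ or KL divergence between neighbors differing in one coordinate is at most $C n\norm{h^\beta\psi_j}_2^2\asymp n h^{2\beta}h^{2d}$, using that densities are bounded below by $m$. Choosing $h\asymp n^{-1/(2\beta+2d)}$ makes this $O(1)$. Assouad's lemma then gives risk $\gtrsim J\cdot h^{2\beta}h^{2d}\asymp h^{2\beta}=n^{-2\beta/(2\beta+2d)}$.

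\textbf{Main obstacle.} The analytic steps are routine. The real point of care is bookkeeping: confirming that the exponents $S_n^{-2\beta/\overline d}$ and $S_n/n$ balance exactly, and that the $\log(1/\eta)$ term survives with a constant independent of $n$. The case $\mu\ne\lambda$ for the upper bound needs nothing extra, since only sup-norm approximation was used.
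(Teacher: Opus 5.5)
Your proposal is correct and follows essentially the same route as the paper. For the upper half, you apply the oracle bound (iii) with $\delta=n^{-1}$ together with the sup-norm approximation, the upper calibration and the entropy bound, then get TV from the Markovization bound plus Cauchy--Schwarz; for the lower half, you use an Assouad construction with row-mean-zero product bumps at scale $h\asymp n^{-1/(2\beta+2d)}$.

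One small correction: a small $h^\beta$ only makes the sup-norm of the perturbation small, whereas the top-order H\"older seminorm of $h^\beta\psi_j$ does not shrink with $h$. So membership in $\mathcal H^{\beta}_{\overline d}(R)$ needs an explicit small amplitude (the paper's $\omega$), which you can get by scaling $\phi\otimes\varphi$.
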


\noindent The proof of Theorem~\ref{thm:holder-minimax} is given in Appendix~\ref{app:rate-minimax}.
Assumption~\ref{ass:holder-relu} is a standard ReLU approximation--entropy input rather
than a new neural approximation theorem; see, for example, \citet{yarotsky2017relu}.

\section{Dynamic Transfer}
\label{sec:dynamics}

This section closes the loop
\[
 \text{contrastive risk}
 \;\Rightarrow\;
 \text{anchored density}
 \;\Rightarrow\;
 \text{valid Markov kernel}
 \;\Rightarrow\;
 \text{dynamical error control}.
\]
We give the finite-horizon transfer theorem from learned kernel error to marginal,
path-law, and occupation-measure error under an explicit occupancy coverage condition.
A final proposition shows why such a coverage assumption is unavoidable.

\subsection{From Contrastive Excess Risk to Finite-Horizon Dynamics}

\begin{theorem}[From Contrastive Excess Risk to Finite-Horizon Dynamics]
\label{thm:finite-horizon-transfer-main}
Let $K_0$ be the target kernel and let $\widehat a$ be a learned anchored score with
$\gamma\le \widehat a\le \Gamma$.  Form the valid learned kernel
$\widehat K$ by de-anchoring and Markovization as in Theorem~\ref{thm:markovization}.
Fix an initial law $\xi$ and a horizon $T\in\mathbb N$.  For any two Markov kernels $K$ and $L$, define the pointwise kernel error
\[
  e_{K,L}(x):=\TV(K(x,\cdot),L(x,\cdot)).
\]
For the learned-kernel application, write $e(x):=e_{K_0,\widehat K}(x)$.

For any Markov kernels $K$ and $L$ and any $m\le T$, the occupancy-weighted perturbation bound
(Theorem~\ref{thm:occupancy} in Appendix~\ref{app:dynamics}) gives
\begin{align}
  \TV(\xi K^m,\xi L^m)
  &\le \sum_{s=0}^{m-1}\int e_{K,L}(x)\,(\xi K^s)(\dd x).
  \label{eq:occupancy-marginal-main}
\end{align}
The same right-hand side bounds the total-variation distance between the length-$m$ path laws
and the normalized length-$m$ occupation measures
$\bar\Gamma_m^K:=\frac1m\sum_{s=0}^{m-1}\xi K^s$.

Assume the occupancy coverage condition: for each $s=0,\ldots,T-1$,
\[
  \xi K_0^s \ll \mu,\qquad C_s:=\esssup_{\mu}\frac{\dd(\xi K_0^s)}{\dd\mu}<\infty.
\]
Then for every $m\le T$:
\begin{enumerate}[label=(\roman*),leftmargin=2em]
  \item \textbf{$L^1$-coverage bound.}
  \[
    \TV(\xi K_0^m,\xi\widehat K^m)
    \le \paren{\sum_{s=0}^{m-1} C_s}\, d_{\mu,\TV}(K_0,\widehat K).
  \]
  \item \textbf{$L^2$-coverage bound.}  With
  $C_{\xi,\mu,m}:=\max_{0\le s<m}C_s$,
  \[
    \TV(\xi K_0^m,\xi\widehat K^m)
    \le m\sqrt{C_{\xi,\mu,m}}\,
       \norm{e}_{L^2(\mu)}.
  \]
  \item \textbf{Normalized occupation measures.}  Write the normalized occupation measure
  $\bar\Gamma_m^K:=\frac1m\sum_{s=0}^{m-1}\xi K^s$.  Then
  \[
    \TV(\bar\Gamma_m^{K_0},\bar\Gamma_m^{\widehat K})
    \le \frac{m-1}{2}\sqrt{C_{\xi,\mu,m}}\,\norm{e}_{L^2(\mu)}.
  \]
\end{enumerate}
By Theorem~\ref{thm:markovization},
$e(x)\le\int\abs{\widetilde k_{\widehat a}(y\mid x)-k_0(y\mid x)}\lambda(\dd y)$
pointwise, and Cauchy--Schwarz gives
$\norm{e}_{L^2(\mu)}\le \norm{\widetilde k_{\widehat a}-k_0}_{L^2(\mu\otimes\lambda)}$.
Finally, the $L^2$ calibration in Theorem~\ref{thm:calibration} yields
\[
  \norm{\widetilde k_{\widehat a}-k_0}_{L^2(\mu\otimes\lambda)}^2
  \le \frac{1}{(1-\eps)^2c_-}\Excess(\widehat a).
\]
Combining these steps, contrastive excess risk transfers directly to finite-horizon dynamical error:
\[
  \TV(\xi K_0^m,\xi\widehat K^m)
  \;\le\;
  \frac{m\sqrt{C_{\xi,\mu,m}}}{(1-\eps)\sqrt{c_-}}
  \,\Excess(\widehat a)^{1/2},
\]
and the normalized occupation measures $\bar\Gamma_m^{K_0}$ and $\bar\Gamma_m^{\widehat K}$
satisfy the same inequality with the factor $m$ replaced by $(m-1)/2$, as in part~(iii).
\end{theorem}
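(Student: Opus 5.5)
The argument starts from the occupancy-weighted bound \eqref{eq:occupancy-marginal-main}. I prove it by telescoping. Write
\[
\xi K^m-\xi L^m=\sum_{s=0}^{m-1}\xi K^s(K-L)L^{m-1-s}.
\]
Each summand is the signed measure $\xi K^s(K-L)$ pushed forward by the Markov operator $L^{m-1-s}$. A Markov operator does not increase total variation. Also, $\xi K^s(K-L)$ has total mass zero and total variation at most $\int e_{K,L}\,\dd(\xi K^s)$. Summing over $s$ gives \eqref{eq:occupancy-marginal-main}.

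The path-law statement uses the same telescoping at the level of path measures: swap the kernel at one step at a time. The resulting distance is bounded by the same sum, since the later kernels act as a Markov map on the path coordinates.

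For the occupation measure, average the marginal bounds over $m'=0,\dots,m-1$. This gives
\[
\frac1m\sum_{m'}\sum_{s<m'}(\cdot),
\]
in which the $s$-th term appears with weight $(m-1-s)/m$.

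For (i), insert $C_s$. Since $e\ge0$,
\[
\int e\,\dd(\xi K_0^s)=\int e\,\frac{\dd\xi K_0^s}{\dd\mu}\,\dd\mu\le C_s\int e\,\dd\mu=C_s\,d_{\mu,\TV}(K_0,\widehat K).
\]

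For (ii), I use Cauchy--Schwarz instead. Writing $h_s=\dd(\xi K_0^s)/\dd\mu$, we have $\int e\,h_s\,\dd\mu\le\|e\|_{L^2(\mu)}\|h_s\|_{L^2(\mu)}$. Then $\int h_s^2\,\dd\mu\le C_s\int h_s\,\dd\mu=C_s$. Hence each term is at most $\sqrt{C_{\xi,\mu,m}}\,\|e\|_{L^2(\mu)}$, and summing $m$ terms gives (ii).

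For (iii), apply the same per-term bound to the weighted sum $\sum_s (m-1-s)/m$. This sum equals $(m-1)/2$, which gives the constant in (iii).

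The remaining chain links the learned kernel to the excess risk. By Theorem~\ref{thm:markovization}, for every $x$,
\[
e(x)=\tfrac12\int\abs{\widehat k-k_0}\,\dd\lambda\le\int\abs{\widetilde k_{\widehat a}-k_0}\,\dd\lambda.
\]
Here $\widetilde k_{\widehat a}$ is integrable in $y$ because $\widehat a$ is bounded and $\lambda$ is a probability measure. Since $\lambda$ is a probability measure, Jensen or Cauchy--Schwarz gives $e(x)^2\le\int\abs{\widetilde k_{\widehat a}-k_0}^2\,\dd\lambda$. Integrating in $\mu$ yields $\|e\|_{L^2(\mu)}\le\|\widetilde k_{\widehat a}-k_0\|_{2,\mu\lambda}$.

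Theorem~\ref{thm:calibration}(ii) then bounds $\|\widetilde k_{\widehat a}-k_0\|_{2,\mu\lambda}^2$ by $\Excess(\widehat a)/((1-\eps)^2c_-)$. This requires $\widehat a$ to satisfy the candidate bounds. That holds because $\gamma\le\widehat a\le\Gamma$, so we may take $\A_n\ni\widehat a$ or the singleton class. Substituting into (ii) and (iii) gives the final displays.

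The main obstacle is making the path-law and total-variation contraction steps rigorous for signed measures. In particular, one must check that $\|\xi K^s(K-L)\|_{\TV}\le\int e\,\dd(\xi K^s)$ under the sup-over-sets definition of $\TV$ (a factor-of-two convention issue) and that the relevant functions are measurable. Measurability follows from Lemma~\ref{lem:tv-measurability}. For the convention, I would state everything for the half-$L^1$ norm of signed measures of zero mass, where $\TV$ coincides with the given definition, so that no stray factor $2$ appears.
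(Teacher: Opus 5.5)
Your proposal is correct and follows essentially the same route as the paper: the identity $\xi K^m-\xi L^m=\sum_{s}\xi K^s(K-L)L^{m-1-s}$ is the paper's one-step recursion unrolled, and the remaining steps all match the proof in Appendix~\ref{app:dynamics}. These are the path-law hybrid argument, the density-ratio and Cauchy--Schwarz bounds in (i)--(ii), the averaging that yields $(m-1)/2$ in (iii), and the Markovization--Cauchy--Schwarz--calibration chain. Your extra checks are details the paper leaves implicit: that $\widetilde k_{\widehat a}$ is row-integrable, so $\mathfrak M$ is defined, and that calibration applies to $\widehat a$ through a candidate class containing it.
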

\noindent The proof of Theorem~\ref{thm:finite-horizon-transfer-main} is given in Appendix~\ref{app:dynamics}.
\begin{remark}[Invariant law perturbation under contraction]
The main transfer theorem is finite-horizon and occupancy-weighted.  Stationary
perturbation requires stronger assumptions: if one has a uniform kernel error
$d_{\infty,\TV}(K,L)$ and the Dobrushin coefficient
$$\alpha(K):=\sup_{x,x'}\TV(K(x,\cdot),K(x',\cdot))<1,$$
then the standard Dobrushin contraction yields
\[\TV(\pi_K,\pi_L)\le d_{\infty,\TV}(K,L)/(1-\alpha(K)).\]
See Corollary~\ref{cor:stationary} in Appendix~\ref{app:dynamics}.
\end{remark}

\subsection{Trajectory Sampling Through the Same Reconstruction Interface}
\label{sec:trajectory-interface}

The independent-pair oracle theorem isolates the contrastive geometry most cleanly.
For trajectory data, temporal dependence affects the statistical oracle step but not
the reconstruction interface.  We record a conservative thinning-and-coupling
extension.  Its purpose is not to optimize dependent-data rates, but to show that
de-anchoring, Markovization, and finite-horizon perturbation remain unchanged once a
trajectory oracle bound is available.

Let $(X_t)_{t\ge0}$ be a stationary Markov chain with invariant law $\mu$ and
transition kernel $K_0$.  Its absolute-regularity coefficients are
\[
 \beta_X(s)
 :=
 \sup_{t\ge0}
 \beta\!\left(
   \sigma(X_0,\ldots,X_t),
   \sigma(X_{t+s},X_{t+s+1},\ldots)
 \right).
\]
Fix an integer $q\ge2$ and write $M:=\lfloor N/q\rfloor$.  Assume $M\ge1$.
From a trajectory $X_0,\ldots,X_N$, retain only
\[
        (X_{jq-1},X_{jq}),\qquad j=1,\ldots,M.
\]
For each retained transition, draw
$\widetilde Y_j\sim\nu$ and
$Z_{j1},\ldots,Z_{j\tau}\stackrel{\mathrm{i.i.d.}}{\sim}\nu$ independently
of the trajectory and of all other auxiliary variables, and set
\[
 B_j^{(q)}
 :=
 \bigl(X_{jq-1},X_{jq},\widetilde Y_j,Z_{j1},\ldots,Z_{j\tau}\bigr).
\]
The thinned trajectory empirical risk is
\[
 \widehat{\mathcal R}_{M,q}^{\mathrm{tr}}(a)
 :=
 \frac1M\sum_{j=1}^{M}g_a(B_j^{(q)}).
\]

\begin{theorem}[Trajectory sampling through the reconstruction interface]
\label{thm:trajectory-effective-sample}
Suppose Assumptions~\ref{ass:bounds} and \ref{ass:candidate-bounds} hold.
Let $\widehat a_{M,q}^{\mathrm{tr}}$ be a measurable
$\delta_{\mathrm{opt}}$-approximate minimizer of
$\widehat{\mathcal R}_{M,q}^{\mathrm{tr}}$ over $\A_M$.  Let
$\eta\in(0,1)$ and assume
\[
        (M-1)\beta_X(q-1)\le \eta/2 .
\]
For every $\delta>0$ with $\Nint(\delta,\A_M)<\infty$, there are constants
$C_{\mathrm{or}},C_{\mathrm{ker}},C_{\mathrm{dyn}}<\infty$, depending only on
$(\eps,\tau,r_-,r_+,M_0,\gamma,\Gamma)$, such that with probability at least
$1-\eta$,
\[
 \Excess(\widehat a_{M,q}^{\mathrm{tr}})
 \le
 C_{\mathrm{or}}\,
 \mathfrak R_{M,q}(\eta,\delta),
\]
where
\[
 \mathfrak R_{M,q}(\eta,\delta)
 :=
 \inf_{a\in\A_M}\Excess(a)
 +L_{\ell}\delta
 +\delta_{\mathrm{opt}}
 +\frac{\log\{4\Nint(\delta,\A_M)/\eta\}}{M}.
\]
Define
\[
 \widetilde k_{M,q}^{\mathrm{tr}}
 :=
 \frac{\widehat a_{M,q}^{\mathrm{tr}}-\eps r}{1-\eps},
 \qquad
 \widehat k_{M,q}^{\mathrm{tr}}
 :=
 \mathfrak M\widetilde k_{M,q}^{\mathrm{tr}},
\]
and let
$\widehat K_{M,q}^{\mathrm{tr}}(x,\dd y)
 :=\widehat k_{M,q}^{\mathrm{tr}}(y\mid x)\lambda(\dd y)$.
Then
\[
 d_{\mu,\TV}(\widehat K_{M,q}^{\mathrm{tr}},K_0)^2
 \le
 C_{\mathrm{ker}}\,
 \mathfrak R_{M,q}(\eta,\delta).
\]
If the occupancy coverage condition of
Theorem~\ref{thm:finite-horizon-transfer-main} holds up to horizon $T$, then for every
integer $1\le h\le T$,
\[
 \TV\!\left(\xi K_0^h,\xi(\widehat K_{M,q}^{\mathrm{tr}})^h\right)
 \le
 h\sqrt{C_{\xi,\mu,h}C_{\mathrm{dyn}}}\,
 \mathfrak R_{M,q}(\eta,\delta)^{1/2}.
\]
The same right-hand side also bounds the total-variation distance between the
corresponding length-$h$ path laws.  The normalized occupation-measure bound holds with
$h$ replaced by $(h-1)/2$.
\end{theorem}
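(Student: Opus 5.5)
The plan is to reduce the trajectory statement to the independent-block oracle bound of Theorem~\ref{thm:oracle}(iii) by a Berbee-type coupling, and then push the result through the unchanged deterministic interface: Theorem~\ref{thm:calibration}, Theorem~\ref{thm:markovization}, and Theorem~\ref{thm:finite-horizon-transfer-main}.

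\textbf{Step 1: coupling.} The retained pairs $(X_{jq-1},X_{jq})$ are separated by gaps of $q-2$ discarded indices. The last index of pair $j$ is $jq$ and the first index of pair $j+1$ is $(j+1)q-1$, so the time lag between consecutive retained pairs is $q-1$.

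By stationarity each pair has law $\mu\otimes K_0$. The auxiliary draws $(\widetilde Y_j,Z_{j\cdot})$ are independent of everything else, so they do not affect mixing.

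Apply Berbee's lemma sequentially, as in \citet{doukhan1994mixing}. This gives independent copies $B_j^{*}$, each distributed as $B$, with
\[
\Prob\bigl(\exists j: B_j^{(q)}\ne B_j^{*}\bigr)\le \sum_{j=2}^{M}\beta_X(q-1)=(M-1)\beta_X(q-1)\le \eta/2 .
\]
To obtain this bound, at step $j$ I couple the block $(B_1,\dots,B_{j-1})$ against $B_j$. The $\beta$-coefficient of $\sigma$-fields separated by lag $q-1$ is bounded by $\beta_X(q-1)$, because the past $\sigma$-field is generated by $X_0,\dots,X_{(j-1)q}$ and the future one by $X_{jq-1},X_{jq},\dots$.

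\textbf{Step 2: oracle bound on the coupled sample.} Let $\widehat a^{*}$ be any $\delta_{\mathrm{opt}}$-approximate ERM on the coupled blocks. On the coupling event $E$, the two empirical risks coincide as functions of $a$. Hence $\widehat a_{M,q}^{\mathrm{tr}}$ is itself a $\delta_{\mathrm{opt}}$-approximate ERM for the i.i.d.\ sample $B_1^*,\dots,B_M^*$.

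I then apply Theorem~\ref{thm:oracle}(iii) with sample size $M$, class $\A_M$, and confidence $\eta/2$. Since $\log(2N/(\eta/2))=\log\{4\Nint(\delta,\A_M)/\eta\}$, this gives, with probability at least $1-\eta/2$ over the coupled sample,
\[
\Excess \le \tfrac53\inf_{a\in\A_M}\Excess(a)+C_{\mathrm{fast}}\bigl[L_\ell\delta+\delta_{\mathrm{opt}}+\log\{4\Nint(\delta,\A_M)/\eta\}/M\bigr].
\]
For this to apply uniformly, the fast-rate bound of Theorem~\ref{thm:oracle}(iii) must hold for every approximate minimizer simultaneously. That is the case, because its proof runs through a uniform Bernstein/union bound over the $\delta$-net plus the Lipschitz step. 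I would state this explicitly rather than treat the minimizer as a single measurable selection.

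Intersecting with $E$ gives probability at least $1-\eta$. The excess-risk bound then follows with $C_{\mathrm{or}}:=\max\{5/3,C_{\mathrm{fast}}\}$.

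\textbf{Step 3: kernel and dynamics.} The remaining steps are deterministic.

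For the kernel bound, Theorem~\ref{thm:markovization} gives
\[
d_{\mu,\TV}(\widehat K,K_0)=\tfrac12\norm{\mathfrak M\widetilde k-k_0}_{1}\le\norm{\widetilde k-k_0}_{1}\le\norm{\widetilde k-k_0}_{2,\mu\lambda},
\]
using that $\mu\otimes\lambda$ is a probability measure. Theorem~\ref{thm:calibration}(ii) bounds the last term by $\{(1-\eps)^2c_-\}^{-1}\Excess$, so one may take $C_{\mathrm{ker}}:=C_{\mathrm{or}}/\{(1-\eps)^2c_-\}$.

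For the dynamics, the chain of inequalities at the end of Theorem~\ref{thm:finite-horizon-transfer-main} gives
\[
\TV(\xi K_0^h,\xi\widehat K^h)\le h\sqrt{C_{\xi,\mu,h}}\,\bigl((1-\eps)^2c_-\bigr)^{-1/2}\Excess^{1/2},
\]
so $C_{\mathrm{dyn}}:=C_{\mathrm{ker}}$ works. The path-law bound comes from the same occupancy bound~\eqref{eq:occupancy-marginal-main}, and the occupation-measure bound from part (iii) of that theorem.

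\textbf{Main obstacle.} I expect the hard part to be the coupling bookkeeping in Step 1. Two points need care: getting exactly the lag $q-1$ and the count $M-1$, and ensuring the retained-pair $\sigma$-fields fit the definition of $\beta_X$. This is where I would spend the most care.
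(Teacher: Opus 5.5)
Your proposal is correct and follows essentially the same route as the paper's proof. That route is a Berbee coupling of the thinned blocks with mismatch probability at most $(M-1)\beta_X(q-1)\le\eta/2$. The concentration event from the proof of Theorem~\ref{thm:oracle}(iii) is then invoked at level $\eta/2$ uniformly over all $\delta_{\mathrm{opt}}$-approximate minimizers, which yields the $\log\{4\Nint(\delta,\A_M)/\eta\}$ term, and a union bound combines it with the coupling event. The proof finishes with the deterministic calibration, Markovization, Cauchy--Schwarz, and occupancy-transfer chain. Your explicit constants $C_{\mathrm{or}}=\max\{5/3,C_{\mathrm{fast}}\}$, $C_{\mathrm{ker}}=C_{\mathrm{or}}/\{(1-\eps)^2c_-\}$, and $C_{\mathrm{dyn}}=C_{\mathrm{ker}}$ are valid choices for the constants the paper leaves implicit.
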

\noindent The proof of Theorem~\ref{thm:trajectory-effective-sample} is given in
Appendix~\ref{app:trajectory-effective-sample}.

\begin{corollary}[Geometrically mixing trajectories]
\label{cor:geometric-trajectory}
Suppose, in addition to the assumptions of
Theorem~\ref{thm:trajectory-effective-sample}, that
\[
        \beta_X(s)\le B\rho^s,
        \qquad 0<B<\infty,\quad 0<\rho<1.
\]
For
\[
 q_{N,\eta}
 :=
 \max\left\{
   2,\,
   1+\left\lceil\frac{\log(2BN/\eta)}{|\log\rho|}\right\rceil
 \right\},
 \qquad
 N_{\mathrm{eff}}
 :=
 \left\lfloor\frac{N}{q_{N,\eta}}\right\rfloor,
\]
and $N_{\mathrm{eff}}\ge1$, the conclusions of
Theorem~\ref{thm:trajectory-effective-sample} hold with
$q=q_{N,\eta}$ and $M=N_{\mathrm{eff}}$.  In particular,
$N_{\mathrm{eff}}\asymp N/\log(N/\eta)$ as $N/\eta$ grows, up to constants
depending on $(B,\rho)$.
\end{corollary}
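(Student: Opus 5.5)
The corollary is a direct specialization of Theorem~\ref{thm:trajectory-effective-sample}. The only hypothesis that must be verified is the mixing budget $(M-1)\beta_X(q-1)\le\eta/2$ for $q=q_{N,\eta}$ and $M=N_{\mathrm{eff}}$. Once this holds, all conclusions of the theorem transfer verbatim, with $M$ replaced by $N_{\mathrm{eff}}$ in $\mathfrak R_{M,q}(\eta,\delta)$. The remaining work is the asymptotic statement about $N_{\mathrm{eff}}$.

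First I check the budget. Since $N_{\mathrm{eff}}=\lfloor N/q\rfloor\le N/2$, we have $M-1<N$. By the geometric mixing hypothesis, $\beta_X(q-1)\le B\rho^{q-1}$. By definition of $q_{N,\eta}$,
\[
  q-1\ \ge\ \left\lceil\frac{\log(2BN/\eta)}{|\log\rho|}\right\rceil,
\]
so $\rho^{q-1}\le \exp(-\log(2BN/\eta))=\eta/(2BN)$. This inequality is valid when $\log(2BN/\eta)\ge0$. If instead $2BN/\eta<1$, then $B\rho^{q-1}\le B<\eta/(2N)$ already, since $\rho<1$. In either case
\[
  (M-1)\beta_X(q-1)\le N\cdot B\cdot\frac{\eta}{2BN}=\frac{\eta}{2}.
\]
One small point: the maximum with $2$ only increases $q$, which only decreases $\rho^{q-1}$, so the bound is unaffected. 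Together with the standing assumptions $N_{\mathrm{eff}}\ge1$ and $q\ge2$, Theorem~\ref{thm:trajectory-effective-sample} now applies with $q=q_{N,\eta}$ and $M=N_{\mathrm{eff}}$.

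Second I handle the asymptotics. As $N/\eta\to\infty$, $\log(2BN/\eta)=\log(N/\eta)+\log(2B)$, so
\[
  q_{N,\eta}\asymp \frac{\log(N/\eta)}{|\log\rho|},
\]
with two-sided constants depending on $(B,\rho)$ once $N/\eta$ exceeds a threshold depending on $B$. For the upper bound, $N_{\mathrm{eff}}\le N/q_{N,\eta}$. For the lower bound, whenever $N/q\ge1$ we have $\lfloor N/q\rfloor\ge N/(2q)$, and $N/q\ge1$ is exactly the assumption $N_{\mathrm{eff}}\ge1$. Together these give $N_{\mathrm{eff}}\asymp N/\log(N/\eta)$.

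There is no real obstacle here. The only care needed is the edge case where $\log(2BN/\eta)$ is negative, so that the ceiling could be nonpositive; the floor $q\ge2$ and the fallback bound $B\rho^{q-1}\le B<\eta/(2N)$ handle it.
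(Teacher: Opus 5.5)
Your proposal is correct and matches the paper's proof: the paper also derives $B\rho^{q_{N,\eta}-1}\le\eta/(2N)$ from the definition of $q_{N,\eta}$, uses $N_{\mathrm{eff}}\le N$ to get $(M-1)\beta_X(q_{N,\eta}-1)\le\eta/2$, and reads off $N_{\mathrm{eff}}\asymp N/\log(N/\eta)$ from $q_{N,\eta}=O(\log(N/\eta))$, though you spell out the two-sided asymptotics more explicitly via $\lfloor x\rfloor\ge x/2$ for $x\ge1$. Your separate case for $2BN/\eta<1$ is harmless but unnecessary, since $q_{N,\eta}-1\ge\lceil\log(2BN/\eta)/|\log\rho|\rceil$ holds whatever the sign of the logarithm, so $\rho^{q_{N,\eta}-1}\le\eta/(2BN)$ always.
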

\noindent The proof of Corollary~\ref{cor:geometric-trajectory} is given in
Appendix~\ref{app:geometric-trajectory}.

Under the H\"older--ReLU assumptions of Theorem~\ref{thm:holder-minimax}, the same
substitution gives, when the sieve is tuned to the retained sample size,
\[
 d_{\mu,\TV}(\widehat K_{M,q}^{\mathrm{tr}},K_0)^2
 \lesssim
 M^{-2\beta/(2\beta+2d)}(\log M)^\kappa .
\]
For geometrically $\beta$-mixing trajectories with the choice in
Corollary~\ref{cor:geometric-trajectory}, this becomes
\[
 d_{\mu,\TV}(\widehat K_{N_{\mathrm{eff}},q_{N,\eta}}^{\mathrm{tr}},K_0)^2
 \lesssim
 N^{-2\beta/(2\beta+2d)}(\log N)^{\kappa'}
\]
for some $\kappa'>0$.  Thus geometric temporal dependence changes the logarithmic
factor but not the main nonparametric exponent.

\begin{remark}[Statistical anchoring versus restart-controlled data collection]
The geometric $\beta$-mixing assumption is imposed on the observational chain with
kernel $K_0$.  The Doeblin anchor used in the statistical chart does not by itself make
that trajectory geometrically mixing, because the data are sampled from $K_0$, not from
$A_{\eps,\nu}K_0$.

If data collection is instead actively restart-controlled and the observed trajectory is
generated by $A_{\eps,\nu}K_0$, then the Doeblin minorization provides an explicit
geometric mixing mechanism.  This is a different sampling design: the empirical
contrastive construction must be adjusted accordingly, since the observed positive
transitions are already sampled from the anchored kernel.
\end{remark}

\subsection{A limitation example}

\begin{proposition}[Integrated design error need not control stationarity]
\label{prop:limitation}
For every $\rho\in(0,1/2)$ there exist two irreducible Markov kernels $K_\rho$ and
$L_\rho$ on the two-point space $\{0,1\}$ and a design law $\mu_\rho$ such that
\[
        d_{\mu_\rho,\TV}(K_\rho,L_\rho)\le \rho
\]
while their invariant laws satisfy
\[
        \TV(\pi_{K_\rho},\pi_{L_\rho})\ge \frac14.
\]
\end{proposition}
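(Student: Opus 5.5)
The construction is explicit on the two-point space $\{0,1\}$. Take $K_\rho$ and $L_\rho$ to agree at state $0$ and to differ only at the rarely-designed state $1$. Put $\mu_\rho:=(1-\rho)\delta_0+\rho\delta_1$. Since $\TV$ between two laws on $\{0,1\}$ is at most $1$, the integrated error satisfies
\[
d_{\mu_\rho,\TV}(K_\rho,L_\rho)=\rho\,\TV\bigl(K_\rho(1,\cdot),L_\rho(1,\cdot)\bigr)\le\rho,
\]
whatever the rows at state $1$ are. All the work is then to choose the rows so that the stationary laws are far apart, uniformly in $\rho$.

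Concretely, I would set $K_\rho(0,\cdot)=L_\rho(0,\cdot)=(1-p,p)$ with a fixed $p$, say $p=1/2$. I would then set $K_\rho(1,\cdot)=(1,0)$, so the chain restarts to $0$ immediately, and $L_\rho(1,\cdot)=(\theta,1-\theta)$ with $\theta$ small, so state $1$ is sticky. Both kernels are irreducible as long as $p>0$ and $\theta>0$. For a two-state chain with off-diagonal rates $p$ (from $0\to1$) and $q$ (from $1\to0$), the invariant law is $\pi(1)=p/(p+q)$. With $p=1/2$ this gives
\[
\pi_{K_\rho}(1)=\frac{1/2}{1/2+1}=\frac13,\qquad \pi_{L_\rho}(1)=\frac{1/2}{1/2+\theta}.
\]
Choosing $\theta=1/10$ gives $\pi_{L_\rho}(1)=5/6$. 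On a two-point space $\TV(\pi,\pi')=|\pi(1)-\pi'(1)|$, so here $\TV=1/2\ge1/4$. The kernels can in fact be taken independent of $\rho$; only the design law depends on $\rho$. This is allowed by the statement. The restriction $\rho<1/2$ just keeps $\mu_\rho$ a genuine two-point design that is still concentrated mostly on state $0$.

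The only step needing care is verifying the invariant-law formula and irreducibility, which is a one-line balance computation, $\pi(0)p=\pi(1)q$. I expect no real obstacle. The conceptual point to state afterwards is that $\mu_\rho$ puts small mass on a state which $\pi_{L_\rho}$ weights heavily. This is exactly a failure of the occupancy-coverage condition in Theorem~\ref{thm:finite-horizon-transfer-main}: the density ratio $\dd\pi/\dd\mu_\rho$ at state $1$ is of order $1/\rho$. It also contrasts with the stationary remark, where Dobrushin contraction needs a uniform error $d_{\infty,\TV}$, and here $d_{\infty,\TV}(K_\rho,L_\rho)=9/10$ is not small.
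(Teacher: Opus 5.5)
Your proposal is correct and is essentially the paper's construction: a two-point chain whose kernels share the first row $(1/2,1/2)$, with $L_\rho$ given the sticky row $(1/10,9/10)$ at the rarely designed state $1$, so that $\pi_{L_\rho}(1)=5/6$. The only difference is in the row $K_\rho(1,\cdot)$ and the design weight. You take $K_\rho(1,\cdot)=(1,0)$ with weight $\rho$ and use the crude bound $\TV\le 1$, giving separation $1/2$. The paper instead takes $K_\rho(1,\cdot)=(1/2,1/2)$ and scales the weight to $\min\{(5/2)\rho,1\}$ to offset the $2/5$ row gap, giving separation $1/3$.
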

\noindent The proof of Proposition~\ref{prop:limitation} is given in Appendix~\ref{subsec:lim}.

This example shows that the occupancy coverage assumption in
Theorem~\ref{thm:finite-horizon-transfer-main} is not merely a technical convenience: an
integrated learning error under a design law without sufficient coverage is not
automatically a guarantee about stationary behavior.  The obstruction appears already in
a two-point state space.

\section{Two Diagnostic Examples}
\label{sec:examples}

The following finite-state examples use probability masses directly.  Equivalently, the
dominating measure is counting measure on the displayed finite state space, so a kernel
density is just a transition probability mass.

\subsection{A sparse two-state chart}

Let $\X=\{0,1\}$, let $\nu=(1/2,1/2)$, and consider the deterministic flip chain
\[
        K_0(0,\cdot)=\delta_1,
        \qquad
        K_0(1,\cdot)=\delta_0,
        \qquad
        K_0=
        \begin{pmatrix}
          0 & 1\\
          1 & 0
        \end{pmatrix}.
\]
This kernel has zero transition probabilities on half of the state pairs.  The anchored
kernel is
\[
        A_{\eps,\nu}K_0
        =
        \begin{pmatrix}
          \eps/2 & 1-\eps/2\\
          1-\eps/2 & \eps/2
        \end{pmatrix}.
\]
Therefore
\[
        A_{\eps,\nu}K_0(x,\{y\})\ge \eps\nu(\{y\})=\eps/2,
        \qquad x,y\in\{0,1\}.
\]
In mass notation the contrastive risk targets
\[
        a_0(y\mid x)=(1-\eps)k_0(y\mid x)+\eps r(y),
        \qquad r(0)=r(1)=1/2,
\]
and the inverse chart is the entrywise affine map
\[
        k_0(y\mid x)
        =
        \frac{a_0(y\mid x)-\eps r(y)}{1-\eps}.
\]
Thus the original transition may be sparse or deterministic, while the anchored
transition is strictly inside the positive cone determined by the reference law.  The
anchor defines a positive contrastive coordinate whose inverse recovers the original
sparse kernel, at the price of the de-anchoring factor $(1-\eps)^{-1}$.

\subsection{A signed de-anchored row}

The affine inverse of the anchor chart is applied to an estimated score, not to the true
$a_0$.  A de-anchored row can therefore be close in $L^1$ and still fail to be a
probability mass.  For example, suppose the target row at state $0$ is deterministic,
\[
        k_0(0\mid0)=1,\qquad k_0(1\mid0)=0,
\]
but the de-anchored score for that row is
\[
        \widetilde k(0\mid0)=1.01,
        \qquad
        \widetilde k(1\mid0)=-0.01.
\]
The row sum is one and the $L^1$ error on this row is
\[
        |1.01-1|+|-0.01-0|=0.02,
\]
but $\widetilde k(\cdot\mid0)$ is not a probability mass because it has a negative
entry.  Markovization clips and renormalizes:
\[
        (1.01,-0.01)
        \longmapsto
        (1.01,0)
        \longmapsto
        (1,0).
\]
This is the finite-state picture behind Theorem~\ref{thm:markovization}: the learned
score must be returned to the cone of Markov kernels before it is iterated.

\section{Experiments}
\label{sec:experiments}

The experiments are designed as interface diagnostics rather than leaderboard
benchmarks.  The goal is not to claim superiority over specialized conditional density
estimators, but to test whether each operation required by the theory is empirically
necessary: anchoring improves the contrastive coordinate, de-anchoring targets the
original kernel rather than the anchored kernel, Markovization restores transition-kernel
validity after score learning, and one-step kernel error predicts finite-horizon
dynamical error under coverage.  We therefore test the full pipeline
\[
 \widehat a_\theta
 \longrightarrow
 \widetilde k_\theta
 \longrightarrow
 \widehat k_\theta=\mathfrak M\widetilde k_\theta
 \longrightarrow
 d_{\mu,\TV}(\widehat K_\theta,K_0)
 \longrightarrow
 \text{finite-horizon dynamic error}.
\]
Unlike the finite-state diagnostics in Section~\ref{sec:examples}, the main experiments train the
anchored contrastive score.  The estimator is a clipped ReLU network
\[
        a_\theta(x,y)=\gamma+(\Gamma-\gamma)\sigma(f_\theta(x,y)),
\]
trained by binary contrastive logistic loss with positive samples from the anchored
transition $(1-\eps)K_0(x,\cdot)+\eps\nu$ and negative samples from $\nu$.  In the
public implementation this anchored positive law is represented exactly by the weighted
double-positive empirical risk in \eqref{eq:empirical-risk}: for each observed pair the
loss includes both the data transition with weight $1-\eps$ and an independently drawn
restart transition with weight $\eps$.  The network output is the bounded score
$a_\theta$ itself; the contrastive classification logit is then
$\log\{a_\theta(x,y)/(\tau r(y))\}$.  Thus the training objective uses the same bounded
candidate parametrization and weighted empirical risk as the theoretical estimator.
After
training, we de-anchor, Markovize, and evaluate the resulting kernel on common
Monte Carlo or grid integration sets.  The reproducibility repository is available at \url{https://github.com/Ao-Xu/doeblin_exp_2026}. 

\subsection{Protocol, models, and evaluation metrics}

The synthetic models are deliberately chosen to stress different parts of the theory:
smooth kernels test calibration in a regular setting; multimodal and nearly
deterministic kernels test non-Gaussian and sparse transition behavior; trajectory
experiments isolate temporal dependence; and rare-state chains test the necessity of
occupancy coverage.  The complete experiment-to-output index, synthetic model grid,
and implementation parameters are recorded in the repository README for reproduction.
The experiments are synthetic because the target kernel must be known in order to
evaluate integrated TV, path-law TV upper bounds, and coverage failure exactly.  These experiments
should therefore be read as controlled diagnostics of the reconstruction chain rather
than as uncontrolled real-data benchmarks.  All training-based figures in this section
are generated end-to-end by the public script; deterministic diagnostic examples are
identified separately.

\begin{table}[t]
\centering
\footnotesize
\resizebox{\textwidth}{!}{%
\begin{tabular}{p{0.25\textwidth}p{0.30\textwidth}p{0.31\textwidth}}
\toprule
Component & Theory estimator & Public implementation\\
\midrule
Positive samples & Weighted double-positive risk
$(1-\eps)\ell^+(X,Y)+\eps\ell^+(X,\widetilde Y)$ &
The same weighted double-positive construction; no Bernoulli mixture thinning is used.\\
Score parametrization & Bounded candidate functions
$\gamma\le a\le\Gamma$; posterior $a/(a+\tau r)$ & Bounded score network
$a_\theta=\gamma+(\Gamma-\gamma)\sigma(f_\theta)$; logistic logit
$\log\{a_\theta/(\tau r)\}$.\\
Calibration plot & Excess contrastive risk versus density error &
Held-out weighted validation excess
$\widehat R_{\rm val}(\widehat a)-\widehat R_{\rm val}(a_0)$; grid density errors are oracle diagnostics.\\
Reference and Markovization & Restart density with lower envelope; valid-kernel repair &
Matched sampler/density pairs; poor coverage is a $0.1$ uniform plus $0.9$ beta
mixture; continuous-state integrals use grid quadrature.\\
\bottomrule
\end{tabular}}
\caption{Alignment between the theoretical estimator, implemented estimator, and evaluated quantities.}
\label{tab:implementation-alignment}
\end{table}

For continuous kernels, Markovization and TV integrals are evaluated by grid quadrature
on common design and integration points.  We report contrastive excess risk, anchored
and de-anchored squared errors, integrated total variation, and the pre-Markovization
invalidity diagnostics
\[
 \mathrm{NegMass}(\widetilde k_\theta)
  = \E_{X\sim\mu}\int (-\widetilde k_\theta(y\mid X))_+\,d\lambda(y),
 \quad
 \mathrm{RowErr}(\widetilde k_\theta)
  = \E_{X\sim\mu}\left|\int \widetilde k_\theta(y\mid X)\,d\lambda(y)-1\right|.
\]
For finite-state dynamics we compute rollout TV, occupation TV, stationary TV, and
occupancy-weighted upper bounds for path-law TV directly from matrices.  Unless stated
otherwise, Monte Carlo summaries use ten seeds with standard-error bars.  The method
comparison includes the proposed pipeline, nearby ablations, and a row-normalized
Gaussian conditional-density baseline evaluated on the same grids.

\subsection{End-to-end calibration of anchored contrastive learning}

The first experiment tests the calibration implication
\[
 \Excess(a_\theta)
 \quad\text{tracks}\quad
 \norm{a_\theta-a_0}_{2,\mu\lambda}^2
 \quad\text{and}\quad
 \norm{\widetilde k_\theta-k_0}_{2,\mu\lambda}^2 .
\]
We use smooth, multimodal, and rough one-dimensional wrapped transition kernels.  Each
sample size and seed trains a fresh anchored contrastive network.  The excess risk is
evaluated on independent validation contrastive blocks, while density errors are
computed on a fixed integration grid.

\begin{figure}[t]
\centering
\includegraphics[width=\textwidth]{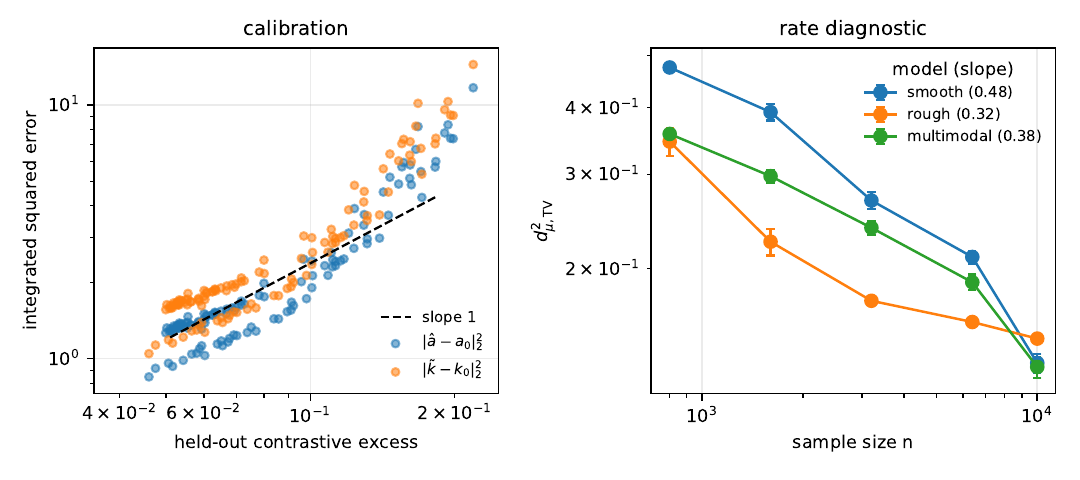}
\caption{Statistical diagnostics for the contrastive front end.  Left: end-to-end
calibration, where each point is a trained neural score under one model, sample size,
and seed.  The log--log cloud aligns with a slope-one reference in aggregate.  Right:
real-trained one-dimensional rate diagnostic, with fitted slopes shown in parentheses.}
\label{fig:exp-calibration-v2}
\end{figure}

Figure~\ref{fig:exp-calibration-v2} shows that validation excess risk orders the trained
scores consistently with the density errors used in the theory: the log--log
correlation with anchored $L^2$ error is about $0.94$, and de-anchoring preserves this
ordering up to the affine factor $(1-\eps)^{-1}$.  We interpret excess risk as a
model-selection proxy, not as a tight finite-sample numerical surrogate for density
loss.

\subsection{Valid-kernel reconstruction and Markovization}

The second experiment checks whether Markovization is actually needed for learned
scores.  We train contrastive scores, de-anchor them, and compare the signed
pre-Markovization score with the Markovized density.  The key statistic is the
empirical repair ratio
\[
        R_{\mathrm M}
        =
        \frac{\norm{\mathfrak M\widetilde k_\theta-k_0}_{1,\mu}}
             {\norm{\widetilde k_\theta-k_0}_{1,\mu}},
\]
which is bounded by $2$ in Theorem~\ref{thm:markovization}.

\begin{figure}[t]
\centering
\includegraphics[width=\textwidth]{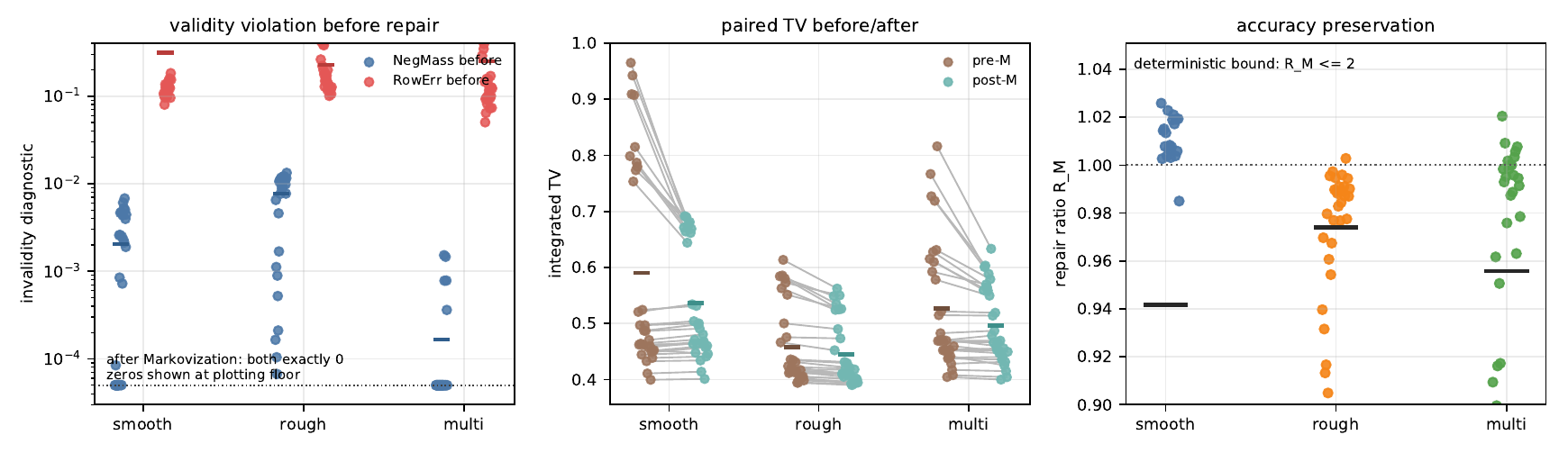}
\caption{Markovization repairs validity without amplifying reconstruction error.
Left: learned de-anchored scores have nonzero negative mass or row-sum error before
Markovization; after Markovization both diagnostics are exactly zero by construction.
Middle: paired pre/post TV values show that Markovization is a validity restoration
step rather than a TV-improvement heuristic.  Right: the empirical repair ratio is
close to one across environments and far below the deterministic bound
$R_{\mathrm M}\le2$.}
\label{fig:exp-markovization-v2}
\end{figure}

Figure~\ref{fig:exp-markovization-v2} shows nonzero negative mass and row error before
post-processing, both exactly zero after Markovization.  The repair ratio is well below
$2$ in all runs, with maximum about $1.03$ and mean about $0.96$.  Thus Markovization is
a validity-restoration step, not a TV-improvement heuristic.

\subsection{Statistical rates, smoothness, and dimension}

The third experiment is a real-trained one-dimensional rate diagnostic.  We do not fill
untrained high-dimensional settings with theory-shaped curves; each sample size and seed
corresponds to a trained contrastive network, evaluated by
$d_{\mu,\TV}(\widehat K_\theta,K_0)^2$.

The right panel of Figure~\ref{fig:exp-calibration-v2} has the qualitative pattern
predicted by the theory: increasing $n$ reduces the reconstruction error.  The fitted
ordinary-least-squares log--log slopes for
$d_{\mu,\TV}(\widehat K_\theta,K_0)^2$ are approximately $0.48$ for the smooth model,
$0.32$ for the rough model, and $0.38$ for the multimodal model.  These slopes should
not be read as precise estimates of the asymptotic exponent or as a high-dimensional
benchmark; they are a sanity check that the real-trained reconstruction error
decreases with sample size.

\subsection{Anchor strength and reference-law coverage}

The fourth experiment is a diagnostic for the two design parameters discussed in
Section~\ref{sec:anchor-choice}; it is not a claim that the paper solves optimal
restart-law selection.  We vary $\eps$ and the reference law, retraining a fresh network
for each setting.  The four reference curves use the fixed order uniform,
poor-coverage, empirical-KDE, and mixture, with matched samplers and density evaluators.
The experiment tests the conditioning--inversion tradeoff in $\eps$ and the effect of
next-state reference coverage.

\begin{figure}[t]
\centering
\includegraphics[width=\textwidth]{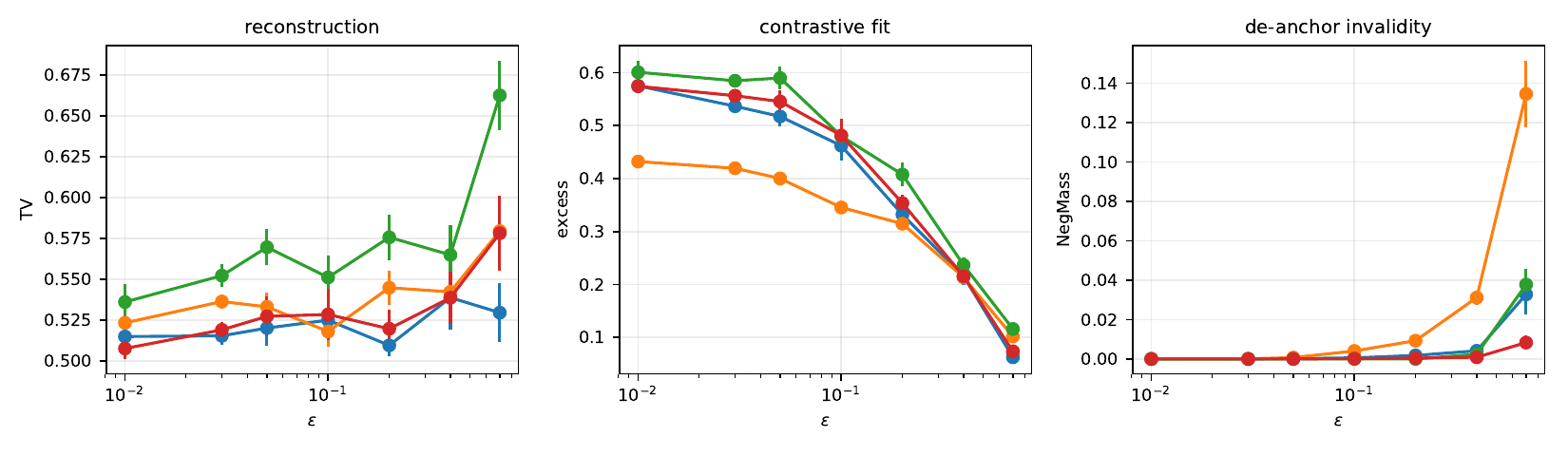}
\caption{Anchor strength and reference-law coverage.  Each $\eps$ and reference law is
trained separately.  Curves use the fixed reference-law order uniform,
poor-coverage, empirical-KDE, and mixture in all three panels.  The reconstruction TV
has an interior optimum; poor reference coverage worsens reconstruction and
invalidity.}
\label{fig:exp-anchor-v2}
\end{figure}

The observed TV curves show the expected conditioning--inversion tradeoff, with a
problem-dependent best reference.  High $\eps$ increases invalidity, especially for the
poor-coverage reference, supporting the interpretation of $\eps$ as a genuine
statistical parameter rather than a harmless technical constant.

\subsection{Trajectory-data stress test}

The preceding oracle theory is stated for independent transition pairs.  We therefore
include a trajectory-only stress test on lazy finite-state chains, comparing adjacent
transition-pair estimates with i.i.d. transition pairs and thinned trajectories.  The
experiment isolates temporal dependence; it is not used to claim sharp dependent-data
rates or to analyze full-trajectory ERM.  In the left and middle panels below, colors
use increasing mixing parameter $\alpha\in\{0.02,0.05,0.1,0.2,0.5\}$.

\begin{figure}[t]
\centering
\includegraphics[width=\textwidth]{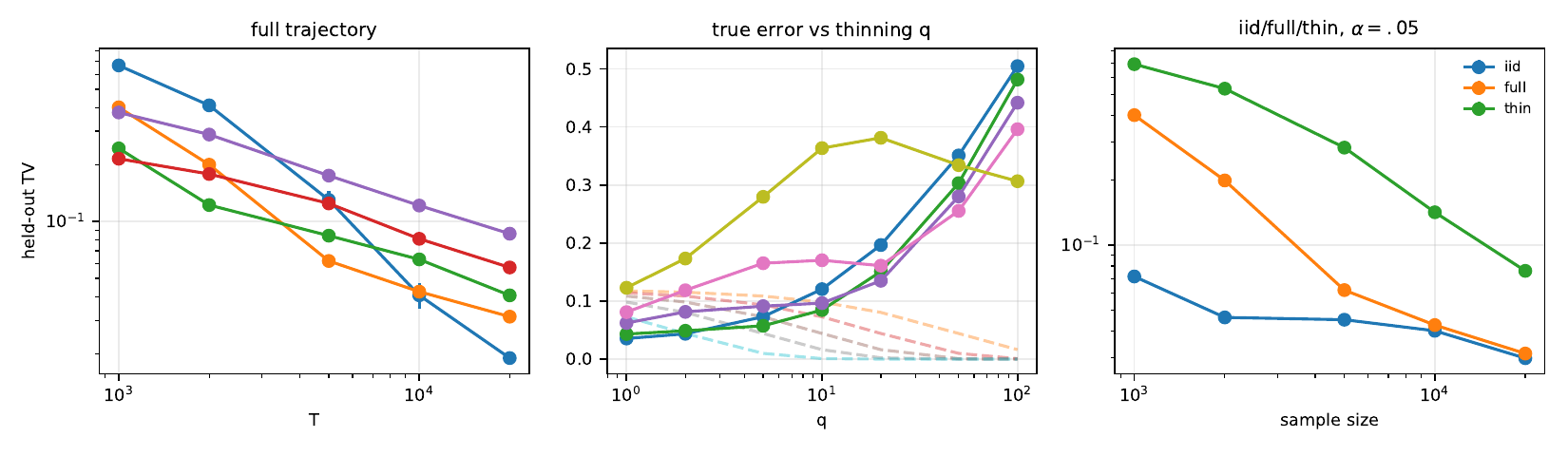}
\caption{Trajectory-data stress test.  Left: held-out TV decreases with trajectory
length but worsens as mixing slows.  Middle: thinning trades sample size for weaker
dependence; colors use the same increasing-$\alpha$ order as the left panel.  Right:
i.i.d. transition pairs are easiest, full trajectories are harder, and thinning can
help in the slow-mixing regime.}
\label{fig:exp-trajectory-v2}
\end{figure}

Figure~\ref{fig:exp-trajectory-v2} shows the expected ordering: fast-mixing trajectories
are closer to independent pairs, slow-mixing trajectories need larger $T$, and thinning
can help in the slow-mixing regime.  The result is consistent with the conservative
thinning theorem in Section~\ref{sec:trajectory-interface}, but is not a sharp
dependent-data rate claim.

\subsection{Finite-horizon dynamical transfer and coverage failure}

The sixth experiment closes the loop from learned one-step kernels to Markov dynamics.
For each trained score we de-anchor, Markovize on a common grid, and roll out the
resulting kernel.  In the covered setting we take an initial law dominated by the design
law and compute
\[
        \TV(\xi \widehat K_\theta^m,\xi K_0^m),
        \qquad m\in\{1,2,5,10,20,50\}.
\]
In the uncovered setting we use the rare-state construction
$\mu(1)=\delta\in\{0.01,0.02,0.05\}$ and start from $\xi=\delta_1$.  The displayed
figure focuses on finite-horizon transfer and the coverage obstruction; stationary
perturbation is already covered by Corollary~\ref{cor:stationary}.

\begin{figure}[t]
\centering
\includegraphics[width=\textwidth]{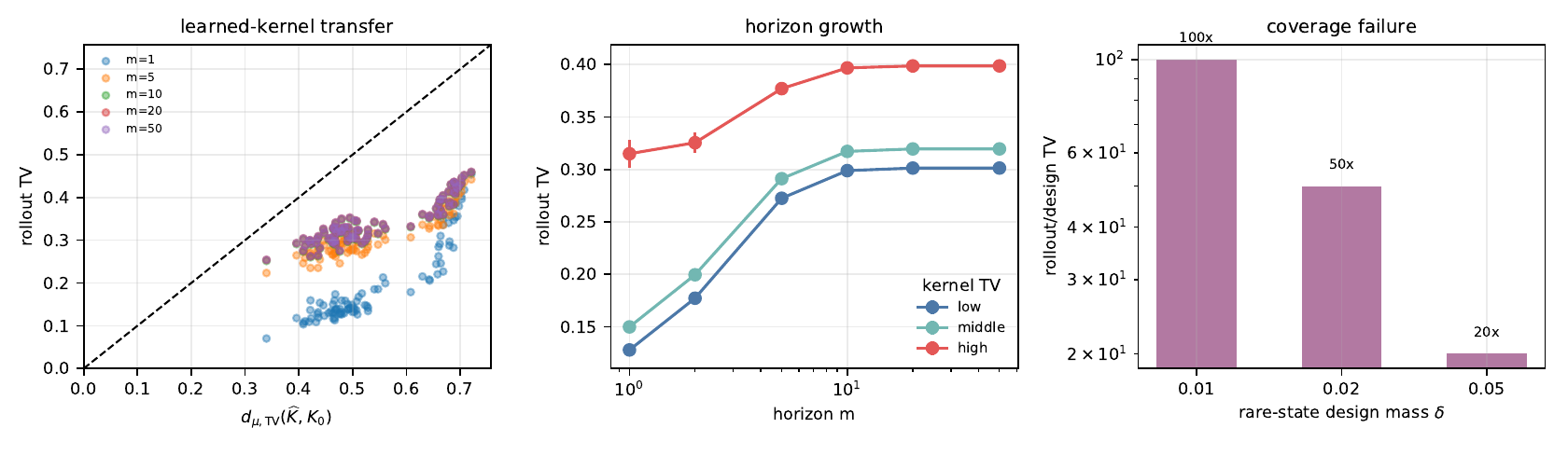}
\caption{Dynamic transfer with learned kernels.  Left: learned-kernel rollout errors
across several horizons are controlled by one-step integrated TV.  Middle: horizon
growth is larger for learned kernels with larger one-step error.  Right: without
coverage, a small design-averaged error can be amplified by factors of $20$--$100$ at
one step.}
\label{fig:exp-dynamics-v2}
\end{figure}

In the covered setting, one-step integrated TV and rollout TV are strongly correlated
(above $0.90$ for every reported horizon and about $0.91$ at horizon $10$).  The
rare-state panel illustrates the obstruction behind Proposition~\ref{prop:limitation}:
the design-averaged error is $\delta$, but the one-step rollout error from the rare
state is $1$, an amplification factor $1/\delta$.

\begin{table}[t]
\centering
\footnotesize
\begin{tabular}{cccc}
\toprule
Rare-state mass $\delta$ & $d_{\mu,\mathrm{TV}}(K,L)$ & rare-state rollout TV & amplification\\
\midrule
0.01 & 0.01 & 1.00 & $100\times$\\
0.02 & 0.02 & 1.00 & $50\times$\\
0.05 & 0.05 & 1.00 & $20\times$\\
\bottomrule
\end{tabular}
\caption{Coverage-failure diagnostic.  A design-averaged one-step error can be made small by assigning small design mass to a rare state, while rollout from that state remains maximally wrong.  This table visualizes the obstruction in Proposition~\ref{prop:limitation}.}
\label{tab:coverage-failure}
\end{table}

\subsection{Ablation study}

The seventh experiment removes one component at a time: anchoring, de-anchoring,
Markovization, reference coverage, anchor strength, model capacity, and negative-sample
count.  The metrics are the same as above, so the ablation tests whether each interface
component has an observable statistical or dynamical effect.

\begin{figure}[t]
\centering
\includegraphics[width=\textwidth]{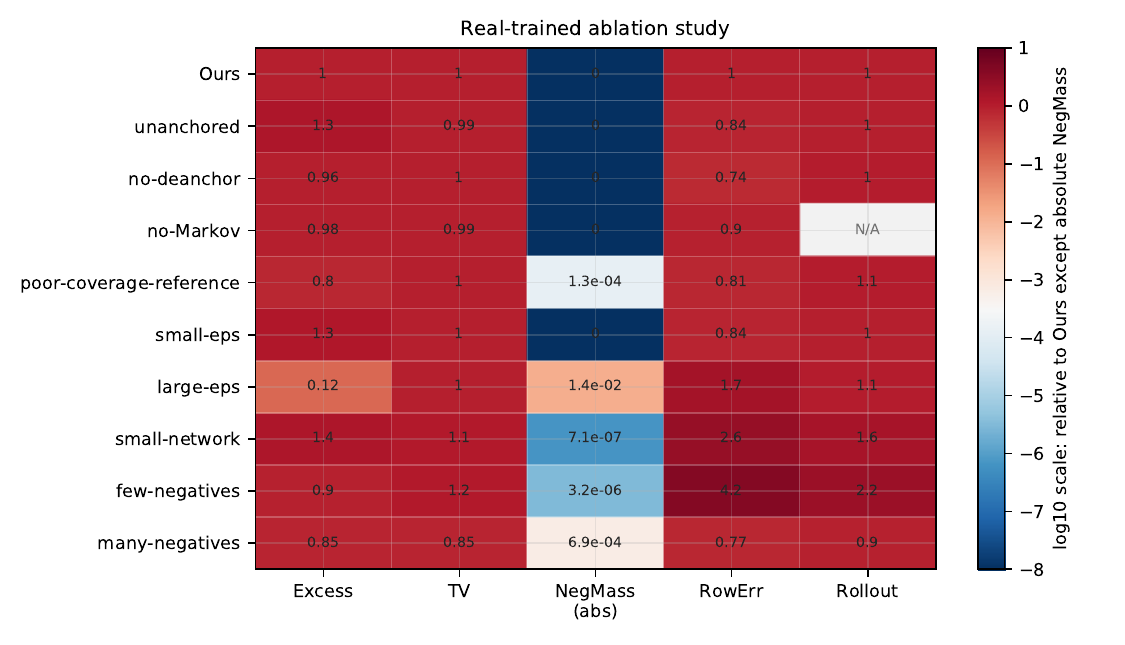}
\caption{Ablation heatmap.  Excess, TV, RowErr, and rollout entries are relative to
Ours on a $\log_{10}$ scale.  The NegMass column instead reports absolute
pre-Markovization negative mass, because Ours has zero NegMass to numerical precision;
a $10^{-8}$ floor is used only for coloring zeros, not as a denominator.  Removing
anchoring, de-anchoring, Markovization, using an under-capacity network, or degrading
reference coverage harms at least one of excess risk, TV reconstruction, invalidity,
or rollout error.}
\label{fig:exp-ablation-v2}
\end{figure}

Table~\ref{tab:method-comparison} compares the full valid-kernel pipeline with nearby
interface variants and a row-normalized Gaussian-CDE baseline on the multimodal
diagnostic.  These comparisons are interface diagnostics, not benchmark claims: the
full pipeline targets the original kernel, repairs validity after de-anchoring, and
produces a kernel suitable for rollout.

\begin{table}[t]
\centering
\footnotesize
\resizebox{\textwidth}{!}{%
\begin{tabular}{llcccc}
\toprule
Method & Role & Numerically valid? & Covered by chart theory? & $d_{\mu,\mathrm{TV}}$ & rollout TV\\
\midrule
Ours & full interface & Yes & Yes & 0.493 & 0.133\\
Ours-noMarkov & score without repair & No & No & 0.489 & N/A\\
Anchored-noDeanchor & wrong target kernel & Yes & No: targets $A_{\eps,\nu}K_0$ & 0.495 & 0.14\\
Unanchored-NCE & accurate but no restart chart & Yes, after Markovization & No & 0.489 & 0.136\\
Gaussian-CDE & row-normalized baseline & Yes & No & 0.514 & 0.505\\
\bottomrule
\end{tabular}
}
\caption{Interface-oriented method comparison after end-to-end training.  Numerical validity records whether the evaluated object is a transition kernel; theory coverage records whether the estimator is covered by the anchored-chart reconstruction theory.  Gaussian-CDE is a row-normalized circular Gaussian regression baseline.  No-Markov is not rolled out because it is not a valid transition kernel.}
\label{tab:method-comparison}
\end{table}

\begin{table}[t]
\centering
\footnotesize
\resizebox{\textwidth}{!}{%
\begin{tabular}{lccccccc}
\toprule
Variant & Excess & TV & Pre-M NegMass & Pre-M RowErr & Num. valid? & Theory? & RolloutTV\\
\midrule
Ours & 0.427 & 0.493 & 0 & 0.13 & Yes & Yes & 0.133\\
unanchored & 0.571 & 0.489 & 0 & 0.109 & Yes & No & 0.136\\
no-deanchor & 0.411 & 0.495 & N/A & N/A & Yes & No: wrong target & 0.14\\
no-Markov & 0.416 & 0.489 & 0 & 0.117 & No & No & N/A\\
poor-coverage-reference & 0.341 & 0.5 & 0.000134 & 0.105 & Yes & stress test & 0.149\\
small-eps & 0.555 & 0.493 & 0 & 0.11 & Yes & Yes & 0.136\\
large-eps & 0.0523 & 0.495 & 0.0138 & 0.226 & Yes & Yes & 0.145\\
small-network & 0.599 & 0.556 & 7.14e-07 & 0.336 & Yes & Yes & 0.215\\
few-negatives & 0.386 & 0.579 & 3.23e-06 & 0.542 & Yes & Yes & 0.299\\
many-negatives & 0.364 & 0.419 & 0.000691 & 0.1 & Yes & Yes & 0.12\\
\bottomrule
\end{tabular}
}
\caption{Ablation summary from real-trained variants.  Pre-M NegMass and Pre-M RowErr refer only to the de-anchored score before Markovization.  N/A means that the corresponding pre-Markovization diagnostic or rollout is not defined for that ablation.  Ours has zero Pre-M NegMass to numerical precision in this run.  Numerical validity is separated from coverage by the anchored-chart theory.  The no-Markov row is not rolled out because it is not a valid transition kernel.}
\label{tab:ablation-summary}
\end{table}

The ablation confirms the intended division of labor.  The invalidity columns are
pre-Markovization diagnostics, so they do not contradict
Theorem~\ref{thm:markovization}; no-deanchor is valid but targets
$A_{\eps,\nu}K_0$ rather than $K_0$; and no-Markov is not rolled out because it is not a
valid kernel.  The anchor is not presented as a universal finite-sample accuracy boost,
but as the structural device that supplies the restart chart and lower envelope.

\subsection{Runtime diagnostics}

The final experiment gives measured runtime diagnostics for data construction,
training, Markovization, and evaluation.  No large-scale timing point is extrapolated.

\begin{figure}[t]
\centering
\includegraphics[width=\textwidth]{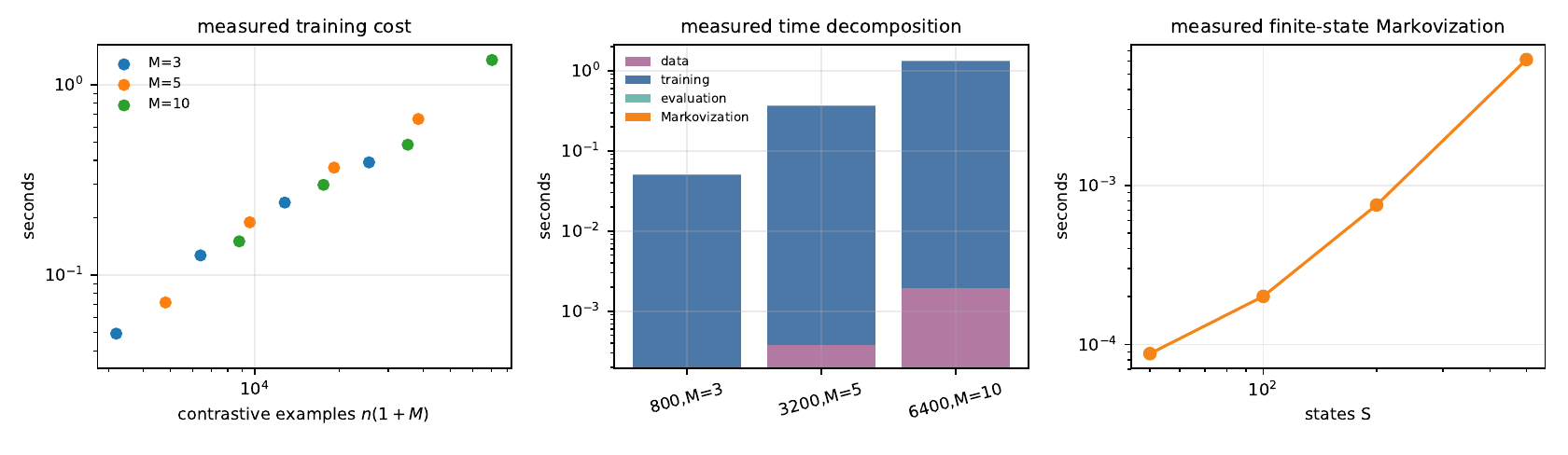}
\caption{Runtime diagnostics.  Left: measured training time versus the number of
contrastive examples $n(1+M)$.  Middle: representative measured wall-clock
decomposition.  Right: measured finite-state Markovization time.}
\label{fig:exp-runtime-v2}
\end{figure}

Figure~\ref{fig:exp-runtime-v2} shows the limited diagnostic we need: in these runs,
the dominant cost is training the contrastive score network, while de-anchoring and
Markovization are small post-processing costs.  This is not a large-scale systems
benchmark, and we do not make broader scalability claims.

\section{Conclusion, Limitations, and Outlook}
\label{sec:conclusion}

This paper introduced a Doeblin-anchored contrastive chart for learning Markov transition
kernels.  The chart organizes the learning problem through a sequence of quantitative
interfaces:
\[
\begin{aligned}
\text{contrastive risk}
&\Longrightarrow
\text{anchored density}
\\
&\Longrightarrow
\text{valid Markov kernel}
\Longrightarrow
\text{finite-horizon dynamics}.
\end{aligned}
\]
Given a restart law and an anchor strength, the anchored transition is simultaneously a
Doeblin-minorized Markov kernel, the positive conditional law in a binary contrastive
experiment, and an explicitly invertible coordinate for the original transition law.

Within this chart, the anchored contrastive risk identifies the anchored transition
density and calibrates excess risk to density error.  De-anchoring recovers a score for
the original transition law, while Markovization restores nonnegativity and row
normalization with only a constant-factor loss in integrated $L^1$ accuracy.  Standard
oracle inequalities and H\"older--ReLU approximation bounds pass through the same
interface.  Under explicit occupancy coverage, valid-kernel error further controls
finite-horizon marginal, path-law, and occupation-measure errors.  For stationary
geometrically $\beta$-mixing trajectories, a conservative thinning-and-coupling argument
preserves the reconstruction interface with an effective sample size.  The experiments
serve as diagnostics of these interfaces rather than as leaderboard benchmarks.

The present analysis has several limitations.  The main oracle theory is stated first
for independent transition pairs, and the trajectory extension analyzes a thinned ERM
rather than an estimator trained on every adjacent transition pair.  The restart law must
be simulable and sufficiently well supported; anchoring makes this role explicit but does
not remove the reference-selection problem.  The bounded-density assumptions provide
uniform curvature and concentration, but exclude heavier-tailed settings.  Finally, the
main dynamical guarantee is finite-horizon and occupancy-weighted.  It should not be
read as a general stationary-distribution perturbation theorem.

Several extensions are natural.  Sharper dependent-data analysis could treat
full-trajectory ERM, nonstationary starts, and optimized mixing-rate dependence.
Adaptive or data-dependent restart laws could improve statistical conditioning and,
under restart-controlled data collection or additional design assumptions, downstream
coverage.  Tail-localized arguments could relax the global boundedness assumptions,
while drift--minorization and nonuniform ergodicity techniques could extend the
dynamical analysis beyond finite horizons.  These directions can be pursued within the
same contrastive-to-valid-kernel interface.

\acks{This work is supported by the Zhongguancun Academy, Grant No.~C20250201.}

\appendix

\section{Proofs and Technical Extensions}
\label{app:deferred-proofs}

This appendix contains the proofs deferred from the main text and the auxiliary
lemmas used in those proofs.
\subsection{TV measurability}
\label{app:tv}
\begin{lemma}[Measurability of the TV kernel error]
\label{lem:tv-measurability}
Let $(\mathcal X,\mathcal B_{\mathcal X})$ be a measurable space and
$(\mathcal Y,\mathcal B_{\mathcal Y})$ be a standard Borel space.  Let $K,L$ be Markov
kernels from $\mathcal X$ to $\mathcal Y$.  Then the map
\[
x\mapsto\TV(K(x,\cdot),L(x,\cdot))
\]
is measurable.
\end{lemma}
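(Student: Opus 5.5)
The plan is to reduce the supremum over all of $\mathcal B_{\mathcal Y}$ to a supremum over a countable family of sets. Each function $x\mapsto K(x,B)-L(x,B)$ is measurable, so a countable supremum of their absolute values is measurable.

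Since $(\mathcal Y,\mathcal B_{\mathcal Y})$ is standard Borel, $\mathcal B_{\mathcal Y}$ is countably generated. Fix a countable generating family $\{G_m\}_{m\ge1}$ and let $\mathcal C$ be the algebra it generates. This algebra is countable, because it consists of finite unions of finite intersections of the $G_m$ and their complements. Define
\[
 f(x):=\sup_{C\in\mathcal C}\abs{K(x,C)-L(x,C)}.
\]
For each fixed $C$, the maps $x\mapsto K(x,C)$ and $x\mapsto L(x,C)$ are measurable by the definition of a Markov kernel. Hence $f$ is a countable supremum of measurable functions and is therefore measurable.

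The key step, and the only nontrivial one, is to show that $f(x)=\TV(K(x,\cdot),L(x,\cdot))$ for every $x$. Fix $x$ and put $P=K(x,\cdot)$ and $Q=L(x,\cdot)$. The inequality $f(x)\le\TV(P,Q)$ holds trivially, since $\mathcal C\subseteq\mathcal B_{\mathcal Y}$.

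For the reverse inequality, use the approximation lemma for finite measures. If $\mathcal C$ is an algebra generating the $\sigma$-algebra and $\rho$ is a finite measure, then for every $B\in\sigma(\mathcal C)$ and every $\epsilon>0$ there is $C\in\mathcal C$ with $\rho(B\triangle C)<\epsilon$. Apply this with $\rho=P+Q$. This gives
\[
 \abs{P(B)-Q(B)}\le\abs{P(C)-Q(C)}+P(B\triangle C)+Q(B\triangle C)\le f(x)+\epsilon.
\]
Since $B$ and $\epsilon$ are arbitrary, $\TV(P,Q)\le f(x)$.

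If one wants to avoid citing the approximation lemma, it can be proved by a monotone class argument. The collection of sets $B$ that are $\epsilon$-approximable for every $\epsilon$ contains $\mathcal C$ and is closed under complements. It is also closed under countable unions: finite unions are handled directly, and continuity of the finite measure $\rho$ reduces a countable union to a finite one. So this collection is a $\sigma$-algebra containing $\mathcal C$, hence equals $\mathcal B_{\mathcal Y}$.

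I do not expect any real obstacle. The one point to get right is using a single countable algebra $\mathcal C$ for all $x$ simultaneously. This is possible because $\mathcal C$ depends only on the countable generators and not on the measures $P$ and $Q$; the approximation of each Borel set is carried out separately for each fixed $x$.
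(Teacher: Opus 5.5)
Your proposal is correct and follows the same route as the paper: reduce the supremum to a countable algebra $\mathcal C$ generating $\mathcal B_{\mathcal Y}$, justify the reduction by approximating each Borel set in $(P+Q)$-measure by sets of $\mathcal C$, and conclude from a countable supremum of measurable functions. You also supply details the paper leaves implicit, namely the estimate $\abs{P(B)-Q(B)}\le\abs{P(C)-Q(C)}+(P+Q)(B\triangle C)$ and a good-sets proof of the approximation lemma.
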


\begin{proof}
On a standard Borel space, the \(\sigma\)-algebra \(\mathcal B_{\mathcal Y}\) is countably generated.
Let \(\mathcal C\subset \mathcal B_{\mathcal Y}\) be a countable algebra generating \(\mathcal B_{\mathcal Y}\).  For any two probability
measures \(P,Q\) on \(\mathcal Y\), the finite measure \(P+Q\) allows every
\(B\in\mathcal B_{\mathcal Y}\) to be approximated in \((P+Q)\)-measure by sets in
\(\mathcal C\). Hence
\[
  \TV(K(x,\cdot),L(x,\cdot))
  =\sup_{C\in\mathcal C}\abs{K(x,C)-L(x,C)},
\]
which is a countable supremum of measurable functions of $x$ and hence measurable.
\end{proof}

\subsection{Proof of Theorem~\ref{thm:chart}}
\label{app:chart}
\textit{\underline{Part (i).}}
Assume $0<\eps<1$.  If $K$ is a Markov kernel, then for every $x$ and $B$,
\[
  A_{\eps,\nu}K(x,B)=(1-\eps)K(x,B)+\eps\nu(B)
  \ge \eps\nu(B),
\]
so $A_{\eps,\nu}K$ lies in the stated image class.  Conversely, if $A$ is a Markov
kernel with $A(x,B)\ge\eps\nu(B)$, then $A(x,B)-\eps\nu(B)\ge0$ for all $x$ and $B$.
For each fixed $x$, define
\[
  K_x(B):=\frac{A(x,B)-\eps\nu(B)}{1-\eps}.
\]
Since $A(x,\cdot)-\eps\nu(\cdot)$ is a nonnegative finite measure and
\[
  K_x(\X)=\frac{A(x,\X)-\eps\nu(\X)}{1-\eps}=1,
\]
$K_x$ is a probability measure.  Moreover, for each $B$, the map $x\mapsto K_x(B)$
is measurable, so $K$ is a Markov kernel.  The displayed inverse formula is verified by substituting:
\[
  A_{\eps,\nu}^{-1}(A_{\eps,\nu}K)(x,B)
  =\frac{(1-\eps)K(x,B)+\eps\nu(B)-\eps\nu(B)}{1-\eps}
  =K(x,B).
\]

\noindent\textit{\underline{Part (ii).}}
For every design law $\mu$,
\begin{align*}
 d_{\mu,\TV}(A_{\eps,\nu}K,A_{\eps,\nu}L)
 &=\int\TV\paren{A_{\eps,\nu}K(x,\cdot),A_{\eps,\nu}L(x,\cdot)}\,\mu(\dd x)\\
 &=\int\TV\paren{(1-\eps)K(x,\cdot)+\eps\nu,\,(1-\eps)L(x,\cdot)+\eps\nu}\,\mu(\dd x).
\end{align*}
Since the signed difference satisfies
\[
\big((1-\eps)K(x,\cdot)+\eps\nu\big)-\big((1-\eps)L(x,\cdot)+\eps\nu\big)
=(1-\eps)\big(K(x,\cdot)-L(x,\cdot)\big),
\]
the total variation norm is positively homogeneous, and hence
\[
\TV((1-\eps)K(x,\cdot)+\eps\nu,\,(1-\eps)L(x,\cdot)+\eps\nu)
=(1-\eps)\TV(K(x,\cdot),L(x,\cdot)).
\]
Integrating gives $d_{\mu,\TV}(A_{\eps,\nu}K,A_{\eps,\nu}L)=(1-\eps)d_{\mu,\TV}(K,L)$.
The uniform version is analogous.

\noindent \textit{\underline{Part (iii).}}
If $r(y)>0$ and $a(y\mid x)>0$, then $\eta_{A,a}(x,y)=a(y\mid x)/\{a(y\mid x)+\tau r(y)\}$
is well-defined and satisfies $0<\eta_{A,a}<1$.  Solving for $a$ gives
$a=\tau r\eta/(1-\eta)$.  Substituting into the inverse anchor map and simplifying gives
the de-anchoring formula.  The almost-everywhere qualifier follows because all
identities hold for $\mu\otimes\lambda$-almost every $(x,y)$ when $r>0$ $\lambda$-a.e.

\subsection{Proof of Theorem~\ref{thm:calibration}}
\label{app:risk}
\noindent\textit{\underline{Part (i)}: Identification.}
Fix $(x,y)$ and abbreviate $p=a_0(y\mid x)$, $q=\tau r(y)$, $s=a(y\mid x)$.
The pointwise integrand of \eqref{eq:population-risk-integral}, up to an
$s$-independent term, equals
\[
  F_{p,q}(s):=-p\log s+(p+q)\log(s+q).
\]
One checks that
\[
  F'_{p,q}(s)=-\frac{p}{s}+\frac{p+q}{s+q}
  =\frac{q(s-p)}{s(s+q)}.
\]
Hence $s=p$ is the unique minimizer.  Since $F_{p,q}(s)-F_{p,q}(p)\ge 0$ pointwise
and equality holds if and only if $s=p$, integration implies that the population risk
is uniquely minimized by $a=a_0$, up to $\mu\otimes\lambda$-a.e.\ equality.

\noindent\textit{\underline{Part (ii)}: $L^2$ calibration.}
By Assumption~\ref{ass:bounds}, $a_0=(1-\eps)k_0+\eps r$ satisfies
$\eps r_-\le a_0(y\mid x)\le (1-\eps)M_0+\eps r_+$.  By Assumption~\ref{ass:candidate-bounds},
every $a\in\A_n$ lies in $[\gamma,\Gamma]$.  Hence the triples
$(p,q,s)$ with $p=a_0(y\mid x)$, $q=\tau r(y)$, and $s=a(y\mid x)$ lie in a compact
subset of $(0,\infty)^3$ bounded away from zero.  Define
\[
G(p,q,s)
:=
\begin{cases}
\dfrac{F_{p,q}(s)-F_{p,q}(p)}{(s-p)^2}, & s\ne p,\\[1ex]
\dfrac12 F''_{p,q}(p), & s=p.
\end{cases}
\]
Since $s=p$ is the unique minimizer of $F_{p,q}$, the numerator is strictly
positive whenever $s\ne p$.  The above definition extends continuously to $s=p$,
and $F''_{p,q}(p)>0$ on the admissible compact set.  Hence $G$ is continuous and
strictly positive on the compact admissible set.  Therefore there exist constants
$0<c_-\le c_+<\infty$, depending only on the boundedness constants, such that
\[
c_-(s-p)^2
\le
F_{p,q}(s)-F_{p,q}(p)
\le
c_+(s-p)^2 .
\]
Integrating gives the $L^2(\mu\otimes\lambda)$ calibration.

\noindent\textit{\underline{Part (iii)}: Contrastive KL representation.}
A direct algebraic computation shows
\begin{align*}
 &F_{p,q}(s)-F_{p,q}(p)\\
 &=p\log\frac{p}{s}+(p+q)\log\frac{s+q}{p+q}\\
 &=p\log\frac{p/(p+q)}{s/(s+q)}
  +q\log\frac{q/(p+q)}{q/(s+q)}\\
 &=(p+q)\KL\!\left(
\Ber\!\left(\frac{p}{p+q}\right)
\middle\|
\Ber\!\left(\frac{s}{s+q}\right)
\right),
\end{align*}
which is the integrand in \eqref{eq:contrastive-kl} after substituting
$p=a_0$, $q=\tau r$, $s=a$.

\noindent\textit{\underline{Part (iv)}: $L^1$ and Hellinger consequences.}
Assuming \(\lambda(\mathcal Y)=1\), the product measure \(\mu\otimes\lambda\) is a probability measure.  By Cauchy--Schwarz on \(\mu\otimes\lambda\),
\[
  \norm{a-a_0}_{1,\mu\lambda}^2\le\norm{a-a_0}_{2,\mu\lambda}^2
  \le c_-^{-1}\Excess(a).
\]
For Hellinger, since $a_0\ge\eps r_-$ and $a\ge\gamma$, both functions are bounded
below by $m_a=\min\{\eps r_-,\gamma\}>0$.  The pointwise bound
\[
  (\sqrt{a}-\sqrt{a_0})^2
  =\frac{(a-a_0)^2}{(\sqrt{a}+\sqrt{a_0})^2}
  \le\frac{(a-a_0)^2}{4m_a}
\]
integrates to the Hellinger calibration.  If $a$ is itself a transition density and $A_a(x,\dd y)=a(y\mid x)\lambda(\dd y)$, then
\[
d_{\mu,\TV}(A_a,A_{\eps,\nu}K_0)
=
\frac12\|a-a_0\|_{1,\mu\lambda}.
\]
Therefore
\[
d_{\mu,\TV}(A_a,A_{\eps,\nu}K_0)^2
\le
\frac14\|a-a_0\|_{1,\mu\lambda}^2
\le
\frac{1}{4c_-}\Excess(a).
\]

\subsection{Proof of Theorem~\ref{thm:oracle}}\label{app:erm}
Before presenting the complete proof of Theorem~\ref{thm:oracle}, we first introduce the following two lemmas.
\begin{lemma}
\label{lem:loss-cover}
Under Assumptions~\ref{ass:bounds} and \ref{ass:candidate-bounds}, there is a constant
$L_{\ell}<\infty$, depending only on $(\tau,r_-,r_+,\gamma,\Gamma)$, such that
for all $a,\bar a\in\A_n$ with $\norm{a-\bar a}_\infty\le\delta$,
\[
  \abs{\mathcal R(a)-\mathcal R(\bar a)}\le L_{\ell}\delta,
  \qquad
  \abs{\widehat{\mathcal R}_n(a)-\widehat{\mathcal R}_n(\bar a)}\le L_{\ell}\delta.
\]
\end{lemma}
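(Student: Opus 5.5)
The plan is to prove a pointwise Lipschitz bound for the block loss $g_a$ in the sup-norm of $a$. Both risk statements then follow by averaging: the empirical statement by averaging over the $n$ blocks, and the population statement by taking expectations.

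Fix $(x,y)$ and write $s=a(y\mid x)$, $\bar s=\bar a(y\mid x)$ and $q=\tau r(y)$. The two losses are
\[
\ell_a^+(x,y)=-\log s+\log(s+q),\qquad
\ell_a^-(x,y)=-\log q+\log(s+q).
\]
By Assumption~\ref{ass:candidate-bounds}, both $s$ and $\bar s$ lie in $[\gamma,\Gamma]$. By Assumption~\ref{ass:bounds}, $q\ge\tau r_->0$.

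On this set we bound the derivatives in $s$:
\[
\Bigl|\frac{\dd}{\dd s}\ell^+\Bigr|=\Bigl|-\frac1s+\frac1{s+q}\Bigr|=\frac{q}{s(s+q)}\le\frac1\gamma,
\qquad
\Bigl|\frac{\dd}{\dd s}\ell^-\Bigr|=\frac1{s+q}\le\frac1{\gamma+\tau r_-}\le\frac1\gamma .
\]
The mean value theorem then gives
\[
|\ell_a^\pm(x,y)-\ell_{\bar a}^\pm(x,y)|\le \gamma^{-1}|s-\bar s|\le\gamma^{-1}\delta
\]
for every $(x,y)$, and not merely almost everywhere. This matters because $Y$, $\widetilde Y$ and $Z_j$ are evaluated at arbitrary points, and the candidate bounds hold everywhere.

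Summing the terms of $g_a(B)$ with weights $(1-\eps)$, $\eps$ and $1$ (the last one $\tau$ times) gives
\[
|g_a(B)-g_{\bar a}(B)|\le (1+\tau)\gamma^{-1}\delta
\]
for every realization of $B$. So we may take $L_\ell:=(1+\tau)/\gamma$. This depends only on $(\tau,\gamma)$, which is a subset of the allowed constants, and $r_\pm,\Gamma$ enter only to guarantee finiteness.

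Averaging over the blocks $B_1,\dots,B_n$ gives the empirical bound deterministically. For the population bound, all losses are bounded: $\ell^+\le\log((\Gamma+\tau r_+)/\gamma)$ and $\ell^-\le\log((\Gamma+\tau r_+)/(\tau r_-))$. Hence $\mathcal R(a)$ and $\mathcal R(\bar a)$ are finite, and $|\mathcal R(a)-\mathcal R(\bar a)|\le\E|g_a(B)-g_{\bar a}(B)|\le L_\ell\delta$.

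No step is genuinely hard. The only point needing care is to use the everywhere (not $\mu\otimes\lambda$-a.e.) bounds on $a$ and $r$, because the auxiliary variables are drawn from $\nu$ rather than from $\lambda$. Since $\nu\ll\lambda$ and the bounds hold pointwise, this is harmless.
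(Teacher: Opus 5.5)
Your proposal is correct and follows essentially the same route as the paper: a mean-value-theorem Lipschitz bound on the scalar maps $s\mapsto\ell^\pm$ over $[\gamma,\Gamma]$, summed over the block with weights $(1-\eps)$, $\eps$, and $1$ for each of the $\tau$ negatives, then averaged for $\widehat{\mathcal R}_n$ and integrated for $\mathcal R$. The only difference is the constant. You bound both derivatives by $1/\gamma$ to get $L_\ell=(1+\tau)/\gamma$, whereas the paper takes $L_\ell=\tau r_+/\{\gamma(\gamma+\tau r_-)\}+\tau/(\gamma+\tau r_-)$; both choices are valid.
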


\begin{proof}
Recall that for any candidate $a\in\A_n$ and fixed $(x,y)$, the positive and negative loss components are
\[
  \ell_a^+(x,y) = -\log\frac{a(y\mid x)}{a(y\mid x)+\tau r(y)}, \qquad
  \ell_a^-(x,y) = -\log\frac{\tau r(y)}{a(y\mid x)+\tau r(y)}.
\]
For a fixed $(x,y)$, let $s = a(y\mid x)$ and $q = \tau r(y)$. Under Assumptions~\ref{ass:bounds} and \ref{ass:candidate-bounds}, we have $s \in [\gamma, \Gamma]$ and $q \in [\tau r_-, \tau r_+]$. Viewed as functions of $s$, the loss components are $f^+(s) := -\log(s) + \log(s+q)$ and $f^-(s) := -\log(q) + \log(s+q)$. 

Their derivatives with respect to $s$ are given by
\[
  \abs{\frac{\dd}{\dd s} f^+(s)} 
  = \abs{-\frac{1}{s} + \frac{1}{s+q}} 
  = \frac{q}{s(s+q)}, 
  \qquad
  \abs{\frac{\dd}{\dd s} f^-(s)} 
  = \frac{1}{s+q}.
\]
We can uniformly bound these derivatives on the admissible domain $[\gamma, \Gamma] \times [\tau r_-, \tau r_+]$:
\[
  \abs{\frac{\dd}{\dd s} f^+(s)} \le \frac{\tau r_+}{\gamma(\gamma + \tau r_-)} =: L^+,
  \qquad
  \abs{\frac{\dd}{\dd s} f^-(s)} \le \frac{1}{\gamma + \tau r_-} =: L^-.
\]
By the Mean Value Theorem, the loss functions are Lipschitz continuous in the candidate score $a(y\mid x)$. For any $a, \bar a \in \A_n$ with $\norm{a-\bar a}_\infty \le \delta$, it follows pointwise that
\[
  \abs{\ell_a^+(x,y) - \ell_{\bar a}^+(x,y)} \le L^+\delta, \qquad
  \abs{\ell_a^-(x,y) - \ell_{\bar a}^-(x,y)} \le L^-\delta.
\]
The empirical risk $\widehat{\mathcal R}_n$ is a linear combination of these loss terms. By the triangle inequality, for any realization of the augmented dataset,
\begin{align*}
  \abs{\widehat{\mathcal R}_n(a) - \widehat{\mathcal R}_n(\bar a)} 
  &\le \frac{1}{n}\sum_{i=1}^n \Bigg[ (1-\eps)\abs{\ell_a^+(X_i,Y_i) - \ell_{\bar a}^+(X_i,Y_i)} \\
  &\quad + \eps\abs{\ell_a^+(X_i,\widetilde Y_i) - \ell_{\bar a}^+(X_i,\widetilde Y_i)} + \sum_{j=1}^\tau \abs{\ell_a^-(X_i,Z_{ij}) - \ell_{\bar a}^-(X_i,Z_{ij})} \Bigg] \\
  &\le (1-\eps)L^+\delta + \eps L^+\delta + \tau L^-\delta \\
  &= (L^+ + \tau L^-)\delta.
\end{align*}
Setting $L_{\ell} := L^+ + \tau L^-$ yields the uniform bound for the empirical risk. Since this pointwise bound holds surely, integrating with respect to the population distribution immediately yields the identical bound $\abs{\mathcal R(a) - \mathcal R(\bar a)} \le L_{\ell}\delta$ for the population risk.
\end{proof}

\begin{lemma}[Entropy bound on the uniform deviation]
\label{prop:entropy}
Under Assumptions~\ref{ass:bounds} and \ref{ass:candidate-bounds}, there exists a constant
\(B_\ell<\infty\), depending only on \((\tau,r_-,r_+,\gamma,\Gamma)\), such that, with
\[
  \Delta_n:=\E\sup_{a\in\A_n}
  \abs{\widehat{\mathcal R}_n(a)-\mathcal R(a)},
  \]
for every \(\delta>0\),
\[
  \Delta_n
  \le
  2L_{\ell}\delta
  +
  B_{\ell}
  \sqrt{\frac{2\log(2N^{\mathrm{int}}_\infty(\delta,\mathcal A_n))}{n}} .
\]
\end{lemma}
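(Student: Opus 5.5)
The plan is a discretization-plus-finite-class argument. Fix $\delta>0$ and let $a_1,\ldots,a_N\in\A_n$ be an internal sup-norm $\delta$-net with $N=\Nint(\delta,\A_n)$; if $N=\infty$ there is nothing to prove. For any $a\in\A_n$, choose $j$ with $\norm{a-a_j}_\infty\le\delta$. Lemma~\ref{lem:loss-cover} then gives, surely,
\[
\abs{\widehat{\mathcal R}_n(a)-\mathcal R(a)}
\le \abs{\widehat{\mathcal R}_n(a)-\widehat{\mathcal R}_n(a_j)}+\abs{\widehat{\mathcal R}_n(a_j)-\mathcal R(a_j)}+\abs{\mathcal R(a_j)-\mathcal R(a)}
\le 2L_\ell\delta+\max_{j\le N}\abs{\widehat{\mathcal R}_n(a_j)-\mathcal R(a_j)}.
\]
Taking the supremum over $a$ and then expectations reduces the claim to bounding $\E\max_{j\le N}\abs{\widehat{\mathcal R}_n(a_j)-\mathcal R(a_j)}$. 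The net is internal, so each $a_j$ obeys Assumption~\ref{ass:candidate-bounds}; this is why the internal covering number is used.

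Next, establish boundedness of the block loss. For $s\in[\gamma,\Gamma]$ and $q=\tau r(y)\in[\tau r_-,\tau r_+]$, both $\ell^+=\log((s+q)/s)$ and $\ell^-=\log((s+q)/q)$ are nonnegative and bounded above by $\log(1+\tau r_+/\gamma)$ and $\log(1+\Gamma/(\tau r_-))$, respectively. Hence $0\le g_a(B)\le b:=\log(1+\tau r_+/\gamma)+\tau\log(1+\Gamma/(\tau r_-))$ surely for every $a\in\A_n$, with $b$ depending only on $(\tau,r_-,r_+,\gamma,\Gamma)$.

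For each fixed $j$, $\widehat{\mathcal R}_n(a_j)-\mathcal R(a_j)$ is an average of $n$ i.i.d.\ centered variables $g_{a_j}(B_i)-\E g_{a_j}(B)$, each taking values in an interval of length $b$. By Hoeffding's lemma, both it and its negative are sub-Gaussian with variance proxy $b^2/(4n)$. Apply the standard maximal inequality for $2N$ sub-Gaussian variables (the $N$ deviations and their negatives):
\[
\E\max_{j}\abs{\widehat{\mathcal R}_n(a_j)-\mathcal R(a_j)}\le \frac{b}{2}\sqrt{\frac{2\log(2N)}{n}}.
\]
Setting $B_\ell:=b/2$, or simply $b$ for slack, gives the stated bound.

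The main obstacle is making this rigorous rather than computational. The measurability of the supremum is taken as assumed in the paper, or read as an outer expectation, and the discretization step holds pointwise, so it passes through the outer expectation. One should also check that Lemma~\ref{lem:loss-cover} applies with the net points inside $\A_n$, which holds because the net is internal. The remaining steps, Hoeffding's lemma and the $\sqrt{2\sigma^2\log(2N)}$ maximal inequality obtained from Jensen's inequality applied to the log-moment-generating function, are routine.
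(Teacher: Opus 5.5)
Your proposal is correct and follows the paper's proof essentially step for step. The shared steps are the internal sup-norm $\delta$-net, Lemma~\ref{lem:loss-cover} for the $2L_\ell\delta$ discretization term, uniform boundedness of the block loss, Hoeffding's lemma, and the maximal inequality over the $2N$ variables $\pm(\widehat{\mathcal R}_n(a_j)-\mathcal R(a_j))$. Your explicit bound $0\le g_a(B)\le \log(1+\tau r_+/\gamma)+\tau\log(1+\Gamma/(\tau r_-))$ makes $B_\ell$ concrete and visibly independent of $\eps$, which is slightly cleaner than the paper's ``enlarging $B_\ell$ if necessary.''
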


\begin{proof}
Since \(a\in[\gamma,\Gamma]\) and \(r\in[r_-,r_+]\), both \(\ell_a^+\) and \(\ell_a^-\) are uniformly bounded.
Moreover, the one-block loss \(g_a(B)\) is a fixed finite linear combination of these bounded terms,
so there exists a constant \(B_\ell<\infty\) such that \(|g_a(B)|\le B_\ell\) uniformly over \(a\in\A_n\).

Let $\{a_1,\ldots,a_N\}$ be a $\delta$-cover of $\A_n$ in the sup-norm, where $N=N^{\mathrm{int}}_\infty(\delta,\mathcal A_n)$.
For any arbitrary $a \in \A_n$, we can select a cover point $a_j$ such that $\norm{a-a_j}_\infty \le \delta$. 
By the bounded loss and Lipschitz covering reduction (Lemma~\ref{lem:loss-cover}), we have both
\[
  \abs{\widehat{\mathcal R}_n(a)-\widehat{\mathcal R}_n(a_j)} \le L_{\ell}\delta 
  \quad \text{and} \quad 
  \abs{\mathcal R(a)-\mathcal R(a_j)} \le L_{\ell}\delta.
\]
Applying the triangle inequality, the uniform deviation can be bounded by the maximum deviation over the finite cover:
\begin{align*}
  \sup_{a\in\A_n}\abs{\widehat{\mathcal R}_n(a)-\mathcal R(a)}
  &\le \sup_{a\in\A_n} \left( \abs{\widehat{\mathcal R}_n(a)-\widehat{\mathcal R}_n(a_j)} + \abs{\widehat{\mathcal R}_n(a_j)-\mathcal R(a_j)} + \abs{\mathcal R(a_j)-\mathcal R(a)} \right) \\
  &\le \max_{j=1,\ldots,N}\abs{\widehat{\mathcal R}_n(a_j)-\mathcal R(a_j)} + 2L_{\ell}\delta.
\end{align*}
Taking expectations on both sides, it remains to bound the expected maximum of the empirical processes. 
Since \(|g_{a_j}(B)|\le B_\ell\), the centered variables
\(g_{a_j}(B)-\mathbb E g_{a_j}(B)\) are uniformly bounded by \(2B_\ell\).
Enlarging \(B_\ell\) if necessary, Hoeffding's lemma implies that
\(\widehat{\mathcal R}_n(a_j)-\mathcal R(a_j)\) is sub-Gaussian with variance proxy
of order \(B_\ell^2/n\).
Since $\abs{Z} = \max\{Z, -Z\}$, the maximum of the absolute values corresponds to the maximum of $2N$ sub-Gaussian random variables. By the standard maximal inequality for finite sub-Gaussian classes (e.g., Massart's lemma \citep{boucheron2013concentration}), we obtain
\[
  \E \left[ \max_{j=1,\ldots,N}\abs{\widehat{\mathcal R}_n(a_j)-\mathcal R(a_j)} \right] 
  \le B_{\ell}\sqrt{\frac{2\log(2N)}{n}}.
\]
Adding the $2L_{\ell}\delta$ deterministic approximation error yields the stated entropy bound.
\end{proof}

\noindent We now present the complete proof of Theorem~\ref{thm:oracle}.

\noindent\textit{\underline{Part (i).}}
Fix any $a^\star\in\A_n$.  Since $\widehat a_n$ is
$\delta_{\mathrm{opt}}$-optimal for the empirical risk,
\[
   \widehat{\mathcal R}_n(\widehat a_n)
      \le \widehat{\mathcal R}_n(a^\star)+\delta_{\mathrm{opt}}.
\]
Add and subtract population risks to obtain
\begin{align*}
 \mathcal R(\widehat a_n)-\mathcal R(a_0)
 &\le
   \bracks{\mathcal R(\widehat a_n)-\widehat{\mathcal R}_n(\widehat a_n)}
   +\delta_{\mathrm{opt}}
   +\bracks{\widehat{\mathcal R}_n(a^\star)-\mathcal R(a^\star)}
   +\bracks{\mathcal R(a^\star)-\mathcal R(a_0)} \\
 &\le
   2\sup_{a\in\A_n}\abs{\widehat{\mathcal R}_n(a)-\mathcal R(a)}
   +\delta_{\mathrm{opt}}
   +\bracks{\mathcal R(a^\star)-\mathcal R(a_0)}.
\end{align*}
Theorem~\ref{thm:calibration}(ii) and the upper calibration bound give
\[
 \norm{\widetilde k_{\widehat a_n}-k_0}_{2,\mu\lambda}^2
 \le \frac{2}{(1-\eps)^2c_-}
      \sup_{a\in\A_n}\abs{\widehat{\mathcal R}_n(a)-\mathcal R(a)}
    +\frac{\delta_{\mathrm{opt}}}{(1-\eps)^2c_-}
    +\frac{c_+}{(1-\eps)^2c_-}
      \norm{a^\star-a_0}_{2,\mu\lambda}^2.
\]
Taking expectations on both sides and taking the infimum over $a^\star \in \A_n$, we obtain, by the definition of the expected uniform deviation 
\[
  \Delta_n:=\E\sup_{a\in\A_n} \abs{\widehat{\mathcal R}_n(a)-\mathcal R(a)},
\]
the following intermediate bound:
\[         
\E\norm{\widetilde k_{\widehat a_n}-k_0}_{2,\mu\lambda}^2         
\le              
\frac{2\Delta_n+\delta_{\mathrm{opt}}}{(1-\eps)^2c_-}                
+\frac{c_+}{(1-\eps)^2c_-}                     
\inf_{a\in\A_n}\norm{a-a_0}_{2,\mu\lambda}^2.      
\]  
Finally, applying Lemma~\ref{prop:entropy} to upper-bound $\Delta_n$ yields the stated entropy bound.

\noindent\textit{\underline{Part (ii).}}
The blocks are i.i.d. by construction.  By Assumption~\ref{ass:bounds}, $a_0$ is
bounded above and below by positive constants, and by Assumption~\ref{ass:candidate-bounds},
every $a\in\A_n$ lies in $[\gamma,\Gamma]$, while
\[
    \eps r_- \le a_0(y\mid x)
    \le (1-\eps)M_0+\eps r_+
\]
almost everywhere. Hence both \(a\) and \(a_0\) take values in the common compact
interval
\[
    [m,M],
    \qquad
    m:=\min\{\gamma,\eps r_-\},
    \quad
    M:=\max\{\Gamma,(1-\eps)M_0+\eps r_+\}.
\]
On this interval the two scalar loss maps are uniformly Lipschitz.
Therefore, the maps
\[
s\mapsto -\log\frac{s}{s+\tau r(y)},
\qquad
s\mapsto -\log\frac{\tau r(y)}{s+\tau r(y)}
\]
are uniformly Lipschitz in $s$ over the admissible range, with a Lipschitz constant
depending only on $(\eps,\tau,r_-,r_+,M_0,\gamma,\Gamma)$.  Hence,
\[
|\ell_a^\pm(X,U)-\ell_{a_0}^\pm(X,U)|
\le
L\,|a(U\mid X)-a_0(U\mid X)| \  \text{a.s.}
\]
Therefore
\begin{align*}
    |g_a(B)-g_{a_0}(B)|&\le
L\left(
(1-\eps)|a(Y\mid X)-a_0(Y\mid X)|
+\eps |a(\widetilde Y\mid X)-a_0(\widetilde Y\mid X)|
\right)\\
&+L\left(
\sum_{j=1}^{\tau}|a(Z_j\mid X)-a_0(Z_j\mid X)|
\right).
\end{align*}
Since all terms are uniformly bounded, this also gives the almost-sure bound
\[
|g_a(B)-g_{a_0}(B)|\le B_h.
\]
For the second moment, use $(u_1+\cdots+u_m)^2\le m\sum_i u_i^2$.  Conditional on
$X=x$, the $Y$-term is integrated against $K_0(x,dy)=k_0(y\mid x)\lambda(dy)$,
and the reference terms are integrated against $\nu(dy)=r(y)\lambda(dy)$.  Since
$k_0\le M_0$ and $r\le r_+$, all these integrals are bounded by a constant times
\[
\int |a(y\mid x)-a_0(y\mid x)|^2\,\lambda(dy).
\]
Integrating over $x\sim\mu$ yields
\[
\mathbb E[(g_a(B)-g_{a_0}(B))^2]
\le
C\|a-a_0\|_{2,\mu\lambda}^2.
\]
By the lower calibration inequality in Theorem~\ref{thm:calibration},
\[
\|a-a_0\|_{2,\mu\lambda}^2\le c_-^{-1}\Excess(a),
\]
and hence
\[
\mathbb E[(g_a(B)-g_{a_0}(B))^2]
\le
V_\ell \Excess(a).
\]

\noindent\textit{\underline{Part (iii).}}
Let $h_a:=g_a-g_{a_0}$.  Then $\E h_a=\Excess(a)$.  By part (ii), every centered variable
$h_a(B)-\E h_a(B)$ is bounded in absolute value by $2B_h$ and has variance at most
$V_{\ell}\Excess(a)$.  The standard bounded Bernstein inequality
\citep{boucheron2013concentration,vershynin2018high} gives,
for fixed $a$ and $t>0$,
\[
 \Prob\left(
   \abs{(\widehat{\mathcal R}_n-\mathcal R)(a)
        -(\widehat{\mathcal R}_n-\mathcal R)(a_0)}
   >
   \sqrt{\frac{2V_{\ell}\Excess(a)t}{n}}
   +\frac{4B_h t}{3n}
 \right)
 \le 2e^{-t}.
\]
The inequality $\sqrt{xy}\le x/8+2y$ for $x,y\ge0$ implies that the deviation on the
left is at most
\[
        \frac14\Excess(a)+C_0\frac{t}{n}
\]
outside the same exceptional event, for a constant $C_0$ depending only on $V_{\ell}$
and $B_h$.

Let $a_1,\ldots,a_N\in\A_n$ be an internal sup-norm
$\delta$-cover of $\A_n$, where
\[
    N=N^{\mathrm{int}}_\infty(\delta,\A_n).
\]
Set
\[
    u:=\log\frac{2N}{\eta}.
\]
Applying the preceding bound with $t=u$ and taking a union bound over the
cover, we obtain an event $\mathcal E$ with $\Prob(\mathcal E)\ge 1-\eta$
such that, on $\mathcal E$, simultaneously for $j=1,\ldots,N$,
\begin{equation}
 \abs{(\widehat{\mathcal R}_n-\mathcal R)(a_j)
        -(\widehat{\mathcal R}_n-\mathcal R)(a_0)}
 \le \frac14\Excess(a_j)+C_0\frac{u}{n}.
 \label{eq:net-fast-event}
\end{equation}
\noindent Work on the event $\mathcal E$.  For any $a\in\A_n$, choose a cover point
$\bar a\in\{a_1,\ldots,a_N\}$ such that
\[
    \norm{a-\bar a}_\infty\le\delta .
\]
By Lemma~\ref{lem:loss-cover},
\begin{equation}
        \abs{\mathcal R(a)-\mathcal R(\bar a)}
        \le L_{\ell}\delta,
        \qquad
        \abs{\widehat{\mathcal R}_n(a)-\widehat{\mathcal R}_n(\bar a)}
        \le L_{\ell}\delta .
\label{eq:cover-lipschitz-risk}
\end{equation}

Let $\bar a_n$ be a cover point of $\widehat a_n$, and fix an arbitrary
comparator $a^\star\in\A_n$ with cover point $\bar a^\star$.  By
\eqref{eq:cover-lipschitz-risk} and the $\delta_{\mathrm{opt}}$-optimality
of $\widehat a_n$,
\begin{align}
 \widehat{\mathcal R}_n(\bar a_n)
 &\le
  \widehat{\mathcal R}_n(\widehat a_n)+L_{\ell}\delta \notag \\
 &\le
 \widehat{\mathcal R}_n(a^\star)+L_{\ell}\delta+\delta_{\mathrm{opt}}
 \notag \\
 &\le
 \widehat{\mathcal R}_n(\bar a^\star)+2L_{\ell}\delta+\delta_{\mathrm{opt}} .
\label{eq:empirical-net-comparison}
\end{align}

We now compare the excess risks on the net.  Applying
\eqref{eq:net-fast-event} to $\bar a_n$ gives
\begin{align}
 \Excess(\bar a_n)
 &=
 \mathcal R(\bar a_n)-\mathcal R(a_0) \notag \\
 &\le
 \widehat{\mathcal R}_n(\bar a_n)-\widehat{\mathcal R}_n(a_0)
 +
 \frac14\Excess(\bar a_n)+C_0\frac{u}{n}.
\label{eq:excess-net-an}
\end{align}
Combining \eqref{eq:excess-net-an} with
\eqref{eq:empirical-net-comparison}, we obtain
\begin{align}
 \Excess(\bar a_n)
 &\le
 \widehat{\mathcal R}_n(\bar a^\star)-\widehat{\mathcal R}_n(a_0)
 +2L_{\ell}\delta+\delta_{\mathrm{opt}}
 +\frac14\Excess(\bar a_n)+C_0\frac{u}{n}.
\label{eq:excess-net-before-comparator}
\end{align}
Applying \eqref{eq:net-fast-event} to $\bar a^\star$ gives
\begin{equation}
 \widehat{\mathcal R}_n(\bar a^\star)-\widehat{\mathcal R}_n(a_0)
 \le
 \Excess(\bar a^\star)
 +
 \frac14\Excess(\bar a^\star)
 +
 C_0\frac{u}{n}.
\label{eq:comparator-net-bound}
\end{equation}
Substituting \eqref{eq:comparator-net-bound} into
\eqref{eq:excess-net-before-comparator} yields
\begin{align}
 \Excess(\bar a_n)
 &\le
 \frac54\Excess(\bar a^\star)
 +2L_{\ell}\delta
 +\delta_{\mathrm{opt}}
 +\frac14\Excess(\bar a_n)
 +2C_0\frac{u}{n}.
\end{align}
Rearranging gives
\begin{equation}
 \Excess(\bar a_n)
 \le
 \frac53\Excess(\bar a^\star)
 +\frac83L_{\ell}\delta
 +\frac43\delta_{\mathrm{opt}}
 +\frac83C_0\frac{u}{n}.
\label{eq:net-oracle-intermediate}
\end{equation}

By \eqref{eq:cover-lipschitz-risk},
\begin{equation}
    \Excess(\widehat a_n)
    \le
    \Excess(\bar a_n)+L_{\ell}\delta,
    \qquad
    \Excess(\bar a^\star)
    \le
    \Excess(a^\star)+L_{\ell}\delta .\label{eq:excess-return-from-net}
\end{equation}
Combining \eqref{eq:net-oracle-intermediate} and
\eqref{eq:excess-return-from-net}, we obtain
\begin{equation}
 \Excess(\widehat a_n)
 \le
 \frac53\Excess(a^\star)
 +\frac{16}{3}L_{\ell}\delta
 +\frac43\delta_{\mathrm{opt}}
 +\frac83C_0\frac{u}{n}. \notag
\end{equation}
Since $a^\star\in\A_n$ was arbitrary, taking the infimum over
$a^\star\in\A_n$ and using $u=\log(2N/\eta)$ gives
\begin{equation}
\Excess(\widehat a_n)
\le
\frac53\inf_{a\in\A_n}\Excess(a)
+
C_{\mathrm{fast}}
\left[
   L_{\ell}\delta
   +\delta_{\mathrm{opt}}
   +\frac{\log\{2N^{\mathrm{int}}_\infty(\delta,\A_n)/\eta\}}{n}
\right],
\notag
\end{equation}
with probability at least $1-\eta$.  Enlarging $C_{\mathrm{fast}}$ if
necessary, this can be written as
\begin{equation}
\Excess(\widehat a_n)
\le
C_{\mathrm{fast}}
\left[
   \inf_{a\in\A_n}\Excess(a)
   +L_{\ell}\delta
   +\delta_{\mathrm{opt}}
   +\frac{\log\{2N^{\mathrm{int}}_\infty(\delta,\A_n)/\eta\}}{n}
\right]. \label{eq:fast-oracle-risk-absorbed}
\end{equation}
By Theorem~\ref{thm:calibration}(ii), on the same event,
\[
   \norm{\widetilde k_{\widehat a_n}-k_0}_{2,\mu\lambda}^2
   \le
   \frac{1}{(1-\eps)^2c_-}\Excess(\widehat a_n).
\]
Combining this inequality with \eqref{eq:fast-oracle-risk-absorbed} yields
\[
   \norm{\widetilde k_{\widehat a_n}-k_0}_{2,\mu\lambda}^2
   \le
   \frac{C_{\mathrm{fast}}}{(1-\eps)^2c_-}
   \left[
      \inf_{a\in\A_n}\Excess(a)
      +L_{\ell}\delta
      +\delta_{\mathrm{opt}}
      +
      \frac{
        \log\{2N^{\mathrm{int}}_\infty(\delta,\A_n)/\eta\}
      }{n}
   \right].
\]
All numerical constants are absorbed into $C_{\mathrm{fast}}$, which depends
only on the boundedness constants and on $\tau$.

\subsection{Proof of Theorem~\ref{thm:markovization}}
\label{app:markov}

Define
\[
    c_g(x):=\int g^+(y\mid x)\,\lambda(dy).
\]
By the assumed \(L^1(\lambda)\)-integrability of \(g(\cdot\mid x)\), we have
\(0\le c_g(x)<\infty\) for every \(x\).  Joint measurability of
\(\mathfrak M g\) follows because \(g^+\) is jointly measurable and
\(x\mapsto c_g(x)\) is measurable; the ratio \(g^+(y\mid x)/c_g(x)\) is
measurable on the set \(\{x:c_g(x)>0\}\), and the fallback \(r(y)\) is jointly
measurable.  In either case, \(\mathfrak M g(\cdot\mid x)\ge0\) and
\[
    \int \mathfrak M g(y\mid x)\,\lambda(dy)=1.
\]
Thus \(\mathfrak M g\) is a transition density.

For the \(L^1\) bound, fix \(x\).  If \(c_g(x)=0\), then \(g^+=0\)
\(\lambda\)-a.e., hence \(g\le0\) \(\lambda\)-a.e.  Since \(k\) is a
transition density,
\[
    \int |g-k|\,d\lambda
    \ge
    \int k\,d\lambda
    =
    1.
\]
On the other hand, \(\mathfrak M g=r\) is a probability density, and therefore
\[
    \int |\mathfrak M g-k|\,d\lambda
    \le
    \int \mathfrak M g\,d\lambda+\int k\,d\lambda
    =
    2
    \le
    2\int |g-k|\,d\lambda.
\]

Now suppose \(c_g(x)>0\).  Write
\[
    \delta:=\int |g-k|\,d\lambda,
    \qquad
    \delta_+:=\int |g^+-k|\,d\lambda.
\]
Since \(k\ge0\), replacing \(g\) by \(g^+\) cannot increase the distance to
\(k\).  Hence
\[
    \delta_+\le\delta.
\]
Then
\[
\begin{aligned}
    \int\left|\frac{g^+}{c_g}-k\right|\,d\lambda
    &\le
    \int\left|\frac{g^+}{c_g}-g^+\right|\,d\lambda
    +
    \int |g^+-k|\,d\lambda \\
    &=
    |1-c_g|+\delta_+.
\end{aligned}
\]
Moreover,
\[
    |1-c_g|
    =
    \left|\int k\,d\lambda-\int g^+\,d\lambda\right|
    \le
    \int |k-g^+|\,d\lambda
    =
    \delta_+.
\]
Therefore
\[
    \int |\mathfrak M g-k|\,d\lambda
    \le
    2\delta_+
    \le
    2\delta.
\]
This proves the pointwise \(L^1\) bound.  Integrating with respect to \(\mu\)
gives
\[
    \|\mathfrak M g-k\|_{1,\mu\lambda}
    \le
    2\|g-k\|_{1,\mu\lambda}.
\]

Taking \(g=\widetilde k_{\widehat a_n}\) and \(k=k_0\), we get
\[
    \|\widehat k_n-k_0\|_{1,\mu\lambda}
    \le
    2\|\widetilde k_{\widehat a_n}-k_0\|_{1,\mu\lambda}.
\]
Since total variation is one half of the \(L^1\) distance between densities,
\[
    d_{\mu,\TV}(\widehat K_n,K_0)
    \le
    \|\widetilde k_{\widehat a_n}-k_0\|_{1,\mu\lambda}.
\]
By Cauchy--Schwarz,
\[
    d_{\mu,\TV}(\widehat K_n,K_0)^2
    \le
    \|\widetilde k_{\widehat a_n}-k_0\|_{2,\mu\lambda}^2.
\]
The bound \eqref{eq:markovized-fast-rate} follows from
Theorem~\ref{thm:oracle}(iii).

\subsection{Proof of Theorem~\ref{thm:holder-minimax}}
\label{app:rate-minimax}
Set $\overline d:=2d$.  We first prove the high-probability upper bound.

Let
\[
        S_n:=\left\lceil n^{\overline d/(2\beta+\overline d)}\right\rceil,
        \qquad
        \A_n:=\A(S_n).
\]
By Assumption~\ref{ass:holder-relu}, the class $\A_n$ consists of jointly
measurable functions taking values in $[\gamma,\Gamma]$, so
Assumption~\ref{ass:candidate-bounds} holds for $\A_n$.  The same assumption
also gives an approximant $a_n^\circ\in\A_n$ satisfying
\[
        \norm{a_n^\circ-a_0}_{\infty}
        \le
        C_{\mathrm{app}}S_n^{-\beta/\overline d}
        (\log S_n)^{\kappa_{\mathrm{app}}}.
\]
Since $\mu\otimes\lambda$ is a probability measure, this implies
\[
        \norm{a_n^\circ-a_0}_{2,\mu\lambda}^2
        \le
        C_{\mathrm{app}}^2
        S_n^{-2\beta/\overline d}
        (\log S_n)^{2\kappa_{\mathrm{app}}}.
\]
By the upper calibration bound in Theorem~\ref{thm:calibration}(ii),
\[
        \inf_{a\in\A_n}\Excess(a)
        \le
        c_+\inf_{a\in\A_n}
        \norm{a-a_0}_{2,\mu\lambda}^2
        \le
        C
        S_n^{-2\beta/\overline d}
        (\log S_n)^{2\kappa_{\mathrm{app}}}.
\]
Here and below $C<\infty$ denotes a generic constant independent of $n$ and
$\eta$.

Apply Theorem~\ref{thm:oracle}(iii) with covering radius $\delta=n^{-1}$.
With probability at least $1-\eta$,
\[
\Excess(\widehat a_n)
\le
C_{\mathrm{fast}}
\left[
   \inf_{a\in\A_n}\Excess(a)
   +L_{\ell}n^{-1}
   +\delta_{\mathrm{opt}}
   +
   \frac{
      \log\{2N^{\mathrm{int}}_\infty(n^{-1},\A_n)/\eta\}
   }{n}
\right].
\]
By Assumption~\ref{ass:holder-relu},
\[
        \log N^{\mathrm{int}}_\infty(n^{-1},\A_n)
        \le
        C_{\mathrm{ent}}S_n(\log n)^{\kappa_{\mathrm{ent}}},
        \qquad n\ge3.
\]
Therefore, on the same event,
\[
\Excess(\widehat a_n)
\le
C
\left[
        S_n^{-2\beta/\overline d}
        (\log S_n)^{2\kappa_{\mathrm{app}}}
        +
        \frac{1}{n}
        +
        \delta_{\mathrm{opt}}
        +
        \frac{S_n(\log n)^{\kappa_{\mathrm{ent}}}}{n}
        +
        \frac{\log(1/\eta)}{n}
\right].
\]
The choice of $S_n$ gives
\[
        S_n^{-2\beta/\overline d}
        \le
        C n^{-2\beta/(2\beta+\overline d)},
        \qquad
        \frac{S_n}{n}
        \le
        C n^{-2\beta/(2\beta+\overline d)}.
\]
Since $2\beta/(2\beta+\overline d)<1$, the term $n^{-1}$ is also bounded by a
constant multiple of $n^{-2\beta/(2\beta+\overline d)}$ for $n\ge3$.  Absorbing
all logarithmic powers into a single exponent $\kappa<\infty$, we obtain
\[
\Excess(\widehat a_n)
\le
C
\left[
        n^{-2\beta/(2\beta+\overline d)}(\log n)^\kappa
        +
        \delta_{\mathrm{opt}}
        +
        \frac{\log(1/\eta)}{n}
\right].
\]
By Theorem~\ref{thm:calibration}(ii),
\[
        \norm{\widetilde k_n-k_0}_{2,\mu\lambda}^2
        \le
        \frac{1}{(1-\eps)^2c_-}
        \Excess(\widehat a_n).
\]
Thus, enlarging $C$ if necessary,
\[
        \norm{\widetilde k_n-k_0}_{2,\mu\lambda}^2
        \le
        C
        \left[
          n^{-2\beta/(2\beta+\overline d)}(\log n)^\kappa
          +
          \delta_{\mathrm{opt}}
          +
          \frac{\log(1/\eta)}{n}
        \right].
\]

For the Markovized estimator, we use Theorem~\ref{thm:markovization} directly.
Applying \eqref{eq:markovized-fast-rate} with $\A_n=\A(S_n)$ and
$\delta=n^{-1}$ gives, on the same event,
\[
 d_{\mu,\TV}(\widehat K_n,K_0)^2
 \le
 \frac{C_{\mathrm{fast}}}{(1-\eps)^2c_-}
        \left[
          \inf_{a\in\A_n}\Excess(a)
          +L_{\ell}n^{-1}
          +\delta_{\mathrm{opt}}
          +\frac{
             \log\{2N^{\mathrm{int}}_\infty(n^{-1},\A_n)\}
             +\log(2/\eta)
           }{n}
        \right].
\]
Using the same approximation and entropy estimates as above, we obtain
\[
        d_{\mu,\TV}(\widehat K_n,K_0)^2
        \le
        C
        \left[
          n^{-2\beta/(2\beta+\overline d)}(\log n)^\kappa
          +
          \delta_{\mathrm{opt}}
          +
          \frac{\log(1/\eta)}{n}
        \right].
\]
Combining the last two displays and recalling that $\overline d=2d$, we obtain
\[
\norm{\widetilde k_n-k_0}_{2,\mu\lambda}^2
\vee
d_{\mu,\TV}(\widehat K_n,K_0)^2
\le
C
\left[
        n^{-2\beta/(2\beta+2d)}(\log n)^\kappa
        +
        \delta_{\mathrm{opt}}
        +
        \frac{\log(1/\eta)}{n}
\right].
\]
This proves the high-probability upper bound.

It remains to prove the minimax lower bound.  It suffices to prove the lower
bound on the submodel where $\mu=\lambda$ is Lebesgue probability measure on
$[0,1]^d$ and $r\equiv1$, since a lower bound over a subclass is also a lower
bound for the full class.

Choose nonzero functions
$\varphi\in C_c^\infty((0,1)^d)$ and
$\psi\in C_c^\infty((0,1)^d)$ such that
\[
        \int_{[0,1]^d}\psi(y)\,dy=0.
\]
Set
\[
        b(x,y):=\varphi(x)\psi(y).
\]
Then $b\in C_c^\infty((0,1)^{2d})$, $b\not\equiv0$, and
\[
        \int_{[0,1]^d} b(x,y)\,dy=0
        \qquad\text{for every }x.
\]
For sufficiently small $h>0$, choose $J_h\ge c h^{-2d}$ disjoint translates of
the support of $b$ inside $[0,1]^{2d}$ and define
\[
        b_{j,h}(z)
        :=
        h^\beta b\left(\frac{z-z_j}{h}\right),
        \qquad
        z=(x,y),
        \qquad
        j=1,\ldots,J_h.
\]
The supports are disjoint, and
\[
        \int b_{j,h}(x,y)\,dy=0
        \qquad\text{for every }x,
\]
while
\[
        \norm{b_{j,h}}_2^2
        =
        h^{2\beta+2d}\norm{b}_2^2.
\]
The standard separated-bump scaling estimates imply that
\[
        \sum_{j=1}^{J_h}\theta_j b_{j,h}
\]
has uniformly bounded $\mathcal H_{2d}^{\beta}$-norm over all
$\theta\in\{0,1\}^{J_h}$.

Choose $\omega>0$ sufficiently small, depending only on
$(d,\beta,R,m,M)$, so that for all sufficiently small $h$,
\[
        k_\theta(x,y)
        :=
        1+\omega\sum_{j=1}^{J_h}\theta_j b_{j,h}(x,y),
        \qquad
        \theta\in\{0,1\}^{J_h},
\]
belongs to $\K_\beta(R,m,M)$.  The zero-integral condition gives
\[
        \int_{[0,1]^d} k_\theta(y\mid x)\,dy=1
        \qquad\text{for every }x,
\]
and the smallness of $\omega$ ensures
\[
        m\le k_\theta(y\mid x)\le M.
\]

Let $P_\theta$ be the one-sample law of $(X,Y)$ under $k_\theta$, and let
$P_\theta^n$ be the law of $n$ independent transition pairs.  If
$\theta^{(j)}$ differs from $\theta$ only in the $j$-th coordinate, then
\[
        k_\theta-k_{\theta^{(j)}}=\pm\omega b_{j,h}.
\]
Since $k_{\theta^{(j)}}\ge m$,
\[
\KL(P_\theta\|P_{\theta^{(j)}})
\le
\int
\frac{(k_\theta-k_{\theta^{(j)}})^2}{k_{\theta^{(j)}}}
\,d\mu\,d\lambda
\le
m^{-1}\omega^2\norm{b_{j,h}}_2^2.
\]
By tensorization,
\[
\KL(P_\theta^n\|P_{\theta^{(j)}}^n)
\le
n m^{-1}\omega^2\norm{b}_2^2 h^{2\beta+2d}.
\]
Choose
\[
        h_n:=c_h n^{-1/(2\beta+2d)}
\]
with $c_h>0$ sufficiently small so that the preceding KL divergence is bounded
by a sufficiently small absolute constant.  By Pinsker's inequality, neighboring
experiments have total variation distance bounded away from one.  Assouad's
lemma then gives
\[
\inf_{\widehat k}
\sup_{\theta\in\{0,1\}^{J_{h_n}}}
\E_\theta
\norm{\widehat k-k_\theta}_{2,\mu\lambda}^2
\ge
c\,\omega^2 J_{h_n} h_n^{2\beta+2d}.
\]
Since $J_{h_n}\ge c h_n^{-2d}$,
\[
        J_{h_n}h_n^{2\beta+2d}
        \ge
        c h_n^{2\beta}.
\]
Therefore
\[
\inf_{\widehat k}
\sup_{\theta\in\{0,1\}^{J_{h_n}}}
\E_\theta
\norm{\widehat k-k_\theta}_{2,\mu\lambda}^2
\ge
c h_n^{2\beta}
=
c n^{-2\beta/(2\beta+2d)}.
\]
Since the finite family $\{k_\theta:\theta\in\{0,1\}^{J_{h_n}}\}$ is contained
in $\K_\beta(R,m,M)$, it follows that
\[
\inf_{\widehat k}
\sup_{K\in\K_\beta(R,m,M)}
\E_K
\norm{\widehat k-k}_{2,\mu\lambda}^2
\ge
c n^{-2\beta/(2\beta+2d)}.
\]
Finally, allowing the estimator to use independently generated reference
variables does not improve the minimax risk: conditioning such an estimator on
the observed transition pairs and applying Jensen's inequality gives an
estimator based only on the transition pairs with no larger risk.  Hence the
same lower bound holds in the augmented experiment.

The upper and lower bounds together show that the squared
$L^2(\mu\otimes\lambda)$ rate is minimax near-optimal up to logarithmic
factors.  The squared TV upper bound follows from the same estimator through
Theorem~\ref{thm:markovization}, and no separate TV minimax lower bound is
claimed.

\subsection{Proof of Theorem~\ref{thm:finite-horizon-transfer-main}}\label{app:dynamics}
This subsection proves the finite-horizon dynamic transfer result used in
Section~\ref{sec:dynamics}.  The proof is a standard occupancy-weighted perturbation
argument, followed by the calibration and Markovization bounds from the main text.

Before presenting the complete proof of Theorem~\ref{thm:finite-horizon-transfer-main}, we first introduce the following lemma.

\begin{lemma}[Occupancy-weighted finite-horizon perturbation]
\label{thm:occupancy}
Let $K$ and $L$ be Markov kernels, let $\xi$ be an initial law, and set
\[
        e_{K,L}(x):=\TV(K(x,\cdot),L(x,\cdot)).
\]
For every integer $m\ge1$,
\[
        \TV(\xi K^m,\xi L^m)
        \le \sum_{s=0}^{m-1}\int e_{K,L}(x)\,(\xi K^s)(\dd x).
\]
The same right-hand side also bounds the total variation distance between the
length-$m$ path laws generated from the common initial law $\xi$ and kernels $K$
and $L$.
\end{lemma}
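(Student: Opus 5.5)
The marginal bound follows from a telescoping decomposition. For $m\ge1$ we write
\[
        \xi K^m-\xi L^m
        =\sum_{s=0}^{m-1}\paren{\xi K^{s+1}L^{m-s-1}-\xi K^{s}L^{m-s}}
        =\sum_{s=0}^{m-1}(\xi K^s)(K-L)L^{m-s-1},
\]
which can be checked by expanding: consecutive terms cancel, leaving $\xi K^m L^0-\xi K^0L^m$. We then use two elementary facts about signed measures. First, a Markov kernel does not increase total variation: $\TV(\alpha L,\beta L)\le\TV(\alpha,\beta)$ for probability measures $\alpha,\beta$, because $\sup_B\abs{\int L(x,B)(\alpha-\beta)(\dd x)}\le\sup_{0\le f\le1}\abs{\int f\,\dd(\alpha-\beta)}=\TV(\alpha,\beta)$. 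Second, for a probability measure $\rho$ we have $\TV(\rho K,\rho L)\le\int e_{K,L}(x)\,\rho(\dd x)$, since $\abs{\rho K(B)-\rho L(B)}\le\int\abs{K(x,B)-L(x,B)}\rho(\dd x)\le\int e_{K,L}\,\dd\rho$ for each $B$. The integral is well defined because $e_{K,L}$ is measurable by Lemma~\ref{lem:tv-measurability}.

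Applying these two facts with $\rho=\xi K^s$ to the $s$-th telescoping term gives
\[
        \sup_B\abs{\bracks{(\xi K^s)(K-L)L^{m-s-1}}(B)}\le\int e_{K,L}\,\dd(\xi K^s).
\]
Summing over $s$ with the triangle inequality for $\sup_B\abs{\cdot}$ yields the marginal bound.

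For path laws we use a coupling argument. Let $P_m^K$ denote the law of $(X_0,\dots,X_m)$ when $X_0\sim\xi$ and the chain evolves under $K$. We telescope over hybrid path laws $Q_s$ in which the first $s$ transitions use $K$ and the remaining $m-s$ use $L$. Then $Q_m=P_m^K$ and $Q_0=P_m^L$. The laws $Q_{s+1}$ and $Q_s$ agree on the first $s$ coordinates, whose law is the $K$-path law with time-$s$ marginal $\xi K^s$. Given $X_s=x$, they differ only in the next step, where one uses $K(x,\cdot)$ and the other $L(x,\cdot)$, and they share the same continuation kernel afterwards. Hence $\TV(Q_{s+1},Q_s)\le\int e_{K,L}\,\dd(\xi K^s)$. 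This holds because, conditionally on the past, the two one-step laws can be maximally coupled, succeeding with probability $1-e_{K,L}(x)$, and the rest of the path can then be run identically. Alternatively, one can write $(Q_{s+1}-Q_s)(F)=\int\!\!\int(K-L)(x_s,\dd x_{s+1})\,h(x_{0:s},x_{s+1})\,\dots$ with $0\le h\le1$ and bound it directly. Summing over $s$ with the triangle inequality gives the path-law bound.

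The main technical point is measurability of the maximal coupling. Since the state space is standard Borel and the kernels are measurable, the maximal coupling can be chosen measurably in $x$, for instance via the densities of $K(x,\cdot)$ and $L(x,\cdot)$ with respect to $K(x,\cdot)+L(x,\cdot)$. To avoid this issue altogether, I would use the direct functional bound: for measurable $0\le F\le1$ on paths, conditioning on $x_{0:s}$ leaves an integral of a $[0,1]$-valued function of $x_{s+1}$ against $K(x_s,\cdot)-L(x_s,\cdot)$, which is at most $e_{K,L}(x_s)$ in absolute value. Everything else in the argument is routine.
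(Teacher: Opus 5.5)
Your proposal is correct and follows the paper's proof. The paper's recursion $\TV(\xi K^m,\xi L^m)\le \int e_{K,L}\,\dd(\xi K^{m-1})+\TV(\xi K^{m-1},\xi L^{m-1})$ (triangle inequality plus TV contraction under $L$) is your telescoping sum unrolled one step at a time, and for path laws the paper uses exactly your hybrid laws $Q_j$ with the same one-step replacement and contraction bound. Your conditioning argument with a $[0,1]$-valued continuation function is a clean way to make rigorous the one-step replacement that the paper only asserts; one small slip is that $Q_{s+1}$ and $Q_s$ agree on the $s+1$ coordinates $X_0,\dots,X_s$, not on $s$ coordinates.
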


\begin{proof}
For the marginal law, write
\[
 \TV(\xi K^m,\xi L^m)
 \le
 \TV(\xi K^m,\xi K^{m-1}L)
 +\TV(\xi K^{m-1}L,\xi L^m).
\]
The first term is bounded by
\[
        \int e_{K,L}(x)\,(\xi K^{m-1})(\dd x),
\]
and the second is at most $\TV(\xi K^{m-1},\xi L^{m-1})$ by TV contraction under
Markov kernels.  Iterating this recursion from $m$ down to $1$ gives the displayed
bound.

For path laws, let $P_K^{(m)}$ and $P_L^{(m)}$ denote the laws of
$(X_0,\ldots,X_m)$ generated from $\xi$ under $K$ and $L$.  Let $Q_j$ be the path
law using $K$ for the first $j$ transitions and $L$ for the remaining $m-j$
transitions, so that $Q_0=P_L^{(m)}$ and $Q_m=P_K^{(m)}$.  The neighboring laws
$Q_j$ and $Q_{j+1}$ agree up to time $j$, where the state has law $\xi K^j$.
Replacing the next conditional kernel from $L$ to $K$ changes the joint law by at
most
\[
        \int e_{K,L}(x)\,(\xi K^j)(\dd x),
\]
and appending the same remaining $L$-transitions is a TV contraction.  Summing over
$j=0,\ldots,m-1$ proves the path-law bound.
\end{proof}

\noindent We now present the complete proof of Theorem~\ref{thm:finite-horizon-transfer-main}.

\noindent\textit{\underline{Part (i).}}
Apply Lemma~\ref{thm:occupancy} with $K=K_0$ and $L=\widehat K$.  For
\[
        e(x):=\TV(K_0(x,\cdot),\widehat K(x,\cdot)),
\]
we obtain
\[
  \TV(\xi K_0^m,\xi\widehat K^m)
  \le \sum_{s=0}^{m-1}\int e(x)\,(\xi K_0^s)(\dd x).
\]
If $f_s=\dd(\xi K_0^s)/\dd\mu$ and
$C_s=\esssup_\mu f_s$, then
\[
  \int e(x)\,(\xi K_0^s)(\dd x)
  =
  \int e(x)f_s(x)\,\mu(\dd x)
  \le C_s\int e(x)\,\mu(\dd x).
\]
Since $\int e\,d\mu=d_{\mu,\TV}(K_0,\widehat K)$, summing over
$s=0,\ldots,m-1$ gives the $L^1$-coverage bound
\[
  \TV(\xi K_0^m,\xi\widehat K^m)
  \le
  \paren{\sum_{s=0}^{m-1}C_s}d_{\mu,\TV}(K_0,\widehat K).
\]

\noindent\textit{\underline{Part (ii).}}
For the $L^2$ version, $0\le f_s\le C_s$ and $\int f_s\,d\mu=1$ imply
\[
        \|f_s\|_{L^2(\mu)}^2\le C_s.
\]
Thus Cauchy--Schwarz gives
\[
  \int e(x)\,(\xi K_0^s)(\dd x)
  \le \sqrt{C_s}\,\|e\|_{L^2(\mu)}
  \le \sqrt{C_{\xi,\mu,m}}\,\|e\|_{L^2(\mu)},
\]
where $C_{\xi,\mu,m}:=\max_{0\le s<m}C_s$.  Summing over $s$ yields
\[
  \TV(\xi K_0^m,\xi\widehat K^m)
  \le
  m\sqrt{C_{\xi,\mu,m}}\,\|e\|_{L^2(\mu)}.
\]
The same argument applied to the path-law part of Lemma~\ref{thm:occupancy}
gives the identical bound for length-$m$ path laws.

\noindent\textit{\underline{Part (iii).}}
For normalized occupation measures,
\[
  \bar\Gamma_m^K:=\frac1m\sum_{t=0}^{m-1}\xi K^t,
\]
the triangle inequality and the preceding marginal bound give
\begin{align*}
  \TV(\bar\Gamma_m^{K_0},\bar\Gamma_m^{\widehat K})
  &\le
  \frac1m\sum_{t=0}^{m-1}\TV(\xi K_0^t,\xi\widehat K^t)\\
  &\le
  \frac1m\sum_{t=0}^{m-1}
       t\sqrt{C_{\xi,\mu,m}}\,\|e\|_{L^2(\mu)}\\
  &=
  \frac{m-1}{2}\sqrt{C_{\xi,\mu,m}}\,\|e\|_{L^2(\mu)}.
\end{align*}

It remains to connect $\|e\|_{L^2(\mu)}$ to contrastive excess risk.  Let
\[
        \widetilde k_{\widehat a}
        :=\frac{\widehat a-\eps r}{1-\eps},
        \qquad
        \widehat k:=\mathfrak M\widetilde k_{\widehat a}.
\]
By Theorem~\ref{thm:markovization}, applied pointwise in $x$ with
$g=\widetilde k_{\widehat a}$ and $k=k_0$,
\[
  e(x)
  =
  \frac12\int |\widehat k(y\mid x)-k_0(y\mid x)|\,\lambda(\dd y)
  \le
  \int |\widetilde k_{\widehat a}(y\mid x)-k_0(y\mid x)|\,\lambda(\dd y).
\]
Because $\lambda$ is a probability measure, Cauchy--Schwarz in $y$ gives
\[
        e(x)^2
        \le
        \int |\widetilde k_{\widehat a}(y\mid x)-k_0(y\mid x)|^2\,\lambda(\dd y).
\]
After integrating in $x$,
\[
        \|e\|_{L^2(\mu)}
        \le
        \|\widetilde k_{\widehat a}-k_0\|_{L^2(\mu\otimes\lambda)}.
\]
The calibration theorem gives
\[
        \|\widetilde k_{\widehat a}-k_0\|_{L^2(\mu\otimes\lambda)}^2
        \le
        \frac{1}{(1-\eps)^2c_-}\Excess(\widehat a).
\]
Combining the last two displays with the $L^2$-coverage bounds proves
\[
  \TV(\xi K_0^m,\xi\widehat K^m)
  \le
  \frac{m\sqrt{C_{\xi,\mu,m}}}{(1-\eps)\sqrt{c_-}}\,
  \Excess(\widehat a)^{1/2},
\]
and the corresponding normalized-occupation bound with $m$ replaced by
$(m-1)/2$.

\begin{corollary}[Invariant law perturbation under contraction]
\label{cor:stationary}
Let $K$ and $L$ have invariant laws $\pi_K$ and $\pi_L$.  If
\[
        \alpha(K):=\sup_{x,x'}\TV(K(x,\cdot),K(x',\cdot))<1,
\]
then
\[
        \TV(\pi_K,\pi_L)
        \le
        \frac{d_{\infty,\TV}(K,L)}{1-\alpha(K)},
        \qquad
        d_{\infty,\TV}(K,L):=\sup_x\TV(K(x,\cdot),L(x,\cdot)).
\]
\end{corollary}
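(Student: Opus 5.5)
The plan is the standard Dobrushin-contraction argument: use invariance to write the difference of invariant laws as a kernel applied to that difference plus a one-step perturbation term, then absorb the contraction term. Since $\pi_K=\pi_KK$ and $\pi_L=\pi_LL$, we have
\[
\pi_K-\pi_L=\pi_KK-\pi_LL=(\pi_K-\pi_L)K+\pi_L(K-L).
\]
Taking total variation on both sides and using the triangle inequality gives two terms, which I bound separately.

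\textbf{Contraction term.} By the Dobrushin contraction inequality, for probability measures $P,Q$ one has $\TV(PK,QK)\le\alpha(K)\TV(P,Q)$, so
\[
\TV\paren{\pi_KK,\pi_LK}\le\alpha(K)\TV(\pi_K,\pi_L).
\]
To prove this I use the Jordan decomposition $P-Q=c(\rho_1-\rho_2)$, where $c=\TV(P,Q)$ and $\rho_1,\rho_2$ are probability measures with disjoint supports. Then $(P-Q)K(B)=c\iint\bracks{K(x,B)-K(x',B)}\rho_1(\dd x)\rho_2(\dd x')$. Each integrand is bounded in absolute value by $\alpha(K)$, and taking the supremum over $B$ gives the claim. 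One subtle point is that the supremum over $B$ must be taken outside the double integral, but the inequality survives because the bound holds for each fixed $B$.

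\textbf{Perturbation term.} For every $B$ we have $\abs{\pi_LK(B)-\pi_LL(B)}\le\int\abs{K(x,B)-L(x,B)}\pi_L(\dd x)\le d_{\infty,\TV}(K,L)$, which gives
\[
\TV\paren{\pi_LK,\pi_LL}\le d_{\infty,\TV}(K,L).
\]
The map $x\mapsto\TV(K(x,\cdot),L(x,\cdot))$ is measurable by Lemma~\ref{lem:tv-measurability}, although here we only need the supremum bound for each fixed set.

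\textbf{Combining.} The two bounds give $\TV(\pi_K,\pi_L)\le\alpha(K)\TV(\pi_K,\pi_L)+d_{\infty,\TV}(K,L)$. Since $\TV(\pi_K,\pi_L)\le1$ is finite and $\alpha(K)<1$, rearranging yields the stated bound. The only real work is the contraction lemma; the rest is bookkeeping.
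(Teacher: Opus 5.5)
Your proof is correct and follows the paper's argument: invariance plus the triangle inequality, Dobrushin contraction $\TV(\pi_KK,\pi_LK)\le\alpha(K)\TV(\pi_K,\pi_L)$, the one-step bound $\TV(\pi_LK,\pi_LL)\le d_{\infty,\TV}(K,L)$, and rearrangement using $\alpha(K)<1$. The differences are cosmetic: you prove the contraction inequality via the Jordan decomposition where the paper cites it, and you bound the perturbation term setwise where the paper integrates $\TV(K(x,\cdot),L(x,\cdot))$ against $\pi_L$, so your version does not need measurability of $x\mapsto\TV(K(x,\cdot),L(x,\cdot))$.
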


\begin{proof}
By invariance of $\pi_K$ and $\pi_L$,
\[
        \pi_K=\pi_KK,
        \qquad
        \pi_L=\pi_LL.
\]
Therefore, by the triangle inequality,
\[
\begin{aligned}
 \TV(\pi_K,\pi_L)
 &= \TV(\pi_KK,\pi_LL)\\
 &\le
 \TV(\pi_KK,\pi_LK)
 +
 \TV(\pi_LK,\pi_LL).
\end{aligned}
\]
The first term is controlled by the Dobrushin contraction coefficient:
\[
        \TV(\pi_KK,\pi_LK)
        \le
        \alpha(K)\TV(\pi_K,\pi_L).
\]
For the second term, use convexity of total variation under mixtures:
\[
\begin{aligned}
 \TV(\pi_LK,\pi_LL)
 &=
 \TV\left(
      \int K(x,\cdot)\,\pi_L(\dd x),
      \int L(x,\cdot)\,\pi_L(\dd x)
    \right)\\
 &\le
 \int \TV(K(x,\cdot),L(x,\cdot))\,\pi_L(\dd x)\\
 &\le
 d_{\infty,\TV}(K,L).
\end{aligned}
\]
Combining the two bounds gives
\[
        \TV(\pi_K,\pi_L)
        \le
        \alpha(K)\TV(\pi_K,\pi_L)
        +
        d_{\infty,\TV}(K,L).
\]
Since $\alpha(K)<1$, moving the first term to the left-hand side yields
\[
        (1-\alpha(K))\TV(\pi_K,\pi_L)
        \le
        d_{\infty,\TV}(K,L).
\]
Dividing by $1-\alpha(K)$ proves the claim.
\end{proof}

\subsection{Proof of Theorem~\ref{thm:trajectory-effective-sample}}
\label{app:trajectory-effective-sample}

For two sigma-fields $\mathcal G$ and $\mathcal H$, define
\[
 \beta(\mathcal G,\mathcal H)
 :=
 \frac12
 \sup
 \sum_{i,j}
 \left|
   \Prob(G_i\cap H_j)-\Prob(G_i)\Prob(H_j)
 \right|,
\]
where the supremum is over all pairs of finite measurable partitions $(G_i)_i$ and
$(H_j)_j$ of the underlying sample space, with $G_i\in\mathcal G$ and
$H_j\in\mathcal H$.  With this normalization, Berbee's coupling lemma gives a
mismatch probability bounded by the corresponding absolute-regularity coefficient.

Write $M=\lfloor N/q\rfloor$ and define the retained augmented blocks
\[
 B_j^{(q)}
 :=
 \bigl(X_{jq-1},X_{jq},\widetilde Y_j,Z_{j1},\ldots,Z_{j\tau}\bigr),
 \qquad j=1,\ldots,M.
\]
The auxiliary variables are independent across retained transitions and independent of
the trajectory.  Appending such variables does not increase the absolute-regularity
coefficient of the retained sequence.  The $j$th retained transition ends at time
$jq$, while the next retained transition begins at time $(j+1)q-1$, so consecutive
retained blocks are separated by $(j+1)q-1-jq=q-1$ time steps.

By Berbee's coupling lemma for absolutely regular sequences
\citep{berbee1979random,doukhan1994mixing}, there exist independent blocks
$B_1^\star,\ldots,B_M^\star$, each $B_j^\star$ having the same marginal law as
$B_j^{(q)}$, such that
\[
 \Prob\left[
   (B_1^{(q)},\ldots,B_M^{(q)})
   \neq
   (B_1^\star,\ldots,B_M^\star)
 \right]
 \le
 (M-1)\beta_X(q-1).
\]
The assumed thinning condition gives
\[
        (M-1)\beta_X(q-1)\le \eta/2.
\]
By stationarity,
\[
        (X_{jq-1},X_{jq})\sim \mu(\dd x)K_0(x,\dd y),
\]
and the auxiliary variables have exactly the same conditional distribution as in the
independent blocks used in Theorem~\ref{thm:oracle}.  Hence
$B_1^\star,\ldots,B_M^\star$ are i.i.d. copies of that block distribution.

On the coupling event, the thinned trajectory empirical risk and the coupled
independent empirical risk coincide pointwise over $\A_M$.  Hence
$\widehat a_{M,q}^{\mathrm{tr}}$, which is a measurable
$\delta_{\mathrm{opt}}$-approximate minimizer of the thinned trajectory risk, is also
a $\delta_{\mathrm{opt}}$-approximate minimizer of the coupled independent empirical
risk on that event.  The proof of Theorem~\ref{thm:oracle}(iii) constructs a
concentration event, depending only on the independent sample blocks, on which the
deterministic oracle argument applies simultaneously to every measurable
$\delta_{\mathrm{opt}}$-approximate empirical-risk minimizer over $\A_M$.  Apply this
uniform concentration event to the coupled independent blocks with sample size $M$,
candidate class $\A_M$, and confidence level $\eta/2$.  A union bound with the
coupling failure event yields, with probability at least $1-\eta$,
\[
 \Excess(\widehat a_{M,q}^{\mathrm{tr}})
 \le
 C_{\mathrm{or}}\,
 \mathfrak R_{M,q}(\eta,\delta),
\]
after absorbing the numerical factor $5/3$ and the constant
$C_{\mathrm{fast}}$ from Theorem~\ref{thm:oracle}(iii) into
$C_{\mathrm{or}}$.

The calibration consequence gives
\[
 \norm{\widetilde k_{M,q}^{\mathrm{tr}}-k_0}_{2,\mu\lambda}^2
 \le
 \frac{1}{(1-\eps)^2c_-}\,
 \Excess(\widehat a_{M,q}^{\mathrm{tr}})
 \le
 C\,\mathfrak R_{M,q}(\eta,\delta).
\]
By the definition of integrated total variation, Theorem~\ref{thm:markovization},
and Cauchy--Schwarz under the probability measure $\mu\otimes\lambda$,
\[
\begin{aligned}
d_{\mu,\TV}(\widehat K_{M,q}^{\mathrm{tr}},K_0)
&=
\frac12
\norm{\mathfrak M\widetilde k_{M,q}^{\mathrm{tr}}-k_0}_{1,\mu\lambda}
\\
&\le
\norm{\widetilde k_{M,q}^{\mathrm{tr}}-k_0}_{1,\mu\lambda}
\\
&\le
\norm{\widetilde k_{M,q}^{\mathrm{tr}}-k_0}_{2,\mu\lambda}.
\end{aligned}
\]
Squaring gives
\[
 d_{\mu,\TV}(\widehat K_{M,q}^{\mathrm{tr}},K_0)^2
 \le
 C_{\mathrm{ker}}\mathfrak R_{M,q}(\eta,\delta).
\]
The marginal, path-law, and normalized occupation-measure bounds then follow from
Theorem~\ref{thm:finite-horizon-transfer-main}, with constants absorbed into
$C_{\mathrm{dyn}}$.

\subsection{Proof of Corollary~\ref{cor:geometric-trajectory}}
\label{app:geometric-trajectory}

By the definition of $q_{N,\eta}$, we have $q_{N,\eta}\ge2$ and
\[
        B\rho^{q_{N,\eta}-1}
        \le
        \frac{\eta}{2N}.
\]
Since $M=N_{\mathrm{eff}}\le N$, it follows that
\[
        (M-1)\beta_X(q_{N,\eta}-1)
        \le
        NB\rho^{q_{N,\eta}-1}
        \le
        \eta/2.
\]
Thus the thinning condition in Theorem~\ref{thm:trajectory-effective-sample} holds
with $q=q_{N,\eta}$ and $M=N_{\mathrm{eff}}$, and all conclusions of that theorem
apply.  The displayed order of $N_{\mathrm{eff}}$ follows directly from
$q_{N,\eta}=O(\log(N/\eta))$.

\subsection{Proof of Proposition~\ref{prop:limitation}}
\label{subsec:lim}

Let
\[
 K_\rho=
 \begin{pmatrix}
   1/2 & 1/2\\
   1/2 & 1/2
 \end{pmatrix},
 \qquad
 L_\rho=
 \begin{pmatrix}
   1/2 & 1/2\\
   1/10 & 9/10
 \end{pmatrix}.
\]
Both kernels are irreducible.  Their first rows agree, while the total variation
distance between their second rows is $2/5$.  Put
\[
        s_\rho:=\min\{(5/2)\rho,1\},
        \qquad
        \mu_\rho=(1-s_\rho)\delta_0+s_\rho\delta_1.
\]
Then
\[
        d_{\mu_\rho,\TV}(K_\rho,L_\rho)
        =\frac25 s_\rho\le\rho.
\]
The invariant law of $K_\rho$ is $(1/2,1/2)$.  Solving
$\pi_{L_\rho}=\pi_{L_\rho}L_\rho$ gives
\[
        \pi_{L_\rho}(1)=\frac{1/2}{1/2+1/10}=\frac56.
\]
Therefore
\[
 \TV(\pi_{K_\rho},\pi_{L_\rho})
   =\abs{\pi_{L_\rho}(1)-1/2}
   =\frac13
   \ge \frac14.
\]


\bibliography{doeblin_anchored_transition_kernels}

@article{li2026dannce,
  author = {Li, Chenghao and Lin, Yuanyuan},
  title = {A Data-Augmented Contrastive Learning Approach to Nonparametric Density Estimation},
  journal = {Journal of Machine Learning Research},
  volume = {27},
  number = {10},
  pages = {1--47},
  year = {2026},
  url = {https://www.jmlr.org/papers/v27/25-0376.html}
}

@article{gutmann2012nce,
  author = {Gutmann, Michael U. and Hyv{\"a}rinen, Aapo},
  title = {Noise-Contrastive Estimation of Unnormalized Statistical Models, with Applications to Natural Image Statistics},
  journal = {Journal of Machine Learning Research},
  volume = {13},
  number = {11},
  pages = {307--361},
  year = {2012},
  url = {https://www.jmlr.org/papers/v13/gutmann12a.html}
}

@inproceedings{ceylan2018conditional,
  author = {Ceylan, Ciwan and Gutmann, Michael U.},
  title = {Conditional Noise-Contrastive Estimation of Unnormalised Models},
  booktitle = {Proceedings of the 35th International Conference on Machine Learning},
  series = {Proceedings of Machine Learning Research},
  volume = {80},
  pages = {726--734},
  year = {2018},
  publisher = {PMLR},
  url = {https://proceedings.mlr.press/v80/ceylan18a.html}
}

@book{anthony1999neural,
  author = {Anthony, Martin and Bartlett, Peter L.},
  title = {Neural Network Learning: Theoretical Foundations},
  publisher = {Cambridge University Press},
  address = {Cambridge},
  year = {1999},
  doi = {10.1017/CBO9780511624216}
}

@article{bartlett2019nearly,
  author = {Bartlett, Peter L. and Harvey, Nick and Liaw, Christopher and Mehrabian, Abbas},
  title = {Nearly-Tight {VC}-Dimension and Pseudodimension Bounds for Piecewise Linear Neural Networks},
  journal = {Journal of Machine Learning Research},
  volume = {20},
  number = {63},
  pages = {1--17},
  year = {2019},
  url = {https://www.jmlr.org/papers/v20/17-612.html}
}

@article{yarotsky2017relu,
  author = {Yarotsky, Dmitry},
  title = {Error Bounds for Approximations with Deep {ReLU} Networks},
  journal = {Neural Networks},
  volume = {94},
  pages = {103--114},
  year = {2017},
  doi = {10.1016/j.neunet.2017.07.002}
}

@article{schmidthieber2020deep,
  author = {Schmidt-Hieber, Johannes},
  title = {Nonparametric Regression Using Deep Neural Networks with {ReLU} Activation Function},
  journal = {The Annals of Statistics},
  volume = {48},
  number = {4},
  pages = {1875--1897},
  year = {2020},
  doi = {10.1214/19-AOS1875}
}

@article{mitrophanov2005sensitivity,
  author = {Mitrophanov, A. Yu.},
  title = {Sensitivity and Convergence of Uniformly Ergodic Markov Chains},
  journal = {Journal of Applied Probability},
  volume = {42},
  number = {4},
  pages = {1003--1014},
  year = {2005},
  doi = {10.1239/jap/1134587812}
}

@book{meyn2009markov,
  author = {Meyn, Sean P. and Tweedie, Richard L.},
  title = {Markov Chains and Stochastic Stability},
  edition = {2},
  publisher = {Cambridge University Press},
  address = {Cambridge},
  year = {2009}
}

@book{vershynin2018high,
  author = {Vershynin, Roman},
  title = {High-Dimensional Probability: An Introduction with Applications in Data Science},
  publisher = {Cambridge University Press},
  address = {Cambridge},
  year = {2018},
  doi = {10.1017/9781108231596}
}

@article{stone1982rates,
  author = {Stone, Charles J.},
  title = {Optimal Global Rates of Convergence for Nonparametric Regression},
  journal = {The Annals of Statistics},
  volume = {10},
  number = {4},
  pages = {1040--1053},
  year = {1982},
  doi = {10.1214/aos/1176345969}
}

@book{boucheron2013concentration,
  author = {Boucheron, St{\'e}phane and Lugosi, G{\'a}bor and Massart, Pascal},
  title = {Concentration Inequalities: A Nonasymptotic Theory of Independence},
  publisher = {Oxford University Press},
  address = {Oxford},
  year = {2013},
  doi = {10.1093/acprof:oso/9780199535255.001.0001}
}

@article{dobrushin1956central,
  title={Central limit theorem for nonstationary Markov chains. I},
  author={Dobrushin, Roland L},
  journal={Theory of Probability \& Its Applications},
  volume={1},
  number={1},
  pages={65--80},
  year={1956},
  publisher={SIAM}
}

@book{sugiyama2012density,
  author = {Sugiyama, Masashi and Suzuki, Taiji and Kanamori, Takafumi},
  title = {Density Ratio Estimation in Machine Learning},
  publisher = {Cambridge University Press},
  address = {Cambridge},
  year = {2012},
  doi = {10.1017/CBO9781139035613}
}

@article{hall1999methods,
  author = {Hall, Peter and Wolff, Rodney C. L. and Yao, Qiwei},
  title = {Methods for Estimating a Conditional Distribution Function},
  journal = {Journal of the American Statistical Association},
  volume = {94},
  number = {445},
  pages = {154--163},
  year = {1999},
  doi = {10.1080/01621459.1999.10473832}
}

@article{hyndman2002nonparametric,
  author = {Hyndman, Rob J. and Yao, Qiwei},
  title = {Nonparametric Estimation and Symmetry Tests for Conditional Density Functions},
  journal = {Journal of Nonparametric Statistics},
  volume = {14},
  number = {3},
  pages = {259--278},
  year = {2002},
  doi = {10.1080/10485250212374}
}

@article{hall2004nonparametric,
  author = {Hall, Peter and Racine, Jeffrey S. and Li, Qi},
  title = {Cross-Validation and the Estimation of Conditional Probability Densities},
  journal = {Journal of the American Statistical Association},
  volume = {99},
  number = {468},
  pages = {1015--1026},
  year = {2004},
  doi = {10.1198/016214504000000548}
}

@article{hall2005conditional,
  author = {Hall, Peter and Yao, Qiwei},
  title = {Approximating Conditional Distribution Functions Using Dimension Reduction},
  journal = {The Annals of Statistics},
  volume = {33},
  number = {3},
  pages = {1404--1421},
  year = {2005},
  doi = {10.1214/009053604000001282}
}

@article{aitsahalia2002maximum,
  author = {A{\"i}t-Sahalia, Yacine},
  title = {Maximum Likelihood Estimation of Discretely Sampled Diffusions: A Closed-Form Approximation Approach},
  journal = {Econometrica},
  volume = {70},
  number = {1},
  pages = {223--262},
  year = {2002},
  doi = {10.1111/1468-0262.00274}
}

@article{hansen1998spectral,
  author = {Hansen, Lars Peter and Scheinkman, Jos{\'e} A. and Touzi, Nizar},
  title = {Spectral Methods for Identifying Scalar Diffusions},
  journal = {Journal of Econometrics},
  volume = {86},
  number = {1},
  pages = {1--32},
  year = {1998},
  doi = {10.1016/S0304-4076(97)00107-3}
}

@article{chen1998sieve,
  author = {Chen, Xiaohong and Shen, Xiaotong},
  title = {Sieve Extremum Estimates for Weakly Dependent Data},
  journal = {Econometrica},
  volume = {66},
  number = {2},
  pages = {289--314},
  year = {1998},
  doi = {10.2307/2998559}
}

@incollection{chen2007large,
  author = {Chen, Xiaohong},
  title = {Large Sample Sieve Estimation of Semi-Nonparametric Models},
  booktitle = {Handbook of Econometrics},
  editor = {Heckman, James J. and Leamer, Edward E.},
  volume = {6B},
  chapter = {76},
  pages = {5549--5632},
  publisher = {Elsevier},
  year = {2007},
  doi = {10.1016/S1573-4412(07)06076-X}
}

@article{li2022minimax,
  title={Minimax optimal conditional density estimation under total variation smoothness},
  author={Li, Michael and Neykov, Matey and Balakrishnan, Sivaraman},
  journal={Electronic Journal of Statistics},
  volume={16},
  number={2},
  pages={3937--3972},
  year={2022},
  publisher={The Institute of Mathematical Statistics and the Bernoulli Society}
}

@techreport{bishop1994mixture,
  author = {Bishop, Christopher M.},
  title = {Mixture Density Networks},
  institution = {Aston University},
  type = {Neural Computing Research Group Report},
  number = {NCRG/94/004},
  year = {1994}
}

@article{trippe2018conditional,
  author = {Trippe, Brian L. and Turner, Richard E.},
  title = {Conditional Density Estimation with Bayesian Normalising Flows},
  journal = {arXiv preprint arXiv:1802.04908},
  year = {2018},
  doi = {10.48550/arXiv.1802.04908}
}

@article{izbicki2017flexcode,
    title={Converting high-dimensional regression to high-dimensional conditional density estimation},
    author={Izbicki, Rafael and B. Lee, Ann},
    journal={Electronic Journal of Statistics},
    volume={11},
    year={2017}
}

@article{gao2022lincde,
  author = {Gao, Zijun and Hastie, Trevor},
  title = {{LinCDE}: Conditional Density Estimation via {Lindsey's} Method},
  journal = {Journal of Machine Learning Research},
  volume = {23},
  number = {52},
  pages = {1--55},
  year = {2022},
  url = {https://www.jmlr.org/papers/v23/21-0840.html}
}

@article{nummelin1978splitting,
  author = {Nummelin, Eero},
  title = {A Splitting Technique for {Harris} Recurrent {Markov} Chains},
  journal = {Zeitschrift f{\"u}r Wahrscheinlichkeitstheorie und Verwandte Gebiete},
  volume = {43},
  number = {4},
  pages = {309--318},
  year = {1978},
  doi = {10.1007/BF00534764}
}

@book{kartashov1996strong,
  author = {Kartashov, Mykola V.},
  title = {Strong Stable {Markov} Chains},
  publisher = {VSP},
  address = {Utrecht},
  year = {1996}
}

@article{ha2018world,
  author = {Ha, David and Schmidhuber, J{\"u}rgen},
  title = {World Models},
  journal = {arXiv preprint arXiv:1803.10122},
  year = {2018},
  doi = {10.48550/arXiv.1803.10122}
}

@inproceedings{hafner2020dream,
  author = {Hafner, Danijar and Lillicrap, Timothy and Ba, Jimmy and Norouzi, Mohammad},
  title = {Dream to Control: Learning Behaviors by Latent Imagination},
  booktitle = {Proceedings of the International Conference on Learning Representations},
  year = {2020},
  url = {https://openreview.net/forum?id=S1lOTC4tDS}
}

@inproceedings{chua2018deep,
  title={Deep reinforcement learning in a handful of trials using probabilistic dynamics models},
  author={Chua, Kurtland and Calandra, Roberto and McAllister, Rowan and Levine, Sergey},
  booktitle={Advances in neural information processing systems},
  volume={31},
  year={2018}
}

@book{wainwright2019high,
  title={High-dimensional statistics: A non-asymptotic viewpoint},
  author={Wainwright, Martin J},
  volume={48},
  year={2019},
  publisher={Cambridge university press}
}

@inproceedings{talvitie2014model,
  title={Model Regularization for Stable Sample Rollouts.},
  author={Talvitie, Erik},
  booktitle={UAI},
  pages={780--789},
  year={2014}
}

@inproceedings{janner2019when,
  author = {Janner, Michael and Fu, Justin and Zhang, Marvin and Levine, Sergey},
  title = {When to Trust Your Model: Model-Based Policy Optimization},
  booktitle = {Advances in Neural Information Processing Systems},
  volume = {32},
  pages = {12498--12509},
  year = {2019},
  url = {https://papers.nips.cc/paper_files/paper/2019/hash/5faf461eff3099671ad63c6f3f094f7f-Abstract.html}
}

@article{bartlett2002rademacher,
  title={Rademacher and gaussian complexities: Risk bounds and structural results},
  author={Bartlett, Peter L and Mendelson, Shahar},
  journal={Journal of machine learning research},
  volume={3},
  number={Nov},
  pages={463--482},
  year={2002}
}

@article{bartlett2005local,
  author = {Bartlett, Peter L. and Bousquet, Olivier and Mendelson, Shahar},
  title = {Local {Rademacher} Complexities},
  journal = {The Annals of Statistics},
  volume = {33},
  number = {4},
  pages = {1497--1537},
  year = {2005},
  doi = {10.1214/009053605000000282}
}

@article{tsybakov2004optimal,
  author = {Tsybakov, Alexandre B.},
  title = {Optimal Aggregation of Classifiers in Statistical Learning},
  journal = {The Annals of Statistics},
  volume = {32},
  number = {1},
  pages = {135--166},
  year = {2004},
  doi = {10.1214/aos/1079120131}
}

@article{koltchinskii2006local,
  author = {Koltchinskii, Vladimir},
  title = {Local {Rademacher} Complexities and Oracle Inequalities in Risk Minimization},
  journal = {The Annals of Statistics},
  volume = {34},
  number = {6},
  pages = {2593--2656},
  year = {2006},
  doi = {10.1214/009053606000001019}
}

@article{lecue2012rates,
  title={Oracle inequalities for cross-validation type procedures},
  author={Lecu{\'e}, Guillaume and Mitchell, Charles},
  journal={Electronic Journal of Statistics},
  volume={6},
  pages={1803--1837},
 year={2012}
}

@book{doukhan1994mixing,
  author    = {Doukhan, Paul},
  title = {Mixing: Properties and Examples},
  series    = {Lecture Notes in Statistics},
  volume    = {85},
  publisher = {Springer},
  address   = {New York},
  year      = {1994},
  doi       = {10.1007/978-1-4612-2642-0}
}

@book{berbee1979random,
  author    = {Berbee, H. C. P.},
  title = {Random Walks with Stationary Increments and Renewal Theory},
  series    = {Mathematical Centre Tracts},
  volume    = {112},
  publisher = {Mathematisch Centrum},
  address   = {Amsterdam},
  year      = {1979}
}

\end{document}